\title{From Soft Classifiers to Hard Decisions: How fair can we be?\footnote{The order of authors is alphabetical and does not convey
  any information on the authors' relative contributions.}} % title could use some work
\author{Ran Canetti\thanks{Boston University and Tel Aviv University. Member of CPIIS. Supported by  NSF awards 1413920 \& 1801564, ISF award 1523/14.} \and Aloni Cohen\thanks{MIT. Supported by NSF award CNS-1413920.} \and Nishanth Dikkala\thanks{MIT. Supported by NSF award CCF-1617730, CCF-1650733, and ONR N00014-12-1-0999.} \and Govind Ramnarayan\thanks{MIT. Supported by NSF award CCF-1665252 and NSF award DMS-1737944.}  \and Sarah Scheffler\thanks{Boston University. Supported by the Clare Boothe Luce Graduate Research Fellowship and NSF award 1414119.}  
\and Adam Smith\thanks{Computer Science Department, Boston University.   Supported in part by NSF awards IIS-1447700 and AF-1763786 and a Sloan Foundation Research Award.  \texttt{ads22@bu.edu}.}}
\newcommand{\mathmode}[1]{\ensuremath{#1}\xspace}
\newcommand{\sfmode}[1]{\textsf{#1}\xspace}
\definecolor{darkgreen}{rgb}{0,0.2,0}
\definecolor{darkgray}{rgb}{0.2,0.2,0.2}
\definecolor{darkbrown}{rgb}{0.2,0,0}
\DeclareMathOperator*{\argmin}{arg\,min}
\newtheorem{theorem}{Theorem}[section]
\newtheorem{lemma}{Lemma}[section]
\newtheorem{fact}{Fact}[section]
\newtheorem{claim}{Claim}[section]
\newtheorem{proposition}{Proposition}[section]
\newtheorem{example}{Example}[section]
\theoremstyle{definition}
\newtheorem{definition}{Definition}[section]
\newcommand{\ty}{\mathmode{Y}}
\newcommand{\tF}{\mathmode{0}} % "true type: F"
\newcommand{\tT}{\mathmode{1}} % "true type: T"
\newcommand{\py}{\mathmode{\hat{y}}}
\newcommand{\pF}{\mathmode{0}} % "predicted type: F"
\newcommand{\pT}{\mathmode{1}} % "predicted type: T"
\newcommand{\pX}{\mathmode{\bot}}
\newcommand{\idk}{\pX}
\newcommand{\x}{\mathmode{X}}
\newcommand{\ps}{\mathmode{s}}
\newcommand{\punt}{\idk} % punting
\newcommand{\E}{\mathmode{\mathop{\mathbb{E}}}}
\newcommand{\G}{\mathmode{\mathcal{G}}}
\newcommand{\X}{\mathmode{\mathcal{X}}}
\newcommand{\BR}[1][]{\mathmode{\sfmode{BR}\ifx\relax#1\relax\else_{#1}\fi}}
\newcommand{\PDR}[1][]{\mathmode{\sfmode{PDR}\ifx\relax#1\relax\else_{#1}\fi}}
\newcommand{\NDR}[1][]{\mathmode{\sfmode{NDR}\ifx\relax#1\relax\else_{#1}\fi}}
\newcommand{\DR}[1][]{\mathmode{\sfmode{DR}\ifx\relax#1\relax\else_{#1}\fi}}
\newcommand{\PPV}[1][]{\mathmode{\sfmode{PPV}\ifx\relax#1\relax\else_{#1}\fi}}
\newcommand{\NPV}[1][]{\mathmode{\sfmode{NPV}\ifx\relax#1\relax\else_{#1}\fi}}
\newcommand{\FPR}[1][]{\mathmode{\sfmode{FPR}\ifx\relax#1\relax\else_{#1}\fi}}
\newcommand{\FNR}[1][]{\mathmode{\sfmode{FNR}\ifx\relax#1\relax\else_{#1}\fi}}
\newcommand{\TPR}[1][]{\mathmode{\sfmode{TPR}\ifx\relax#1\relax\else_{#1}\fi}}
\newcommand{\TNR}[1][]{\mathmode{\sfmode{TNR}\ifx\relax#1\relax\else_{#1}\fi}}
\newcommand{\GFPR}[1][]{\mathmode{\sfmode{GFPR}\ifx\relax#1\relax\else_{#1}\fi}} 
\newcommand{\GFNR}[1][]{\mathmode{\sfmode{GFNR}\ifx\relax#1\relax\else_{#1}\fi}}
\newcommand{\cFPR}[1][]{\mathmode{\sfmode{cFPR}\ifx\relax#1\relax\else_{#1}\fi}}
\newcommand{\cFNR}[1][]{\mathmode{\sfmode{cFNR}\ifx\relax#1\relax\else_{#1}\fi}}
\newcommand{\zo}{\mathmode{\{0,1\}}}
\renewcommand{\x}{\mathmode{X}}
\newcommand{\score}{\ps}
\newcommand{\hard}{\mathmode{\hat{Y}}}
\newcommand{\group}{\mathmode{G}}
\newcommand{\soft}{\mathmode{\hat{S}}}
\newcommand{\supp}{\mathsf{Supp}}
\newcommand{\Supp}{\supp}
\newcommand{\post}{\mathmode{\hat{D}}}
\newcommand{\postsoft}{\mathmode{\post^\mathsf{soft}}}
\newcommand{\decide}{\post}
\newcommand{\sam}{\mathmode{\sim}}
\newcommand{\pdf}{\mathmode{\hat{\mathcal{P}}}}
\newcommand{\DOCS}[1][]{\mathmode{\sfmode{DOCS}\ifx\relax#1\relax\else_{#1}\fi}}
\newcommand{\docs}[1][]{\mathmode{\sfmode{DOCS}\ifx\relax#1\relax\else_{#1}\fi}}
\newcommand{\thresh}{\mathmode{\tau}}
\newcommand{\threshRand}{\mathcal{R}}
\newcommand{\eps}{\mathmode{\epsilon}}
\newcommand{\puntprob}{Q}
\begin{document}

\maketitle

\begin{abstract}
A popular methodology for building binary decision-making classifiers in the presence of imperfect information is to first construct a calibrated non-binary ``scoring" classifier, and then to post-process this score to obtain a binary decision. We study various group fairness properties of this methodology,  when the  non-binary scores  are calibrated over all protected groups, and with a variety of post-processing algorithms. Specifically, we show:
\begin{itemize}
\item
There does not exist a general way to post-process a calibrated classifier to equalize protected groups' positive or negative predictive value (PPV or NPV). For certain ``nice" calibrated classifiers, either PPV or NPV can be equalized when the post-processor uses different thresholds across protected groups. Still, %there exist nice classifiers classifiers for which the two measures cannot be  both equalized. 
when the post-processing consists of a single global threshold across all groups, natural fairness properties, such as equalizing PPV in a nontrivial way, do not hold even for ``nice" classifiers.
\item
When the post-processing stage is allowed to \emph{defer} on some decisions (that is, to avoid making a decision by handing off some examples to a separate process), then  for the non-deferred decisions, the resulting classifier can be made to equalize  PPV, NPV,  false positive rate (FPR) and false negative rate (FNR) across the protected groups.  This suggests a way to partially evade the impossibility results of Chouldechova and Kleinberg et al., which preclude equalizing all of these measures simultaneously.  We also present different deferring strategies and show how they affect the fairness properties of the overall system.
\end{itemize}
We then evaluate our post-processing techniques using the COMPAS data set from 2016.

\end{abstract}

%%% Local Variables:
%%% mode: latex
%%% TeX-master: "main"
%%% End:

\newpage
{\small
	\tableofcontents
}
\newpage

 \section{Introduction}
The concept of {\em fairness}  is deeply ingrained in our psyche as a fundamental, essential ingredient of Human existence. Indeed the perception of fairness, broadly construed as accepting each others' equal right for well being, is arguably one of the most basic tenets of cooperative societies of individuals in general, and Human existence in particular.

However, as fundamental as this concept may be, it is also extremely elusive:  different societies have developed very different notions of fairness and equality among individuals, subject to different religious, ethical, and social beliefs; in particular, the intricate interplay between fairness and {\em justice,} which is yet another somewhat elusive concept, is often not well-defined and a matter of subjective interpretation.

The concept is further complicated by the fact that human decisions are often made with incomplete information and limited resources. These two factors must be taken into account  when evaluating whether decision-making processes  are ``fair.''
Indeed, these two aspects of the problem have become increasingly prominent as societies grow and decision processes become more complex and algorithmic.

One way that researchers are responding to these growing concerns is by attempting to formulate precise notions for {\em fairness} of decisions processes, e.g.  \cite{dwork,Kleinberg}. While these notions do not intend to capture the complexities of the ethical, socio-economic, or religious aspects of fairness, they do consider the fairness aspects of statistical decision-making processes with incomplete information. Essentially, these notions accept the fact that a decision process with incomplete information will inevitably make errors relative to the desired full-information notion (which is treated as a given), and  provide guidelines on how to ``distribute the errors fairly''  across individuals, or alternatively across groups of individuals. These definitions have proven to be  meaningful and eye opening;  in particular, it has been demonstrated that some very natural notions of ``fair distribution of errors''  are mutually inconsistent: No decision mechanism with incomplete information can satisfy all, except for in trivial cases \cite{Cho17,Kleinberg}.

Faced with this basic impossibility,  we would like to better understand the process of decision making with incomplete information, and use this understanding to propose ways to circumvent this impossibility.

Specifically, we concentrate on the task of post-processing a calibrated soft classifier under group fairness constraints. We suppose that individuals belong to one of two or more disjoint \emph{protected groups}. Our overall task is to decide whether a given individual has some hidden binary property $B$ in a way that ensures  ``fair balancing of errors'' across the groups.

For  that purpose, we consider the following two-stage mechanism. The first stage consists of constructing a
classifier $\soft$ 
that outputs for each individual $x$ a score $\score \in [0,1]$ that is related to the chance that $x$ has property $B$. The only requirement we make of $\soft$ is group-wise calibration: for both $g_1$ and $g_2$, and for each $\score\in[0,1]$, the fraction of individuals in the group that get score $\score$ and have the property, out of all individuals in the group that get score $\score$, is $\score$. The second stage takes as input the output $\score = \soft(x)$ of the first stage and the group to which $x$ belongs, and outputs a binary decision: its best guess at whether $x$ has property $B$.

An attractive aspect of this two-stage mechanism is that each stage can be viewed as aimed at a different goal: The first stage is aimed at gathering information and providing the best accuracy possible, with only minimal regard to fairness (i.e only group-wise calibration). The second stage is aimed to extract a  decision from the  information collected in the first stage, while making sure that the errors are distributed ``fairly.'' 

To further focus our study, we take the first stage as a given and concentrate on the second. That is, we consider the problem of \emph{post-processing} the scores given by the calibrated soft classifier $\soft$ into binary predictions.
A representative example is a judge making a bail decision based on a score provided by a software package.
Following \cite{Cho17,EqualOpp} we consider the following four performance measures for  the resulting binary classifier: the \emph{positive predictive value (PPV)}, namely the fraction of individuals that have the property among all individuals that the classifier predicted  to have the property; The \emph{false positive rate (FPR)}, namely  the fraction of individuals that were predicted to have the property among all individuals that don't have the property; The \emph{negative predictive value (NPV)} and \emph{false negative rate (FNR)}, which are defined analogously.
Ideally, we would like to equalize each one of the four measures across the groups, i.e. the measure will have the same value when restricted to samples from each group.  Unfortunately, however, we know that this is impossible in general \cite{Cho17,Kleinberg}.
This leads us to a broad question that motivates our work:

\begin{quote} 
  Under what conditions can we post-process a calibrated soft classifier's outputs so that the resulting hard
  classifier equates a subset of $\{\PPV, \NPV, \FNR, \FPR\}$ across a set of protected groups? How can we balance these conflicting  goals? 
\end{quote}

\paragraph{Results: Post-Processing With Thesholds.}
In a first set of results we consider the properties obtained by post-processing via  a  ``threshold'' mechanism.
Naively, a threshold post-processing mechanism would return 1  for individual $x$ whenever the calibrated score $\score(x)$ is above some fixed threshold, and return 0 otherwise.  We somewhat extend this mechanism by allowing the post-processor ``fine-tune'' its decision by  choosing the output probabilistically whenever the result of the soft classifier is exactly the threshold.

%Ran:  reworded a bit to emphasize the fact that single threshold is the most popular method.
We first observe that the popular and natural post-processing method of using a single threshold across all groups has an inherent deficiency:  No such mechanism can in general guarantee equality of either PPV or NPV across the protected groups. 
%We show that no post-processing mechanism that uses a single threshold across all groups can in general guarantee equality of either PPV or NPV across the protected groups. This indicates that the task of post-processing a calibrated soft classifier to obtain a ``fair'' binary classifier with any fairness property is non-trivial.

We then show that, when using different thresholds for the different groups, one can equalize \emph{either} PPV or NPV (but not both) across  the two groups, assuming the profile of $\soft$ has some non-degeneracy property.

The combination of the impossibility of single threshold and the possibility of per-group threshold also stands in contrast to the belief that a soft classifier that is calibrated across both groups allows ``ignoring'' group-membership information in any post-processing decision \cite{MP17}. Indeed,   the conversion to a binary decision ``loses information'' in different ways for the two groups, and so  group membership  becomes relevant again after post-processing.

\paragraph{Results: Adding deferrals.}
For the second set of results we consider post-processing strategies that do not always output a decision. Rather, with some probability the output is  $\idk$, or  ``I dont know", which means that the decision is deferred to another (hopefully higher quality, even if more expensive) process.  Let us first present our technical results and  then discuss potential interpretations and context.

The first strategy is a natural extension of the per-group threshold: we use \emph{two} thresholds per group, returning 1 above the right threshold, 0 below the left threshold, and $\idk$ between the thresholds. We show that there always exists a way to choose the  thresholds such that, conditioned
on the decision not being $\idk$, both the PPV and NPV are equal across groups.

Next we show a family of post-processing strategies  where, conditioned on the decision not being $\idk$, {\em all four quantities} (PPV, NPV, FPR, FNR) are equal across groups.

All strategies in this family have the following structure: Given an individual $x$,  the strategy first makes a randomized decision whether to defer on $x$, where the probability depends on $\soft(x)$ and the group membership of $x$.  If not deferred, then the decision is made via another post-processing technique.  

One method for determining the probabilities of  deferrals is to make sure that, the distribution of scores returned by the calibrated soft classifier  \emph{conditioned on not deferring,} is equal for the two groups (That is, let $p_{s,g}$ denote the  probability, restricted to group $g$, that an element gets score $s$ conditioned on not deferring. Then for any $s$,  we choose deferral probabilities so that  $p_{s,g_1}=p_{s,g_2}$.)   The resulting classifier can then be post-processed in \emph{any} group-blind way (say, via a single threshold mechanism as described above).

Of course, the fact that all four quantities are equalized conditioned on not deferring does not, in and of itself, provide any guarantees regarding the fairness properties of the overall decision process --- which includes also  the downstream decision mechanism. For one, it would be naive to simply assume that fairness ``composes'' \cite{DI18}. Furthermore, the impossibility of \cite{Cho17,Kleinberg} says that the overall decision-making process cannot possibly equalize  all four measures. 

However, in some cases one can provide alternative (non-statistical) justification for the fairness of the overall process: For instance, if the downstream decision process never errs, the overall process might  be considered ``procedurally fair.'' We present more detailed reflections on our deferral-based approach in Section~\ref{sec:discussion}.

We note that deferring was considered in machine learning in a number of contexts, including the context of fairness-preservation \cite{MPZ17}. In these works, the classifier typically punts only when its confidence regarding some decision is low. By contrast, we use deferrals in order  to ``equalize'' the probability mass functions of the soft classifier over the two groups, which may involve deferring on individuals for whom  there is higher confidence. Indeed, deferring on some higher-confidence individuals seems inherent to our goal of equalizing PPV, NPV, FPR, and FNR while keeping the deferral rate low.
Furthermore, our framework allows for a wide range of deferral strategies which might be used to promote additional goals. Pursuing alternate strategies for deferral is an interesting direction for future work.

\paragraph{Experimental results.}
We test our methodology on the Broward county dataset with COMPAS scores made public by ProPublica~\cite{angwin2016machine} in order to better understand its strengths and limitations. Indeed, it has been shown that the COMPAS scoring mechanism is an approximately calibrated soft classifier.
We first ran our two-threshold post-processing mechanism and obtained a binary decision algorithm which equalizes both PPV and NPV across Caucasians and African-Americans.

We then ran our post-processing mechanism with deferrals to equalize all four of PPV, NPV, FPR, FNR across the two groups, with three different methods for deciding how to defer:  In the first method, decisions are deferred only for Caucasians; in the second, decisions are deferred only for African Americans; in the third method, decisions are deferred for an equal fraction of Caucasians and of African Americans. This fraction is precisely equal to the statistical  (total variation) distance between the distributions of scores produced by the soft classifier on the two groups. More details about the results along with figures are given in Section~\ref{sec:experiments}.

\paragraph{Extensions and open problems.}
As just mentioned, a natural question is to find alternative ways for deciding when to defer, along with ways to argue fairness properties for the overall combined process.

We also leave open the setting  where individuals belong to multiple, potentially intersecting groups as in \cite{Multicalibration,gerrymandering}.

Yet another question is to consider additional (or alternative) properties of soft classifiers  that will make for more efficient or effective post-processing.

\subsection{Related work}

We briefly describe the works most closely related to ours, though both the list of works and their summaries are inevitably too short. Our work fits in a research program on group fairness notions following the work of Chouldechova \cite{Cho17} and Kleinberg et~al. \cite{Kleinberg}. Those works demonstrate the inherent infeasibility of simultaneously equalizing a collection of measures of group accuracy. Our work considers the notions of calibration as formalized in \cite{Pleiss} and those of PPV, NPV, FPR, and FNR from \cite{Cho17} and \cite{Kleinberg}.

The power of post-processing calibrated scores into decisions using threshold classifiers in the context of fairness has been previously studied by Corbett-Davies, Pierson, Feller, Goel, and Huq \cite{Corbett}. As in our work, they show that it is feasible to equalize certain statistical fairness notions across groups using (possibly different) thresholds. They additionally show that these thresholds are in some sense optimal. Whereas \cite{Corbett} focuses on statistical parity, conditional statistical parity, and false positive rate, our most comparable results consider PPV. In our work, we further show that in some cases thresholds fail to equalize both PPV and NPV (called \emph{predictive parity} by \cite{Cho17}), unless we also allow our post-processor to defer on some inputs. Our work also studies methods of post-processing that are much more powerful than thresholding, especially when allowing deferrals. On the technical side, \cite{Corbett} assumes that their soft classifiers are supported on the continuous interval $[0,1]$, simplifying the analyses. We instead study classifiers with finite support as it is closer to true practice in many settings (e.g., COMPAS risk scores).

Using deferrals to promote fairness
has been considered also in the work of Madras, Pitassi, and Zemel \cite{MPZ17}. Specifically they consider how deferring on some inputs may promote a combination of accuracy and fairness, especially when taking explicit account of the downstream decision maker. They make use of two-threshold deferring post-processors like those discussed in Section 5. While it helped inform our work, \cite{MPZ17} takes a more experimental approach and focuses on minimizing the ``disparate impact,'' a measure of total difference in classification error between groups, while maximizing accuracy. One important difference between our works is that Madras et al.~distinguish between ``rejecting'' and ``deferring.'' Rejecting is oblivious as to properties of the downstream decision maker, while deferring tries to counteract the biases of the decision maker. Our work considers only the former notion, but uses the term ``defer" instead of ``reject."

%%% Local Variables:
%%% mode: latex
%%% TeX-master: "../main"
%%% End:

 % !TeX root = ../main.tex

\section{Preliminaries}
\label{sec:prelims}
We study the problem of binary classification.
An \emph{instance} is an element, usually denoted $x$, of a
universe $\X$.
We restrict our attention to instances sampled uniformly at random
from the universe, denoted $X \sam \X$. Our theory extends directly
to any other distribution on $\X$; that distribution does not need to
be known to the classifiers.
Each instance $x$ is associated with a \emph{true type} $\ty(x) \in \{\tF, \tT\}$.
Each instance $x$ is also associated with a \emph{group} $\group(x) \in \G$, where $\G$ is the set of groups. We restrict our attention to sets $\G$ that form a partition of the universe $\X$.
We denote by $\X_g$ the set of instances $x$ in group $g$, and by $X_g$ the random variable distributed uniformly over $\X_g$. Note that for any events $E_1$ and $E_2$, $\Pr_{X \sam \X_g}[E_1 \mid E_2] = \Pr_{X \sam \X}[E_1 \mid E_2, \group(X) = g]$.

\begin{definition}[Base rate (\BR)]\label{defn:base-rate}
    The \emph{base rate} of a group $g \in \G$, is
    \begin{equation}
	\label{eq:base-rate-def}
    \BR_g = \Pr[\ty(X_g) = \tT] = \E[\ty(X_g)].
    \end{equation}
\end{definition}
\noindent
When $\X$ is finite, $\BR_g$ is simply the fraction of individuals $x$ in
the group $g$ for whom $\ty(x)=\tT$.

A \emph{classifier} is a randomized function with domain $\X\times\G$.\footnotemark
A \emph{hard classifier}, denoted $\hard$, outputs a \emph{prediction} in $\{\pF, \pT\}$, interpreted as a guess of the true type $\ty(x)$.
A \emph{soft classifier}, denoted $\soft$, outputs a \emph{score}  $\score \in [0,1]$, interpreted as a measure of confidence that $\ty(x) = 1$. We restrict our attention to soft classifiers with finite image.
We call a classifier \emph{group blind} if its output is independent of the input group $g$.
For all groups $g \in \G$, we call a hard classifier $\hard$ \emph{non-trivial on $g$} if $\Pr[\hard(X_g) = 1] > 0$ and
$\Pr[\hard(X_g) = 0] > 0$.  Hard classifiers are \emph{trivial on $g$} if they are not non-trivial on $g$.
\footnotetext{As the focus of this paper is on the post processing of classifiers, we set aside questions such as the origin of the given classifier, including the randomness used in training, the origin or quality of the training data, and societal factors affecting the classifier. In particular, the classifiers we consider in this work are memoryless: they do not remember inputs or random choices from previous invocations. That is, we assume that if  $X,X'$ are two independent random variables 
drawn from $\X$ then $\soft(X)$ and $\soft(X')$ (respectively $\hard(X)$ and $\hard(X')$) are also independent 
random variables. (Our formalism can be naturally extended also to classifiers with initial randomized preprocessing, by  considering the family of derivative classifiers, where for each derivative classifier the random choices made at preprocessing are fixed to some value.  The formalism can then be applied separately to each derivative classifier.) 
}% closing \footnotetext{

A \emph{post-processor} is a randomized function with domain $[0,1] \times \G$. As with classifiers, a post-processor can be \emph{hard} or \emph{soft}. A hard post-processor, denoted $\post$, outputs a prediction in $\{\pF,\pT\}$. A soft post-processor, denoted $\postsoft$, outputs a score $\score \in [0,1]$. Observe that for a soft classifier $\soft$, $\post\circ\soft$ is a hard classifier, and $\postsoft\circ\soft$ is a soft classifier.
As with classifiers, we call a post-processor group blind if its
output is independent of the group $g$, and we restrict our attention
to post-processors with finite image. The restriction to finite image
is for mathematical convenience (and also because digital
  memory leads to discrete universes);
 our results generalize to infinite images as well.

In Section~\ref{sec:deferrals}, we expand the definitions of both classifier and post-processors to allow an additional input or output: the special symbol $\punt$.

\begin{figure}
    \usetikzlibrary{decorations.text}
    \resizebox{\hsize}{!}{
\begin{tikzpicture}[mypostaction/.style 2 args={
                         decoration={
                              text align={
                                    left indent=#1},
                                    text along path,
                                    text={#2}
                                    },
                           decorate
                        }, auto, node distance=4cm]

                        \node[scale=0.6] [] (X) at (0,0) {$x \in \X$};
                        \node[scale=0.6] (score) at (3,0) {$\score \in [0,1]$};
                        \node[scale=0.6] (decision) at (8,0) {$\py \in \{\pF, \pT\}$};

                        \draw [->] (X.east) -- (score.west) node [above, midway, scale=0.6] {``soft''} node [below, midway, scale=0.6] {$\soft$};
                        \draw [->] (score.east) -- (decision.west) node [above, midway, scale=0.6] {``hard post-processor''} node [below, midway, scale=0.6] {$\decide$};
                        \draw [->] (score.east) -- (decision.west) node [above, midway, scale=0.6] {``hard post-processor''} node [below, midway, scale=0.6] {$\decide$};
    \draw[->] (X.north) to [out=20,in=160] (decision.north) {};
    \draw[->] (2.5, -0.25) to [out=260, in=280, looseness=2] (3.5, -0.25) {};

    \node[scale=0.6] (TEXTpostsoft) at (3.01,-0.6) {$\postsoft$};
    \node[scale=0.6] (TEXTpostsoft) at (3,-1) {``soft post-processor''};

    \node[scale=0.6] (TEXThard) at (4,1.12) {``hard''};
    \node[scale=0.6] (TEXThard) at (4,0.82) {$\hard$};
\end{tikzpicture}
} % end of resizebox
\caption{We call a classifier that returns results in $[0,1]$ a \emph{soft classifier} to differentiate it from those
    which return results in $\{0,1\}$, which we call \emph{hard classifiers}.  We refer to classifiers that take as input
    the output of a soft classifier as \emph{post-processors}.}
\label{fig:notation}
\end{figure}

\subsection{Calibration} Several works concerning algorithmic fairness focus on various notions of \emph{calibration}.  The following calibration notions are defined only over soft classifiers:

\begin{definition}[Calibration (Soft)]\label{defn:calibration}
    We say a soft classifier $\soft$ is \emph{calibrated} if $\forall \score \in [0,1]$ for which $\Pr_{X \sam
    \X}[\soft(X) = \score] > 0$,
    \[ \Pr_{X \sam \X}[\ty(X) = \tT \mid \soft(X) = \score] =\E_{X
        \sam \X}[\ty(X) \mid \soft(X) = \score] = \score. \]
The probability above is taken over the sampling of $\x$, as well as
random choices made by $\soft$ at classification time.

\end{definition}

\begin{definition}[Groupwise Calibration (Soft)]\label{defn:groupwise-calibration}
    We say that a soft classifier $\soft$ is \emph{groupwise calibrated} if it is calibrated within all groups.
    That is, $\forall g \in \G$ and $\forall \score \in [0,1]$ for which $\Pr[\soft(X_g) = \score] > 0$, we have that
    \[ \Pr[\ty(X_g) = \tT \mid \soft(X_g) = \score] = \score. \]
\end{definition}

Groupwise calibration is essentially the same notion as {\em multicalibration} \cite{Multicalibration} with the difference that in their case the true types are values in $[0,1]$.  We use a different term to emphasize that we restrict our attention to collections of groups $\G$ that form a partition of the universe $\X$.

The two definitions above are stated for soft classifiers whose
output distribution is discrete, since we must be able to condition on
the event $\soft(X)=\score$ or $\soft(X_g)=\score$. That said, it extends naturally to classifiers
with continuously-distributed outputs provided that the conditional
probabilities are well defined.

\subsection{Accuracy Profiles (APs)}
Throughout this work, we make repeated reference to the probability mass function of the random variable $\soft(X_g)$
for a calibrated soft classifier $\soft$ acting on a randomly distributed input $X_g$.
We call this distribution on calibrated scores an \emph{accuracy profile} (AP).
\begin{definition}[Accuracy Profile (AP)]\label{defn:docs}
    The \emph{accuracy profile (AP)} of a calibrated soft classifier $\soft$ for a group $g$, denoted by $\pdf_g$, is the PMF of $\soft(X_g)$. That is, for $\score \in [0,1]$, $\pdf_g(\score) = \Pr[\soft(X_g) = \score].$
\end{definition}

Abusing notation, we denote by $\pdf$ the collection $\{\pdf_g\}_{g\in \G}$, and call it the \emph{AP of \soft}.
We denote by $\supp(\pdf_g)$ the support of the AP $\pdf_g$, namely the set $\supp(\pdf) = \{\score : \exists x \in
\X_g, \exists r \mbox{ s.t. } \soft(x,r) = \score\} \subseteq [0,1]$.

An accuracy profile is a distribution of scores for a calibrated classifier $\soft$.
Because $\soft$ is calibrated, the AP conveys information
about the performance of $\soft$, and is constrained by properties of
the underlying distribution on $X$. For example, the AP's expectation
is exactly the base rate for the population:
\begin{proposition}
\label{prop:baserate-calibrated}
  For any groupwise calibrated soft classifier $\soft$, for all groups $g \in
  \G$: $\BR_g = \E[\soft(X_g)]$.
\end{proposition}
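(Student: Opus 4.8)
The plan is to invoke the definition of base rate and then apply the law of total expectation, using groupwise calibration to evaluate the conditional expectations. First I would rewrite $\BR_g$ using Definition~\ref{defn:base-rate} as $\BR_g = \E[\ty(X_g)]$. Next, since $\soft$ has finite image, the random variable $\soft(X_g)$ takes values in the finite set $\supp(\pdf_g)$, so I can condition on its value and split the expectation:
\[
\E[\ty(X_g)] = \sum_{\score \in \supp(\pdf_g)} \Pr[\soft(X_g) = \score]\cdot \E[\ty(X_g) \mid \soft(X_g) = \score].
\]
The finiteness of the image is exactly what lets me write this as an honest finite sum with no convergence or measurability concerns; this is the only place the restriction to finite-image classifiers is used, and it is why the statement generalizes painlessly to the well-behaved continuous case.

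Then I would apply Definition~\ref{defn:groupwise-calibration}: for every $\score$ in the support, $\E[\ty(X_g) \mid \soft(X_g) = \score] = \score$. Substituting this in gives
\[
\E[\ty(X_g)] = \sum_{\score \in \supp(\pdf_g)} \Pr[\soft(X_g) = \score]\cdot \score = \E[\soft(X_g)],
\]
where the last equality is just the definition of expectation of $\soft(X_g)$ (equivalently, $\sum_\score \pdf_g(\score)\,\score$). Chaining the three displayed equalities yields $\BR_g = \E[\soft(X_g)]$, as desired.

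There is no real obstacle here — the proposition is an immediate consequence of the tower property of conditional expectation together with the calibration identity. The only thing to be slightly careful about is to take the conditioning (and hence the whole computation) inside group $g$, i.e.\ with respect to the distribution of $X_g$, rather than the global $X$; but since $\G$ partitions $\X$ this is purely notational. I would keep the write-up to the three-line chain of equalities above.
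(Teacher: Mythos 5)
Your proof is correct and follows essentially the same route as the paper's: expand $\BR_g$ by conditioning on the value of $\soft(X_g)$ (law of total probability/expectation over the finite support), substitute the calibration identity $\E[\ty(X_g) \mid \soft(X_g) = \score] = \score$, and recognize the resulting sum as $\E[\soft(X_g)]$. Nothing is missing.
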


\begin{proof}[Proof of Proposition~\ref{prop:baserate-calibrated}]
\begin{align*}
\BR_g &= \Pr[\ty(X_g) = \tT] \\
&= \sum_{\score \in \supp(\pdf_g)} \Pr[\ty(X_g) = \tT \mid \soft(X_g) = \score] \Pr[\soft(X_g) = \score] \\
&= \sum_{\score \in \supp(\pdf_g)} \score \Pr[\soft(X_g) = \score] \\
&=  \E[\soft(X_g)]
\end{align*}
where the third line follows from the definition of a calibrated classifier (Definition~\ref{defn:groupwise-calibration}).
\end{proof}

Accuracy profiles also provide useful geometric
intuition for reasoning about the effects of post-processing calibrated
scores. We elaborate on this in Section~\ref{sec:fair-post-process}
(see Figure~\ref{fig:calibration-line-decide}).

\subsection{Group Fairness Measures}

Several well-studied measures of statistical ``fairness''
  (e.g., \cite{EqualOpp,Cho17,Kleinberg, Pleiss,Multicalibration,
    gerrymandering}) look at how the following key performance
  measures of a classifier differ across groups.
The \emph{false positive rate}
(FPR) of a hard classifier $\hard$ for a group $g$ is the rate at which $\hard$ gives a positive classification among
instances $x \in \X_g$ with true type $\tF$. The \emph{false negative rate} (FNR) is defined analogously for predicted
negative instances
with true type $\tT$. \emph{Positive
predictive value} (PPV) and \emph{negative predictive value} (NPV) track the rate of
mistakes within instances that share a predicted type. Informally, positive predictive value captures how much meaning can be given to a predicted $\tT$, and negative predictive value is similar for predicted $\tF$. We now define these statistics formally.

\begin{definition}\label{defn:four-measures} Given a hard classifier $\hard$ and a group $g$, we define\\
\makebox[2.8in]{the \emph{false positive rate} of $\hard$ for $g$: \hfill}  $ \FPR_{\hard,g} = \Pr[\hard(X_g) = \pT \mid \ty(X_g) = \tF] $;\\
\makebox[2.8in]{the \emph{false negative rate} of $\hard$ for $g$: \hfill} $ \FNR_{\hard,g} = \Pr[\hard(X_g) = \pF \mid \ty(X_g) = \tT] $;\\
\makebox[2.8in]{the \emph{positive predictive value} of $\hard$ for
  $g$: \hfill} $ \PPV_{\hard,g} = \Pr[\ty(X_g) = \tT \mid \hard(X_g) =
\pT]$; \\
\makebox[2.8in]{the \emph{negative predictive value} of $\hard$ for $g$: \hfill} $ \NPV_{\hard,g} = \Pr[\ty(X_g) = \tF \mid \hard(X_g) = \pF]. $
\end{definition}

\smallskip

The probability statements in the definitions above
reflect two sources of randomness: the sampling of $X_g$ from the
group $g$ and any random choices made by the classifier $\hard$.

Among previous works, some \cite{EqualOpp,Kleinberg} focus on
equalizing only one or both of the false positive rates and false
negative rates across groups, called \emph{balance} for the negative
and positive classes, respectively.  Equalizing positive and negative
predictive value across groups is often combined into one condition
called \emph{predictive parity} \cite{Cho17}.  We split the value out
to be a separate condition for the positive and negative predictive
classes. 
Predictive parity appears to be a hard-classifier analogue of
calibration: both can be interpreted as saying that the output of the
classifier (hard or soft) contains all the
information contained in group membership. Our results highlight that
the relationship between these notions is more subtle than it first
appears; see Section
\ref{sec:postprocess-limits} for further discussion.

%%% Local Variables:
%%% mode: latex
%%% TeX-master: "../main"
%%% End:

 % !TeX root = ../main.tex

\section{The Limits of Post-Processing}
\label{sec:postprocess-limits}

Suppose throughout this section that $\soft$ is a groupwise calibrated soft classifier.
Our goal in this section is to make binary predictions based on $\soft(x)$ --- and possibly the group $\group(x)$ --- subject to equalizing PPV and/or NPV among groups. That is, we wish to make a prediction using a hard post-processor $\decide$ such that $\hard =\decide \circ \soft$ equalizes
PPV and/or NPV among groups. We chose to concentrate first on (the limitations of) equalizing PPV and NPV rather than FPR and FNR due to the conceptual similarity of PPV and NPV to calibration. Also, the case of equalizing false positive rates with thresholds is addressed in ~\cite{Corbett}.

\subsection{Fairness Conditions for Post-Processors}
\label{sec:fair-post-process}

We begin by making a simple observation about post-processing that provides some geometric intuition for the rest of this section.
Just as in Proposition~\ref{prop:baserate-calibrated}, we can express $\PPV_{\hard, g}$ and $\NPV_{\hard, g}$
succinctly in terms of conditional expectations over the AP $\pdf_g$. 
\begin{proposition}
\label{prop:ppv-as-cond-exp}
    Let $\hard = \post \circ \soft$ be a hard classifier that is non-trivial for all $g \in \G$ where $\soft$
    is groupwise calibrated with respect to $\G$.
    For any $g \in \G$ we have:
    \begin{align*}
    \PPV_{\hard,g} &= \E[\soft(X_g) \mid \hard(X_g) = \pT] \\
    \NPV_{\hard,g} &= 1 - \E[\soft(X_g) \mid \hard(X_g) = \pF]
    \end{align*}
\end{proposition}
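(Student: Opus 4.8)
The plan is to mimic the proof of Proposition~\ref{prop:baserate-calibrated}: expand the defining conditional probability of $\PPV_{\hard,g}$ by conditioning on the score $\soft(X_g)$, and then invoke groupwise calibration. The one extra ingredient needed is that, after conditioning on the score, the additional conditioning on $\hard(X_g) = \pT$ carries no further information about the true type.

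First I would note that non-triviality of $\hard$ on $g$ guarantees $\Pr[\hard(X_g) = \pT] > 0$ and $\Pr[\hard(X_g) = \pF] > 0$, so all the conditional expectations in the statement are well defined. Writing $\PPV_{\hard,g} = \Pr[\ty(X_g) = \tT \mid \hard(X_g) = \pT]$ and applying the law of total probability over the scores $\score$ in the (finite) support of $\pdf_g$ with $\Pr[\soft(X_g) = \score,\, \hard(X_g) = \pT] > 0$,
\begin{align*}
\PPV_{\hard,g} = \sum_{\score} \Pr[\ty(X_g) = \tT \mid \soft(X_g) = \score,\, \hard(X_g) = \pT] \cdot \Pr[\soft(X_g) = \score \mid \hard(X_g) = \pT].
\end{align*}
The key step is to argue that $\Pr[\ty(X_g) = \tT \mid \soft(X_g) = \score,\, \hard(X_g) = \pT] = \Pr[\ty(X_g) = \tT \mid \soft(X_g) = \score] = \score$, where the last equality is groupwise calibration (Definition~\ref{defn:groupwise-calibration}). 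This holds because $\hard = \post \circ \soft$ and the post-processor $\post$ is memoryless: its output depends only on the value $\soft(X_g)$ together with internal randomness independent of $X_g$, hence of $\ty(X_g)$. Consequently $\ty(X_g)$ and $\hard(X_g)$ are conditionally independent given $\soft(X_g)$, so conditioning additionally on $\hard(X_g) = \pT$ leaves the conditional distribution of $\ty(X_g)$ unchanged. Substituting, the sum simplifies to $\sum_{\score} \score \, \Pr[\soft(X_g) = \score \mid \hard(X_g) = \pT] = \E[\soft(X_g) \mid \hard(X_g) = \pT]$.

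The NPV identity follows from the identical argument applied to the event $\hard(X_g) = \pF$: $\NPV_{\hard,g} = \Pr[\ty(X_g) = \tF \mid \hard(X_g) = \pF] = 1 - \Pr[\ty(X_g) = \tT \mid \hard(X_g) = \pF] = 1 - \E[\soft(X_g) \mid \hard(X_g) = \pF]$. The only real obstacle is making the conditional-independence step precise — i.e., setting up the probability space so that ``the post-processor sees only the score'' is a rigorous statement rather than an informal one; the memorylessness assumption recorded in the footnote is exactly what licenses it. Everything else is routine bookkeeping with the law of total expectation.
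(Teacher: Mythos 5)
Your proposal is correct and follows essentially the same route as the paper's proof: decompose $\PPV_{\hard,g}$ via the law of total probability over scores, use the conditional independence of $\post$'s output from $\ty(X_g)$ given $\soft(X_g)$ (the paper isolates this as a separate Fact), and then apply groupwise calibration; the NPV identity is handled symmetrically in both. No gaps.
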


\begin{proof}[Proof of Proposition~\ref{prop:ppv-as-cond-exp}]
We first observe that the output of a post-processor is conditionally independent of the
true type, conditioned on the output of the soft classifier it is post-processing and the group membership:

\begin{fact}
\label{fact:conditional-independence}
Consider any randomized function $\post: [0,1] \times \G \to \{0,1\}$. Since
$\post$ is a randomized function with inputs $\score \in [0,1]$ and $g \in \G$, we have that
\begin{equation}
\label{eq:conditional-independence}
(\post(\soft(X),\group(X)) \perp \ty(X)) \mid (\soft(X),\group(X))
\end{equation}
or in other words that $\post(\soft(X), \group(X))$ is conditionally independent of the true type $\ty(X)$, since fixing the inputs to $\post$ makes its output purely a function of its random string.
\end{fact}

Now recall that $\PPV_{\hard,g}$ and $\NPV_{\hard,g}$ are well-defined for all groups because $\hard$ is non-trivial on all
    groups.  We then have
\begin{align*}
\PPV_{\hard, g} &= \Pr[\ty(X_g) = 1 \mid \hard(X_g) = \pT] \\
&= \sum_{\score \in \supp(\pdf_g)} \Pr[\ty(X_g) = 1 , \soft(X_g) = \score \mid \decide(\soft(X_g), g) = \pT] \\
&= \sum_{\score \in \supp(\pdf_g)} \Pr[\ty(X_g) = 1 \mid \soft(X_g) = \score, \decide(\soft(X_g), g)] \\
& \quad \cdot \Pr[\soft(X_g) = \score | \decide(\soft(X_g), g) = \pT] \\
&= \sum_{\score \in \supp(\pdf_g)} \score \Pr[\soft(X_g) = \score | \decide(\soft(X_g), g) = \pT] \\
&= \E[\soft(X_g) \mid \hard(X_g) = \pT]
\end{align*}
where the fourth line follows from the fact that the group $g$ is fixed within $\X_g$, which lets us apply Fact~\ref{fact:conditional-independence}, and the fact that $\soft$ is calibrated on $g$. Similar simplifications give us that
\begin{align*}
\NPV_{\hard, g} &= \Pr[\ty(X_g) = 0 \mid \decide(\soft(X_g),g) = \pF] \\
&= 1 - \Pr[\ty(X_g) = 1 \mid \decide(\soft(X_g),g) = \pF] \\
&= 1 - \E[\soft(X_g) \mid \decide(\soft(X_g), g) = \pF]
\end{align*}
\end{proof}

Using Proposition~\ref{prop:ppv-as-cond-exp}, we can geometrically see how certain post-processing decision rules will interact with the AP for a group $g$.  For example,
using a threshold, the expected true positives, true negatives, false positives, and false negatives can be estimated,
as shown in Figure \ref{fig:calibration-line-decide}.

\begin{figure}
    \centering
\includegraphics[width=0.5\hsize]{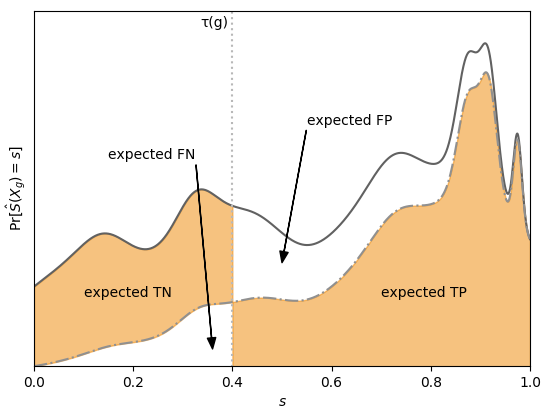}
\caption{Accuracy Profiles (APs, definition \ref{defn:docs}) yield useful geometric intuitions, which come from the calibration
    property (definition \ref{defn:calibration}).
    With a threshold, the expected PPV, NPV, FPR, and FNR can be seen visually.}
\label{fig:calibration-line-decide}
\end{figure}

Proposition~\ref{prop:fpr-as-cond-exp} below gives a characterizations of the false positive and false negative rates in a manner analogous to how Proposition~\ref{prop:ppv-as-cond-exp} describes PPV and NPV:

\begin{proposition}
\label{prop:fpr-as-cond-exp}
Let $\hard$ and $\soft$ be hard and soft classifiers as in Proposition \ref{prop:ppv-as-cond-exp}.  
    Then for any $g \in \G$,
    \begin{align*}
    \FPR_{\hard,g} &= \Pr[\hard(X_g) = \pT] \cdot \frac{1 - \E[\soft(X_g) \mid \hard(X_g) = \pT]}{1 - \E[\soft(X_g)]} \\
    \FNR_{\hard,g} &= \Pr[\hard(X_g) = \pF] \cdot  \frac{\E[\soft(X_g) \mid \hard(X_g) = \pF]}{\E[\soft(X_g)]}
    \end{align*}
    Assume that $Pr[\ty(X_g) = 1] > 0$ and $Pr[\ty(X_g) = 0] > 0$ (that is, assume $0 < \BR_g < 1$) so that $\FPR$ and
    $\FNR$ are well-defined.
\end{proposition}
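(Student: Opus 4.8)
The plan is to derive both identities from Bayes' rule, reducing everything to quantities already computed in Propositions~\ref{prop:baserate-calibrated} and~\ref{prop:ppv-as-cond-exp}. Concretely, for a fixed group $g$, I would first write
\[
\FPR_{\hard,g} = \Pr[\hard(X_g) = \pT \mid \ty(X_g) = \tF] = \frac{\Pr[\ty(X_g) = \tF \mid \hard(X_g) = \pT]\cdot\Pr[\hard(X_g)=\pT]}{\Pr[\ty(X_g)=\tF]},
\]
which is valid because the hypothesis $0 < \BR_g < 1$ guarantees $\Pr[\ty(X_g)=\tF]>0$ (so $\FPR$ is well-defined), and non-triviality of $\hard$ on $g$ guarantees $\Pr[\hard(X_g)=\pT]>0$, so conditioning on $\hard(X_g)=\pT$ is legitimate.

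Next I would substitute the known values of the three factors. The numerator's conditional probability is $1 - \PPV_{\hard,g}$, which by Proposition~\ref{prop:ppv-as-cond-exp} equals $1 - \E[\soft(X_g)\mid\hard(X_g)=\pT]$. The denominator $\Pr[\ty(X_g)=\tF] = 1-\BR_g$ equals $1 - \E[\soft(X_g)]$ by Proposition~\ref{prop:baserate-calibrated}. Combining these gives the claimed expression for $\FPR_{\hard,g}$. The computation for $\FNR_{\hard,g}$ is completely symmetric: apply Bayes' rule to $\Pr[\hard(X_g)=\pF\mid\ty(X_g)=\tT]$, use $1-\NPV_{\hard,g} = \E[\soft(X_g)\mid\hard(X_g)=\pF]$ from Proposition~\ref{prop:ppv-as-cond-exp} for the numerator's conditional probability, and $\BR_g = \E[\soft(X_g)]$ for the denominator.

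There is essentially no hard step here; the only thing to be careful about is the bookkeeping of which events must have positive probability for the conditional probabilities and the invocations of the earlier propositions to make sense --- this is exactly what the stated assumptions $0 < \BR_g < 1$ and non-triviality of $\hard$ on all groups provide. One could alternatively prove the identities directly by expanding $\FPR_{\hard,g}$ as a sum over $\score \in \supp(\pdf_g)$ and invoking calibration and Fact~\ref{fact:conditional-independence} as in the proof of Proposition~\ref{prop:ppv-as-cond-exp}, but routing through Bayes' rule and the two already-established propositions is shorter and makes the structural parallel between the four measures transparent.
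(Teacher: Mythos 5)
Your proposal is correct and follows the same route as the paper's proof: apply Bayes' rule, identify the conditional probability in the numerator with $1-\PPV_{\hard,g}$ (resp.\ $1-\NPV_{\hard,g}$) via Proposition~\ref{prop:ppv-as-cond-exp}, and replace $\Pr[\ty(X_g)=\tF]$ by $1-\E[\soft(X_g)]$ via Proposition~\ref{prop:baserate-calibrated}. Your bookkeeping of which events need positive probability matches the hypotheses the paper invokes.
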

\begin{proof}[Proof of Proposition~\ref{prop:fpr-as-cond-exp}]
We give the proof for $\FPR$, and the proof for $\FNR$ is similar. By applying Bayes' rule, we can write
\begin{align}
\FPR_{\hard, g} &= \Pr[\hard(X_g) = 1 \mid \ty(X_g) = 0] \nonumber \\
&= \Pr[\ty(X_g) = 0 \mid \hard(X_g) = 1] \cdot \frac{\Pr[\hard(X_g) = 1]}{\Pr[\ty(X_g) = 0]} \label{eq:fpr-as-cond-exp-1}
\end{align}
Noting that $\Pr[\ty(X_g) = 0 \mid \hard(X_g) = 1] = 1 - \PPV_{\hard, g}$, we can apply Proposition \ref{prop:ppv-as-cond-exp} and rearrange to write the RHS of Equation~\ref{eq:fpr-as-cond-exp-1} as follows.
\begin{equation}
\label{eq:fpr-as-cond-exp-2}
\text{RHS of (\ref{eq:fpr-as-cond-exp-1})} = \Pr[\hard(X_g) = 1] \cdot \frac{1 - \E[\soft(X_g) \mid \hard(X_g) = \pT]}{\Pr[\ty(X_g) = 0]}
\end{equation}
We note that $\Pr[\ty(X_g) = 0] = 1 - \E[\soft(X_g)]$ (Proposition~\ref{prop:baserate-calibrated}). Substituting this in to the RHS of Equation~\ref{eq:fpr-as-cond-exp-2}, we conclude the result.
\end{proof}

\subsection{General impossibility of equalizing PPV, NPV}

It is not always possible to directly post-process a soft groupwise calibrated
classifier into a hard one with equalized PPV (or NPV) for all groups, as we demonstrate by counterexample in
Proposition~\ref{prop:info-theory-impossibility-3}. Before proceeding, we note that our counterexample is somewhat
contrived---in particular, the AP induced by the soft classifier $\soft$ in the proof of
Proposition~\ref{prop:info-theory-impossibility-3} takes only one value on each group. When the AP of $\soft$ is more nicely structured on each group, we will see that there are general methods to equalize PPV (or NPV).

\begin{proposition}
\label{prop:info-theory-impossibility-3}
Fix two disjoint groups $g_1$ and $g_2$ with respective base rates $\BR_1$ and $\BR_2$ such that $\BR_1 \neq \BR_2$.
Then there exists a soft classifier $\soft$ that is groupwise calibrated, but for which there is no
post-processor  $\post: [0,1] \times \G \to \zo$ such that $\post \circ \soft$ equalizes PPV, unless $\Pr[\post(\BR_i, g_i) = 1] = 0$ for $i=1$ or 2.
\end{proposition}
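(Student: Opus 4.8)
The plan is to exhibit the degenerate classifier hinted at just before the statement: take $\soft$ to be \emph{constant on each group}, i.e. $\soft(x,g)=\BR_g$ for every $g\in\G$ and every $x\in\X_g$. First I would check that this $\soft$ is groupwise calibrated, which is immediate from Definition~\ref{defn:groupwise-calibration}: for each $g$ the only value in $\supp(\pdf_g)$ is $\BR_g$, so conditioning on $\soft(X_g)=\BR_g$ conditions on a probability-one event, and $\Pr[\ty(X_g)=\tT\mid\soft(X_g)=\BR_g]=\Pr[\ty(X_g)=\tT]=\BR_g$ by the definition of the base rate (Definition~\ref{defn:base-rate}). In particular $\soft$ is valid and calibrated regardless of how many groups $\G$ contains.

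Next I would analyze an arbitrary hard post-processor $\post\colon[0,1]\times\G\to\zo$ applied to this $\soft$, and set $q_i=\Pr[\post(\BR_i,g_i)=\pT]$ for $i\in\{1,2\}$. Since the only input $\post$ ever receives on group $g_i$ is the pair $(\BR_i,g_i)$, the classifier $\hard=\post\circ\soft$ satisfies $\Pr[\hard(X_{g_i})=\pT]=q_i$, and, crucially, $\hard(X_{g_i})$ is \emph{unconditionally} independent of $\ty(X_{g_i})$: this is exactly Fact~\ref{fact:conditional-independence}, specialized to the case where the conditioning random variable $\soft(X_{g_i})$ is a constant (so that conditioning on $\soft(X_{g_i})$ and the group $g_i$ amounts to no conditioning at all within $\X_{g_i}$). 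This also accommodates randomized $\post$. Hence, whenever $q_i>0$ the quantity $\PPV_{\hard,g_i}$ is well-defined and equals $\Pr[\ty(X_{g_i})=\tT\mid\hard(X_{g_i})=\pT]=\Pr[\ty(X_{g_i})=\tT]=\BR_i$; one can alternatively read this off Proposition~\ref{prop:ppv-as-cond-exp} when $0<q_i<1$, since $\soft(X_{g_i})$ is the constant $\BR_i$.

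To finish, I would argue the contrapositive of the statement. Suppose $q_1>0$ and $q_2>0$, i.e. the escape clause $\Pr[\post(\BR_i,g_i)=\pT]=0$ fails for both $i$. Then by the previous step $\PPV_{\hard,g_1}$ and $\PPV_{\hard,g_2}$ are both well-defined and equal $\BR_1$ and $\BR_2$ respectively, and since $\BR_1\neq\BR_2$ by hypothesis, $\post\circ\soft$ does not equalize PPV. Equivalently, if $\post\circ\soft$ equalizes PPV then $q_1=0$ or $q_2=0$, which is exactly the claimed conclusion.

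I do not expect a substantive obstacle: the force of the proposition is simply that a one-point accuracy profile leaves the post-processor with no information to exploit on group $g_i$, so the only PPV it can realize there is $\BR_i$. The one place that needs care is the bookkeeping around when $\PPV_{\hard,g_i}$ is actually defined — precisely when $q_i>0$ — and lining this up with the ``unless $\Pr[\post(\BR_i,g_i)=1]=0$'' clause, so that the case we rule out ($q_1,q_2>0$) is exactly the case in which both PPVs are meaningful and the requirement of equalization has content.
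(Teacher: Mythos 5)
Your proposal is correct and takes essentially the same route as the paper: the same constant-per-group classifier $\soft(x)=\BR_{g}$ on $\X_g$, with the PPV on each group then forced to equal $\BR_i$ (via Proposition~\ref{prop:ppv-as-cond-exp}, or equivalently the independence in Fact~\ref{fact:conditional-independence}), so that $\BR_1\neq\BR_2$ rules out equalization whenever both groups receive positive predictions with positive probability. Your extra bookkeeping about when $\PPV_{\hard,g_i}$ is well-defined just makes explicit what the paper's ``unless'' clause handles implicitly.
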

\begin{proof}[Proof of Proposition \ref{prop:info-theory-impossibility-3}]
Consider the classifier $\soft$ such that $\soft(x) = \BR_1$ if $x \in g_1$
and $\soft(x) = \BR_2$ if $x \in g_2$.
This classifier is trivially groupwise calibrated. Since $\Pr[\post(\BR_i, g_i) = 1] > 0$ for $i=1$ and 2, we conclude that $\PPV_{\hard, g_i}$ is well-defined for $g_1$ and $g_2$.
The proof now follows from the characterization of PPV in Proposition~\ref{prop:ppv-as-cond-exp}.
This is because $\PPV_{\hard, g_i}$ is equal to the expectation of $\soft(X)$ where $X$ is drawn from a distribution with support contained in $g_i$, and hence it is equal to $\BR_{i}$, and $\BR_{1} \neq \BR_{2}$.
\end{proof}
The analogous statement regarding impossibility of equalizing NPV is formulated as Proposition~\ref{prop:info-theory-impossibility-npv} in Appendix~\ref{sec:appendix-npv}.

\subsection{A niceness Condition for APs}

We now give a non-degeneracy condition condition on APs motivated by the impossibility result for post-processing given by Proposition~\ref{prop:info-theory-impossibility-3}.
\begin{definition}[Niceness of APs]
\label{def:nice-docs}
Let $\G$ be a set of groups.  A distribution on calibrated scores $\pdf$ is \emph{nice} if $\supp(\pdf_g)$ is the same for all $g \in \G$.
\end{definition}
Note that this  condition rules out the counterexample given by Proposition~\ref{prop:info-theory-impossibility-3}, since the APs in the counterexample had different (in fact, disjoint) supports for different groups. Hence, we can hope to successfully post-process soft classifiers with nice APs.

\subsection{Equalizing PPV or NPV by Thresholding}
\label{sec:thresholds}
We pay special attention to thresholds because they are simple to understand and therefore very widely used.  We use
one slight modification to deterministic thresholds that adds an element of randomness: if a score is \emph{at} the
threshold, we randomly determine which side of the threshold it falls on, according to a distribution defined below.

\begin{definition}[Threshold Post-Processor]\label{defn:threshold-pp}
    A threshold post-processor $\post_{(\thresh, \threshRand)} : [0,1] \times \G \rightarrow \{1,0\}$ is a function
    from a score $s \in [0,1]$ and a group $g \in \G$, parameterized by $\tau$ and $\threshRand$.  The threshold
    parameter $\tau : \G \rightarrow [0,1]$ specifies the threshold for the group $g$, and $\threshRand : \G
    \rightarrow [0,1]$ is the probability of returning 1 when the input score $s$ is on the threshold $\tau(g)$.  It
    returns the following outputs:
\begin{align*}
    \post_{(\thresh, \threshRand)}(\score, g) = \begin{cases}
        \pT & \score > \tau(g) \\
        \pF & \score < \tau(g) \\
        \pT \text{ w.p. } \threshRand(g) \text{ else } \pF & \score = \tau(g)
    \end{cases}
\end{align*}
\end{definition}
In the setting of an infinite number of scores and a continuous domain (i.e. scores are represented by a
probability density function instead of a probability mass function), we can use purely deterministic threshold
functions in which $\threshRand \equiv 1$, and achieve very similar results for the rest of this section.

If both $\thresh(g)$ and $\threshRand(g)$ do not vary across groups $g \in \G$, then the post-processor is the same
across groups. In this case, we will call the post-processor a \emph{group blind} threshold post-processor, and will
overload $\thresh$ and $\threshRand$ to be constants.

We now study the effectiveness of thresholds for post-processing soft classifiers with nice APs. The main takeaways are:
\begin{enumerate}
\item If the APs are nice, then threshold post-processors can equalize PPV
(Propositions~\ref{prop:trivial-threshold-equalizes} and~\ref{prop:2-thresh-equal-ppv}).
\item However, group blind threshold post-processors are rather limited in their ability to equalize PPV (Proposition~\ref{prop:single-threshold-cannot-equalize}).
\item Furthermore, equalizing PPV with thresholds (group blind or otherwise) may have undesirable social consequences.
\item Thresholds cannot always equalize PPV and NPV simultaneously, even for nice APs (Proposition~\ref{prop:ppv-npv-impossibility}).
\end{enumerate}
Results 1-3 also apply to NPV (see Proposition~\ref{prop:single-threshold-cannot-equalize-npv}).

\subsubsection{Group Blind Thresholds}
We begin by classifying which group-blind threshold post-processors can equalize PPVs across all groups (Propositions~\ref{prop:trivial-threshold-equalizes} and~\ref{prop:single-threshold-cannot-equalize}). By symmetry, our arguments give a similar characterization for equalizing NPVs.

\begin{proposition}\label{prop:trivial-threshold-equalizes}
  For every nice groupwise calibrated soft classifier $\soft$ and for every group-blind threshold post-processor
  $\post_{(\thresh, \threshRand)}$ such that $\thresh(g) = \max(\supp(\pdf_g))$ 
  for all $g$, then the composed classifier $\hard = \post_{(\thresh, \threshRand)} \circ \soft$
  equalizes PPVs across all groups for which $\hard$ is non-trivial. 
  \end{proposition}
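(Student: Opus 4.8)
The plan is to observe that the hypotheses force the classifier $\hard$ to output $\pT$ only on the single largest score in the support, and then invoke the characterization of PPV from Proposition~\ref{prop:ppv-as-cond-exp}.

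First I would note that niceness (Definition~\ref{def:nice-docs}) means $\supp(\pdf_g)$ is the same set for every $g \in \G$; call it $S$. Hence $\tau^\ast := \max(S) = \max(\supp(\pdf_g))$ does not depend on $g$, so the post-processor $\post_{(\thresh,\threshRand)}$ with $\thresh(g) \equiv \tau^\ast$ is genuinely group blind, and $\threshRand$ is likewise a constant. This is just to confirm the proposition's setup is consistent.

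Next, the key structural observation: since $\tau^\ast$ is the \emph{maximum} of the common support $S$, there is no score $\score > \tau^\ast$ with $\score \in \supp(\pdf_g)$ for any $g$. Therefore, by Definition~\ref{defn:threshold-pp}, the composed classifier satisfies $\hard(X_g) = \pT$ only when $\soft(X_g) = \tau^\ast$ (and the threshold coin, which lands on $\pT$ with probability $\threshRand$, comes up $\pT$). In particular, conditioned on the event $\hard(X_g) = \pT$, the random variable $\soft(X_g)$ is equal to $\tau^\ast$ with probability $1$.

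Finally, fix any group $g$ on which $\hard$ is non-trivial, so that $\PPV_{\hard,g}$ is well-defined and Proposition~\ref{prop:ppv-as-cond-exp} applies:
\[
\PPV_{\hard,g} \;=\; \E[\soft(X_g) \mid \hard(X_g) = \pT] \;=\; \tau^\ast,
\]
using the observation that the conditioning event pins $\soft(X_g)$ to $\tau^\ast$. Since $\tau^\ast$ is independent of $g$, the value $\PPV_{\hard,g}$ is the same across all groups on which $\hard$ is non-trivial, which is exactly the claim. I do not anticipate a real obstacle here; the only points requiring a line of care are confirming that ``group blind'' is well-defined under niceness (handled in the first step) and noting that the statement is vacuous on any group where $\hard$ is trivial (e.g.\ when $\threshRand = 0$, or when $\tau^\ast \notin \supp(\pdf_g)$, though niceness rules the latter out).
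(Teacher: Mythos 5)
Your proposal is correct and follows essentially the same route as the paper: the threshold at the (common, by niceness) maximum of the support means $\pT$ is only ever output on the top score, so $\PPV_{\hard,g}$ equals that maximum score for every non-trivial group, via the characterization in Proposition~\ref{prop:ppv-as-cond-exp}. The extra care you take about group blindness being well-defined and the vacuity on trivial groups is fine but not needed beyond what the paper already records.
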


The existence of the threshold post-processors in Proposition~\ref{prop:trivial-threshold-equalizes} follows from the assumed finiteness of the range of the soft classifier. In the case where the range of the soft classifier is infinite, such post-processors may not exist.

\begin{proof}[Proof of Proposition~\ref{prop:trivial-threshold-equalizes}]
Any of the given post-processors only ever maps the largest score in the support of $\pdf_g$ to 1, for all groups $g$. Hence,
$\PPV_g$ is exactly the largest score in $\supp(\pdf_g)$. By the assumption that $\pdf$ is nice, $\supp(\pdf_g)$ is the same for all groups $g$, and hence the PPV is equalized across groups.
\end{proof}

We prove the analogous statement for NPV in Proposition~\ref{prop:trivial-threshold-equalizes-npv} in the Appendix.
We proceed to show that the post-processors described in Proposition~\ref{prop:trivial-threshold-equalizes} are the \emph{only} non-trivial, group blind post-processors that equalize PPV across groups in general, as we prove in Proposition~\ref{prop:single-threshold-cannot-equalize}.
\begin{proposition}\label{prop:single-threshold-cannot-equalize}
There exists a groupwise-calibrated soft classifier with a nice AP for which no non-trivial group blind threshold post-processor, other than the ones in Proposition~\ref{prop:trivial-threshold-equalizes}, can equalize PPV across groups.
\end{proposition}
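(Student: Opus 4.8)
The plan is to exhibit a groupwise calibrated soft classifier supported on only two scores for which a group-blind threshold post-processor equalizes PPV only when it positively classifies exactly the individuals receiving the top score --- so that, up to inducing the same hard classifier, the post-processors that work are precisely those of Proposition~\ref{prop:trivial-threshold-equalizes}. Concretely, fix two groups $g_1,g_2$, fix scores $0 < a < b \le 1$, and let $\soft$ be a groupwise calibrated soft classifier whose accuracy profile satisfies $\supp(\pdf_{g_1}) = \supp(\pdf_{g_2}) = \{a,b\}$, with $\pdf_{g_i}(b) = \alpha_i$ (and $\pdf_{g_i}(a) = 1-\alpha_i$) for some $\alpha_1 \ne \alpha_2$ in $(0,1)$; e.g.\ $a = 1/3$, $b = 2/3$, $\alpha_1 = 1/2$, $\alpha_2 = 1/4$. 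This $\pdf$ is nice, and a classifier realizing it is obtained by placing an $\alpha_i$-fraction of $\X_{g_i}$ at score $b$ and the rest at score $a$, and within each score class making exactly a $b$- (resp.\ $a$-) fraction have true type $\pT$; enlarging $|\X_{g_i}|$ makes the fractions integral (and the paper's remark about general distributions makes this moot).

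\textbf{Reduction to two kinds of post-processor.} I would classify a group-blind threshold post-processor $\post_{(\thresh,\threshRand)}$ by its induced hard classifier $\hard = \post_{(\thresh,\threshRand)}\circ\soft$, which is determined by the pair $\bigl(\Pr[\hard(X_g)=\pT \mid \soft(X_g)=a],\ \Pr[\hard(X_g)=\pT \mid \soft(X_g)=b]\bigr)$ (independent of $g$, since the post-processor is group blind). Running through the ranges $\thresh > b$, $\thresh = b$, $\thresh \in (a,b)$, $\thresh = a$, $\thresh < a$ (using $a>0$, so no score ever sits at the threshold $0$), the non-trivial possibilities are exactly: \emph{(a)} $(0,t)$ for $t\in(0,1]$, realized by $\thresh=b,\ \threshRand=t$ --- these are precisely the post-processors of Proposition~\ref{prop:trivial-threshold-equalizes}, since $b=\max(\supp(\pdf_g))$ --- together with $(0,1)$, also realized by any $\thresh\in(a,b)$ (which induces the same hard classifier as the $t=1$ case); and \emph{(b)} $(t,1)$ for $t\in(0,1)$, realized by $\thresh=a,\ \threshRand=t$.

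\textbf{The two cases.} In case (a) every positively classified individual has the maximum score $b$, so by Proposition~\ref{prop:ppv-as-cond-exp} (exactly as in the proof of Proposition~\ref{prop:trivial-threshold-equalizes}) $\PPV_{\hard,g_1}=\PPV_{\hard,g_2}=b$ and PPV is equalized, and these post-processors induce precisely the hard classifiers of Proposition~\ref{prop:trivial-threshold-equalizes}. Case (b) is the substance: here $\hard$ is non-trivial on both groups (positive class mass $t(1-\alpha_i)+\alpha_i>0$, negative class mass $(1-t)(1-\alpha_i)>0$), and Proposition~\ref{prop:ppv-as-cond-exp} gives $\PPV_{\hard,g_i}$ as the $\pdf_{g_i}$-weighted average of the scores in the positive class, namely
\[
\PPV_{\hard,g_i}(t) \;=\; \frac{a\,t\,(1-\alpha_i)+b\,\alpha_i}{t\,(1-\alpha_i)+\alpha_i}.
\]
Setting $\PPV_{\hard,g_1}(t)=\PPV_{\hard,g_2}(t)$ and clearing denominators, the $t^2$ terms cancel and (using $(1-\alpha_1)\alpha_2-(1-\alpha_2)\alpha_1 = \alpha_2-\alpha_1$) the difference of the two sides collapses to $t\,(a-b)\,(\alpha_2-\alpha_1)$, which is nonzero for every $t\in(0,1)$ since $a\ne b$ and $\alpha_1\ne\alpha_2$. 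Hence no type-(b) post-processor equalizes PPV. Combining: any non-trivial group-blind threshold post-processor that positively classifies some individual with score $a\ne\max(\supp(\pdf))$ fails to equalize PPV, and every other one induces the same hard classifier as a post-processor from Proposition~\ref{prop:trivial-threshold-equalizes}.

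\textbf{Main obstacle.} The algebra is short once PPV is expressed via Proposition~\ref{prop:ppv-as-cond-exp}; the delicate parts are the complete enumeration of non-trivial group-blind thresholds and, more importantly, phrasing the conclusion correctly --- thresholds $\thresh\in(a,b)$ do equalize PPV yet are not literally among those in Proposition~\ref{prop:trivial-threshold-equalizes}, so the right statement must be made in terms of \emph{which individuals receive the positive label} (equivalently, the induced hard classifier) rather than the raw parameters $(\thresh,\threshRand)$.
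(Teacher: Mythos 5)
Your proposal is correct, and it reaches the conclusion by a genuinely different route from the paper. The paper also argues by counterexample via Proposition~\ref{prop:ppv-as-cond-exp}, but its example has a ten-point support with $\pdf_{g_1}$ uniform and $\pdf_{g_2}(\score)\propto\score$, and the inequality of PPVs is derived structurally from Lemma~\ref{lem:ppv-thresh}: the conditional AP of $g_2$ (restricted to positively classified scores) strictly stochastically dominates that of $g_1$, handling randomized thresholds by writing the conditional APs as convex combinations of the two deterministic cases. You instead use a minimal two-point support $\{a,b\}$ with masses $\alpha_1\neq\alpha_2$ on $b$, exhaustively enumerate the induced hard classifiers, and kill case (b) by showing the equalization equation collapses to $t(a-b)(\alpha_2-\alpha_1)\neq 0$. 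Your version is more elementary and self-contained (no dominance lemma), while the paper's dominance formulation does extra work elsewhere: it pins down the \emph{direction} of the inequality ($\PPV_{g_2}>\PPV_{g_1}$), which is what drives the "socially unsatisfying" discussion in Example~\ref{ex:social-unsat}, and the same construction is reused there.

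One point you raise deserves emphasis rather than concern: you are right that a threshold $\thresh\in(a,b)$ equalizes PPV yet is not literally among the post-processors of Proposition~\ref{prop:trivial-threshold-equalizes}, so the proposition is only true when "other than" is read at the level of induced hard classifiers (i.e., post-processors that put positive mass on a non-maximal score). The paper's own proof silently has the same issue --- for a threshold strictly between the top two support points, both conditional APs degenerate to a point mass at the maximum score and the claimed \emph{strict} stochastic dominance fails --- so your explicit case split is, if anything, the more careful treatment. No gap.
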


 At a high level, the proof of Proposition~\ref{prop:single-threshold-cannot-equalize} works as follows: We can make the AP on one group uniform, and the AP of another group strictly increasing. Then, threshold post-processors naturally favor the latter group, as the AP for that group gives more weight to higher scores than lower ones when compared to the former AP. Our characterization of PPV (Proposition~\ref{prop:ppv-as-cond-exp}) features prominently in the proof.

In preparation to proving Proposition \ref{prop:single-threshold-cannot-equalize}, we first prove the following lemma:
\begin{lemma}
\label{lem:ppv-thresh}
Let $g_1, g_2 \in \G$ be two different groups, and fix a group-blind threshold post-processor $\post_{(\thresh, \threshRand)}$. Let $\pdf_{g_1, (\thresh, \threshRand)}$ be the expected conditional AP on scores $\geq \thresh$ that results from starting with the AP $\pdf_{g_1}$ over scores in group $g_1$ and conditioning on the scores that $\post_{(\thresh, \threshRand)}$ sends to 1, and similarly let $\pdf_{g_2, (\thresh, \threshRand)}$ denote the same type of conditional AP when starting with the $\pdf_{g_2}$ over scores in group $g_2$. \\
If $\pdf_{g_2, (\thresh, \threshRand)}$ strictly stochastically dominates $\pdf_{g_1, (\thresh, \threshRand)}$, then
\[ \PPV_{\post_{(\thresh, \threshRand)} \circ \soft, g_1} < \PPV_{\post_{(\thresh, \threshRand)} \circ \soft, g_2} \]
\end{lemma}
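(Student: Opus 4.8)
The plan is to reduce the statement, via the characterization of PPV in Proposition~\ref{prop:ppv-as-cond-exp}, to the elementary fact that strict first‑order stochastic dominance between two distributions on $[0,1]$ implies a strict inequality between their means, and then apply this to the two conditional accuracy profiles $\pdf_{g_1,(\thresh,\threshRand)}$ and $\pdf_{g_2,(\thresh,\threshRand)}$.

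First I would fix notation and check well‑definedness. Since the conditional APs $\pdf_{g_i,(\thresh,\threshRand)}$ are assumed to exist, we have $\Pr[\hard(X_{g_i}) = \pT] > 0$ for $i = 1,2$ (so $\hard = \post_{(\thresh,\threshRand)} \circ \soft$ is non‑trivial on each $g_i$ in the sense needed for PPV to be defined), and by construction $\pdf_{g_i,(\thresh,\threshRand)}$ is exactly the distribution of $\soft(X_{g_i})$ conditioned on $\hard(X_{g_i}) = \pT$. Proposition~\ref{prop:ppv-as-cond-exp} then gives $\PPV_{\hard, g_i} = \E[\soft(X_{g_i}) \mid \hard(X_{g_i}) = \pT] = \mu_i$, where $\mu_i$ denotes the mean of $\pdf_{g_i,(\thresh,\threshRand)}$. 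Hence it suffices to prove $\mu_1 < \mu_2$.

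Next I would invoke stochastic dominance. Let $F_i$ be the CDF of $\pdf_{g_i,(\thresh,\threshRand)}$; because $\soft$ has finite image, each $F_i$ is a right‑continuous step function supported in $[0,1]$. Strict stochastic dominance of $\pdf_{g_2,(\thresh,\threshRand)}$ over $\pdf_{g_1,(\thresh,\threshRand)}$ means $F_2(t) \le F_1(t)$ for all $t$, with strict inequality at some point $t^*$. Using the layer‑cake identity $\mu_i = \int_0^1 \big(1 - F_i(t)\big)\, dt$ (valid since the distributions are supported on $[0,1]$), we get
\[ \mu_2 - \mu_1 = \int_0^1 \big( F_1(t) - F_2(t) \big)\, dt . \]
The integrand is everywhere nonnegative, and $F_1 - F_2$, being a difference of right‑continuous step functions that is strictly positive at $t^*$, stays strictly positive on a half‑open interval $[t^*, t^* + \delta)$ with $\delta > 0$. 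Therefore the integral is strictly positive, giving $\mu_1 < \mu_2$, i.e.\ $\PPV_{\post_{(\thresh,\threshRand)} \circ \soft, g_1} < \PPV_{\post_{(\thresh,\threshRand)} \circ \soft, g_2}$.

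The only real subtlety I anticipate is the strictness: turning a pointwise‑strict gap between the two CDFs into a strictly positive gap between the means. This is where the finiteness of the image of $\soft$—hence the step‑function/right‑continuity structure of the $F_i$—is used, to ensure the gap persists on an interval of positive length rather than at a single point; everything else is bookkeeping around well‑definedness and a direct appeal to Proposition~\ref{prop:ppv-as-cond-exp}. (One could instead avoid the CDF computation by coupling the two conditional APs via their quantile functions and noting that strict dominance forces the coupled pair to differ with positive probability, but the integral argument is the most direct.)
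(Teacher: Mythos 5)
Your proof is correct and follows essentially the same route as the paper's: both reduce the claim via Proposition~\ref{prop:ppv-as-cond-exp} to comparing the means of the two conditional APs and then invoke strict stochastic dominance. The only difference is that you spell out (via the layer-cake identity and the step-function structure of the CDFs) why strict dominance yields a strict inequality of means, a step the paper simply asserts.
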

\begin{proof}
We use the characterization of PPV given in Proposition~\ref{prop:ppv-as-cond-exp}, for the special case where the post-processor thresholds as described above. We can write the PPV for group $g_1$ as follows:
\begin{align} \PPV_{\post_{(\thresh, \threshRand)} \circ \soft, g_1} &= \E_{X_1\sam \X_{g_1}}[\soft(X_1) \mid \post_{(\thresh, \threshRand)} \circ \soft(X_1) = 1] \nonumber \\
&=  \E_{\score \sam \pdf_{g_1, (\thresh, \threshRand)}}[\score] \label{eq:ppv-thresh-1}
\end{align}
where the second line follows from the definition of $\pdf_{g_1, (\thresh, \threshRand)}$.

Similarly, we have that
\begin{equation} \label{eq:ppv-thresh-2}
\PPV_{\post_{(\thresh, \threshRand)} \circ \soft, g_2} =  \E_{\score \sam \pdf_{g_2, (\thresh, \threshRand)}}[\score]
\end{equation}

Since $\pdf_{g_2, (\thresh, \threshRand)}$ stochastically dominates $\pdf_{g_1, (\thresh, \threshRand)}$, the expectation on the RHS of Equation~\ref{eq:ppv-thresh-2} is larger than the expectation on the RHS of Equation~\ref{eq:ppv-thresh-1}, yielding the result.
\end{proof}

\begin{proof}[Proof of Proposition \ref{prop:single-threshold-cannot-equalize}]

     Fix two groups $g_1$ and $g_2$ and a finite set of points $S \subset [0,1]$ such that the PMFs of the soft
     classifier $\soft$ on $g_1$ and $g_2$ have support equal to $S$ - that is, $\supp(\pdf_{g_1}) = \supp(\pdf_{g_2}) = S$. For concreteness, suppose that $|S| = 10$.

Let the PMFs of the soft classifier $\soft$ on these two groups respectively be given by $\pdf_{g_1}(\score) = 1/
|\supp(\pdf_{g_1})|$ and $\pdf_{g_2}(\score) \propto \score$ for all $\score \in S$, where $\pdf_{g_2}$ is normalized
with a constant such that it sums to 1. Fix a group blind threshold post-processor $\post_{(\thresh, \threshRand)}$
that is not one of the ones mentioned in Proposition~\ref{prop:trivial-threshold-equalizes}.  Since $\soft_{(\thresh,
\threshRand)}$ is group blind, its threshold function is a constant which we name~$\thresh$.

Let $\pdf_{g_1, (\thresh, \threshRand)}$ be the expected conditional AP on scores $\geq \thresh$ that results from starting with the AP $\pdf_{g_1}$ over scores in group $g_1$ and conditioning on the scores that $\post_{(\thresh, \threshRand)}$ sends to 1. We can get this conditional PMF by removing scores $\score < \thresh$, multiplying $\pdf_{g_1}(\thresh)$ by $\threshRand$, and re-normalizing the remaining values to get a distribution. Let $\pdf_{g_2, (\thresh, \threshRand)}$ be defined similarly.

We claim that $\pdf_{g_2, (\thresh, \threshRand)}$ strictly stochastically dominates $\pdf_{g_1, (\thresh, \threshRand)}$, which allows us to invoke Lemma~\ref{lem:ppv-thresh} to conclude that the PPV on the two groups are unequal. We now show that $\pdf_{g_2, (\thresh, \threshRand)}$ strictly stochastically dominates $\pdf_{g_1, (\thresh, \threshRand)}$. This is clearly true by design if $\threshRand = 0$ or $1$: in this case, the post-processor is simply a deterministic threshold function, and we know by design that $\pdf_{g_1, (\thresh, \threshRand)}$ is uniform while $\pdf_{g_2, (\thresh, \threshRand)}$ is a strictly increasing function. If $\threshRand = r$ for some $r \in (0,1)$, then we can write $\pdf_{g_1, (\thresh, \threshRand)}$ as a convex combination of $\pdf_{g_1, \thresh, 0}$ and $\pdf_{g_1, \thresh, 1}$ (with weight $r$ on the distribution where $\threshRand = 1$, and weight $1-r$ on the distribution where $\threshRand = 0$). We can write $\pdf_{g_2, (\thresh, \threshRand)}$ as the same convex combination of the conditional distributions over $g_2$ where $\threshRand = 0$ and $\threshRand = 1$. Since we already established stochastic domination for the cases where $\threshRand = 0$ and $\threshRand = 1$, this establishes stochastic domination for the case where $\threshRand \in (0,1)$.
\end{proof}

We achieve the same result for NPV in Proposition~\ref{prop:single-threshold-cannot-equalize-npv}.  In the setting
where the range of the soft classifier is infinite and continuous, we show in
Proposition~\ref{prop:single-threshold-cannot-equalize-cont} that a similar negative result holds, but without the
existence of the classifiers in Proposition~\ref{prop:trivial-threshold-equalizes}.

Propositions \ref{prop:trivial-threshold-equalizes} and \ref{prop:single-threshold-cannot-equalize} demonstrate the limitations of group blind thresholds on calibrated scores.  Though this method of post-processing has social appeal, it does not actually preserve the fairness properties that one would expect.  In the next section we repeat our analysis but relax our group blindness requirement.

\subsubsection{Group-Aware Thresholds}
If we allow the different groups to have different thresholds, then we grant
ourselves more degrees of freedom to be able to satisfy binary fairness
constraints. In particular, we can equalize PPV across groups in a more meaningful way than done in Proposition~\ref{prop:trivial-threshold-equalizes}.

Recall that the group blind threshold post-processors in Proposition~\ref{prop:trivial-threshold-equalizes} are the only
group blind threshold post-processors that work on certain nice APs (shown in
Proposition~\ref{prop:single-threshold-cannot-equalize}). However, these post-processors have the property that the
only score they map to $\pT$ is the largest score in the support, which can be undesirable for many applications.

In particular, all classifiers in Proposition~\ref{prop:single-threshold-cannot-equalize} make the PPV on each group
$g_i$
equal to the maximum score in the support of $\pdf_{g_i}$. However, the (not-necessarily-group-blind) threshold
post-processors in Proposition~\ref{prop:2-thresh-equal-ppv} below can make the PPV on each group equal to any fixed
value between the maximum base rate of $g_i$ and the maximum score in $\supp(\pdf_{g_i})$.

\begin{proposition}\label{prop:2-thresh-equal-ppv}
    Let $\G$ be a set of groups.  For any soft classifier $\soft$ with a nice AP $\pdf$ such that $\soft$ is
    groupwise calibrated over $\G$ and $|\supp(\pdf_g)| \geq 2$ for all $g \in \G$, then
    there exists a non group blind, non-trivial threshold post-processor $\post_{(\thresh, \threshRand)}$ that is not one of the ones from
    Proposition~\ref{prop:trivial-threshold-equalizes} such that the hard classifier
    $\hard = \post_{(\thresh, \threshRand)} \circ \soft$ equalizes PPV across $\G$.

    This holds even if we require that the PPV of all the groups is equal to an arbitrary value in $(\max_i \BR_{g_i},
    \score_{max})$, where $\max_i \BR_{g_i}$ is the maximum base rate among the groups $g_i \in \G$ and $\score_{max}$ is the
    maximum score in the support of $\pdf_{g_i}$.\footnote{For the case where the support of $\pdf_{g_i}$ is infinite, $\score_{max}$ should be the supremum of scores.}

    Moreover, since this post-processor is not group blind, it is not one of the post-processors described in
    Proposition \ref{prop:trivial-threshold-equalizes}.
\end{proposition}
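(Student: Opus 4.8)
The plan is to reduce the statement to a one-dimensional intermediate-value argument carried out separately for each group, using the characterization $\PPV_{\hard,g}=\E[\soft(X_g)\mid\hard(X_g)=\pT]$ from Proposition~\ref{prop:ppv-as-cond-exp}. By niceness all groups share a common support, which I would write as $s_1<s_2<\cdots<s_k=\score_{max}$; the hypothesis $|\supp(\pdf_g)|\geq 2$ gives $k\geq 2$, and since $\pdf_g$ then puts positive mass on some $s_j<\score_{max}$, Proposition~\ref{prop:baserate-calibrated} yields $\BR_g=\E[\soft(X_g)]<\score_{max}$ for every $g$. Hence the interval $(\max_i\BR_{g_i},\,\score_{max})$ is non-empty; fix a target value $v$ in it.

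First I would introduce a ``mass'' parametrization of threshold post-processors. For $\theta\in(0,1]$ and a group $g$ there is an (essentially unique) threshold post-processor with $\Pr[\hard(X_g)=\pT]=\theta$: pick $j$ with $\sum_{i\geq j}\pdf_g(s_i)\geq\theta>\sum_{i>j}\pdf_g(s_i)$, set $\thresh(g)=s_j$ and $\threshRand(g)=(\theta-\sum_{i>j}\pdf_g(s_i))/\pdf_g(s_j)\in(0,1]$; here the randomization parameter $\threshRand$ is exactly what lets us realize every value of $\theta$ despite the discreteness of the support. Write $P_g(\theta)$ for the PPV of $\hard=\post_{(\thresh,\threshRand)}\circ\soft$ on group $g$; by Proposition~\ref{prop:ppv-as-cond-exp} this equals $\big(\sum_{i>j}s_i\pdf_g(s_i)+\threshRand(g)\,s_j\pdf_g(s_j)\big)/\theta$. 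The key step is to verify that $P_g$ is continuous and strictly decreasing on $[\pdf_g(s_k),1]$, with $P_g(\pdf_g(s_k))=\score_{max}$ and $P_g(1)=\E[\soft(X_g)]=\BR_g$. Within a single ``segment'' ($j$ fixed, $r=\threshRand(g)$ varying over $(0,1]$) one checks directly that $r\mapsto(\sum_{i>j}s_i\pdf_g(s_i)+r\,s_j\pdf_g(s_j))/(\sum_{i>j}\pdf_g(s_i)+r\,\pdf_g(s_j))$ has negative derivative, since that derivative's sign is the sign of $s_j$ minus the mass-weighted average of the strictly larger scores $s_{j+1},\dots,s_k$; and the one-sided limits agree at each breakpoint $\theta=\sum_{i\geq j}\pdf_g(s_i)$ because $\E[\soft(X_g)\mid\soft(X_g)\geq s_j]=\E[\soft(X_g)\mid\soft(X_g)>s_{j-1}]$. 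Note $0<\pdf_g(s_k)<1$ since $k\geq 2$.

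Given the key step, the conclusion is immediate. Since $v\in(\BR_g,\score_{max})$ for every $g$ — this is where niceness is used, so that $\score_{max}$ is one fixed number for all groups — the intermediate value theorem produces, for each $g$, a value $\theta_g\in(\pdf_g(s_k),1)$ with $P_g(\theta_g)=v$; let $\post_{(\thresh,\threshRand)}$ be the threshold post-processor whose per-group parameters are those associated to $\theta_g$. Then $\hard=\post_{(\thresh,\threshRand)}\circ\soft$ has $\PPV_{\hard,g}=v$ for all $g$, equalizing PPV at the prescribed value; it is non-trivial on every group because $\Pr[\hard(X_g)=\pT]=\theta_g\in(0,1)$; and it is not one of the post-processors of Proposition~\ref{prop:trivial-threshold-equalizes}, since $\theta_g>\pdf_g(s_k)$ forces the selected index $j\leq k-1$, so $\thresh(g)=s_j<\score_{max}=\max(\supp(\pdf_g))$ (equivalently, its PPV is $v\neq\score_{max}$). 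Finally, unless the APs $\{\pdf_g\}$ are all identical the values $\theta_g$, and hence the thresholds $\thresh(g)$, differ across groups, so $\post_{(\thresh,\threshRand)}$ is not group blind — which independently re-confirms it is not from Proposition~\ref{prop:trivial-threshold-equalizes}.

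The main obstacle is the key step itself: making the continuity-and-monotonicity analysis of $P_g$ rigorous in the face of the discrete support — in particular, treating the boundary $\theta=\pdf_g(s_k)$ correctly (for $\theta<\pdf_g(s_k)$ one has $P_g\equiv\score_{max}$, which is precisely why $v$ must be taken strictly below $\score_{max}$), checking that the one-sided limits match at every breakpoint, and confirming that ranging $\threshRand(g)$ over $(0,1]$ genuinely makes $P_g$ defined on a whole interval rather than a finite set of points. Everything after that is bookkeeping with Propositions~\ref{prop:ppv-as-cond-exp} and~\ref{prop:baserate-calibrated}.
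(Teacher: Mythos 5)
Your proof is correct and follows essentially the same route as the paper's: both continuously deform a per-group threshold from the trivial max-threshold rule (PPV $=\score_{max}$) down toward the accept-everything rule (PPV $=\BR_g$) and invoke the Intermediate Value Theorem, relying on monotonicity and continuity of PPV exactly as in Claims~\ref{claim:ppv-monotonicity} and~\ref{claim:ppv-continuity}. Your reparametrization by the acceptance mass $\theta=\Pr[\hard(X_g)=\pT]$, with the derivative check inside each segment and the matching of one-sided limits at breakpoints, is a cleaner way of making rigorous the paper's informal $\eps$-deformation of $(\thresh(g),\threshRand(g))$, but it is the same argument.
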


In preparation to proving Proposition \ref{prop:2-thresh-equal-ppv}, we first prove the following claim:

\begin{claim}[Monotonicity of PPV and NPV]
\label{claim:ppv-monotonicity}
\label{claim:npv-monotonicity}
Fix a soft classifier $\soft$ and corresponding AP $\pdf$, as well as a group $g$. Fix group blind threshold post-processors $\post_{\thresh_1, \threshRand_1}$ and $\post_{\thresh_2, \threshRand_2}$ such that either $\thresh_1 < \thresh_2$ or $\thresh_1 = \thresh_2$ and $\threshRand_1 \geq \threshRand_2$. Then:\\
(a) $\PPV_{\post_{\thresh_1, \threshRand_1} \circ \soft, g} \leq \PPV_{\post_{\thresh_2, \threshRand_2} \circ \soft, g}$\\
(b) $\NPV_{\post_{\thresh_1, \threshRand_1} \circ \soft, g} \leq \NPV_{\post_{\thresh_2, \threshRand_2} \circ \soft, g}$
\end{claim}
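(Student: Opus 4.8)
The plan is to reduce both inequalities to a single stochastic‑dominance principle about conditional accuracy profiles under a shift of the threshold, and then read the bounds off the characterization of $\PPV$ and $\NPV$ in Proposition~\ref{prop:ppv-as-cond-exp}. Fix the group $g$. For a threshold post-processor $\post_{(\thresh,\threshRand)}$, write $\pdf^{+}_{(\thresh,\threshRand)}$ for the distribution of $\soft(X_g)$ conditioned on $\post_{(\thresh,\threshRand)}(\soft(X_g),g)=\pT$ and $\pdf^{-}_{(\thresh,\threshRand)}$ for the distribution conditioned on the output being $\pF$; concretely, $\pdf^{+}_{(\thresh,\threshRand)}$ is $\pdf_g$ restricted to scores strictly above $\thresh$, together with a $\threshRand$-fraction of the atom of $\pdf_g$ at $\thresh$, renormalized, and $\pdf^{-}_{(\thresh,\threshRand)}$ is the complementary piece (scores strictly below $\thresh$, plus a $(1-\threshRand)$-fraction of the atom at $\thresh$, renormalized). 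By Proposition~\ref{prop:ppv-as-cond-exp}, $\PPV_{\post_{(\thresh,\threshRand)}\circ\soft,g}=\E_{\score\sam\pdf^{+}_{(\thresh,\threshRand)}}[\score]$ and $\NPV_{\post_{(\thresh,\threshRand)}\circ\soft,g}=1-\E_{\score\sam\pdf^{-}_{(\thresh,\threshRand)}}[\score]$, so it suffices to track how these two conditional means change as the parameters move from $(\thresh_1,\threshRand_1)$ to $(\thresh_2,\threshRand_2)$.

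The core claim is that, under the hypothesis ($\thresh_1<\thresh_2$, or $\thresh_1=\thresh_2$ and $\threshRand_1\ge\threshRand_2$), passing from parameters $1$ to parameters $2$ removes probability mass only from the low end of the positively-predicted profile — the scores lying in $[\thresh_1,\thresh_2]$, with the atoms at the two thresholds apportioned by $\threshRand_1$ and $\threshRand_2$ — so $\pdf^{+}_{(\thresh_2,\threshRand_2)}$ is $\pdf^{+}_{(\thresh_1,\threshRand_1)}$ conditioned on an upper set of scores, and hence stochastically dominates it; dually, passing from parameters $1$ to parameters $2$ adjoins mass to the negatively-predicted profile only at scores that are at least as large as everything already in the support of $\pdf^{-}_{(\thresh_1,\threshRand_1)}$, so the two negatively-predicted profiles are stochastically ordered as well. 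Since stochastic dominance implies ordering of expectations, the first relation gives $\E_{\pdf^{+}_{(\thresh_1,\threshRand_1)}}[\score]\le\E_{\pdf^{+}_{(\thresh_2,\threshRand_2)}}[\score]$, which is part (a) — this is exactly the non-strict version of the computation behind Lemma~\ref{lem:ppv-thresh}, now comparing the conditional profiles of a single group under two thresholds rather than of two groups under one threshold — and running the same argument on the $\pdf^{-}$'s and feeding it through the identity $\NPV=1-\E_{\pdf^{-}}[\score]$ controls the variation of $\NPV$ with the threshold, giving part (b).

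The step I expect to be the main obstacle is making the stochastic-dominance claim rigorous when the randomization parameters lie strictly inside $(0,1)$: then the cut between predicted-$\pT$ and predicted-$\pF$ falls inside an atom of $\pdf_g$ and is split across the two outcomes, so the conditioning is not literally on an up-set of scores. I would handle this exactly as in the proof of Proposition~\ref{prop:single-threshold-cannot-equalize}: write each $\pdf^{\pm}_{(\thresh,\threshRand)}$ as the convex combination, with weights $\threshRand$ and $1-\threshRand$, of the two deterministic conditional profiles obtained by setting $\threshRand=1$ and $\threshRand=0$; verify the stochastic ordering for those endpoint profiles, where the conditioning really is on an up-set (resp.\ down-set) of scores and transitivity of stochastic dominance absorbs a simultaneous move in $\thresh$ and in $\threshRand$ (chaining through the relevant endpoint profile); and then use that stochastic dominance is preserved under forming the same convex combination on both sides. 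The remaining points to check — that the endpoint profiles are genuinely nested as $\thresh$ and $\threshRand$ vary in the stated way, and that none of the conditional profiles invoked is degenerate (which is where non-triviality of the relevant post-processors enters, so that the conditional expectations are defined) — are routine.
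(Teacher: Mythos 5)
For part~(a) your argument is essentially the paper's: identify $\PPV$ with the mean of the conditional AP on the positively-classified scores via Proposition~\ref{prop:ppv-as-cond-exp}, observe that moving from $(\thresh_1,\threshRand_1)$ to $(\thresh_2,\threshRand_2)$ only removes mass from the bottom of that conditional AP, and conclude by stochastic dominance. The paper merely asserts the dominance; your use of the convex-combination decomposition from the proof of Proposition~\ref{prop:single-threshold-cannot-equalize} to handle the split atom is the right way to make that assertion rigorous, so part~(a) is fine.

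Part~(b) is where your write-up goes wrong. Your dual argument shows that passing from $(\thresh_1,\threshRand_1)$ to $(\thresh_2,\threshRand_2)$ \emph{adds} mass at the top of the negatively-classified conditional AP, so $\pdf^{-}_{(\thresh_2,\threshRand_2)}$ stochastically dominates $\pdf^{-}_{(\thresh_1,\threshRand_1)}$ and hence $\E_{\score\sam\pdf^{-}_{(\thresh_2,\threshRand_2)}}[\score]\ge\E_{\score\sam\pdf^{-}_{(\thresh_1,\threshRand_1)}}[\score]$. Feeding this through $\NPV=1-\E_{\pdf^{-}}[\score]$ yields $\NPV_{\post_{\thresh_1,\threshRand_1}\circ\soft,g}\ \ge\ \NPV_{\post_{\thresh_2,\threshRand_2}\circ\soft,g}$, which is the \emph{reverse} of inequality~(b) as stated; you nevertheless assert that it ``gives part~(b).'' A two-point sanity check confirms the reversal: with scores $\{0.1,0.9\}$ each of mass $1/2$, a threshold at $0.5$ gives $\NPV=0.9$ while a threshold at $0.95$ gives $\NPV=0.5$. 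In fact the inequality in part~(b) appears to be misstated in the claim itself --- every downstream invocation (e.g., ``NPV monotonically increases as $\tau_0(g)$ decreases'' in the proof of Proposition~\ref{prop:2n-thresholds-deferrals}, and the NPV step in Case~1 of the proof of Proposition~\ref{prop:ppv-npv-impossibility}) uses the direction your argument actually produces. So your mathematics is sound and matches the intended content, but a correct answer needed either to derive the stated inequality or to flag that it cannot hold as written; silently concluding ``part~(b)'' from a derivation of its converse is a genuine error.
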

\begin{proof}
We show conclusion (a); conclusion (b) is shown analogously. Define $\pdf_{g, \thresh_1, \threshRand_1}$ to be the conditional PMF on scores $\geq \thresh_1$ that results from starting with the AP $\pdf_{g}$ over scores in group $g$ and conditioning on the scores that $\post_{\thresh_1, \threshRand_1}$ sends to 1, and let $\pdf_{g, \thresh_2, \threshRand_2}$ be defined similarly (but for the threshold post-processor $\post_{\thresh_2, \threshRand_2}$).

We claim that $\pdf_{g, \thresh_2, \threshRand_2}$ stochastically dominates $\pdf_{g, \thresh_1, \threshRand_1}$, which yields the desired result by the characterization of PPV given in Proposition~\ref{prop:ppv-as-cond-exp} (and more explicitly written in Equations~\ref{eq:ppv-thresh-1} and~\ref{eq:ppv-thresh-2}).
\end{proof}

\begin{proof}[Proof of Proposition \ref{prop:2-thresh-equal-ppv}]
Fix a soft classifier $\soft$ with a nice AP $\pdf$ that is group-wise calibrated over $g_1, \ldots, g_n$, and fix a desired value $v \in (\max_i \BR_{g_i}, \score_{max})$. We will show that we can design a threshold post-processor $(\thresh, \threshRand)$ such that $\PPV_{g, \post_{(\thresh, \threshRand)} \circ \soft} = v$ for all groups $g$.

Fix an arbitrary group $g_j$. We proceed via a continuity argument to show that we can tune the threshold on $g_j$ to achieve PPV equal to $v$. The maximum possible value for $\PPV_{g_j, \post_{(\thresh, \threshRand)}}$ is $\score_{max}$ (achieved when $\thresh = \score_{max}$, by Claim~\ref{claim:ppv-monotonicity}), where $\score_{max}$ is the largest score in the support, as defined in the proposition statement\footnote{We ignore the trivial post-processor that never maps anything to 1, and hence leaves the PPV undefined.}.

Furthermore, note that, for any group, a lower bound on the PPV of a hard classifier on that group is the base rate of the group, where the lower bound is matched by the trivial post-processor that sends every score to 1. This follows from Claim~\ref{claim:ppv-monotonicity}.

We now claim that there is a setting of $\thresh(g_j)$ and $\threshRand(g_j)$ that achieves $\PPV_{g, \post_{(\thresh, \threshRand)} \circ \soft} = v$. We accomplish this by showing that there is a way to change
$(\thresh(g_j), \threshRand(g_j))$ such that the PPV decreases continuously. We first show:

\begin{claim}[Continuity of PPV]
\label{claim:ppv-continuity}
Fix a soft classifier $\soft$ and corresponding AP $\pdf$, as well as a group $g$. Suppose we have two post-processing algorithms, $\post_1$ and $\post_2$. Let $\pdf_{g, \post_1}$ be the expected conditional AP that results from starting with the AP $\pdf_{g}$ over scores in group $g$ and conditioning on the scores that $\post_1$ sends to 1, and define $\pdf_{g, \post_2}$ similarly. If $d_{TV}(\pdf_{g, \post_1}, \pdf_{g, \post_2}) < \eps$, then $|\PPV_{g, \post_1 \circ \soft} - \PPV_{g, \post_2 \circ \soft}| < O(\eps)$. Or in words, if the distance between the conditional APs is small, then the difference in PPV is small.
\end{claim}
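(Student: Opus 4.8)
The plan is to reduce this to the elementary fact that the mean of a distribution supported on $[0,1]$ is $1$-Lipschitz with respect to total variation distance, using the characterization of PPV as a conditional expectation from Proposition~\ref{prop:ppv-as-cond-exp}.

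First I would note that for the statement to even make sense, both $\post_1$ and $\post_2$ must send a positive-probability set of scores on group $g$ to $1$ (otherwise the conditional AP $\pdf_{g,\post_i}$, and with it $\PPV_{g,\post_i\circ\soft}$, is undefined); equivalently $\post_i\circ\soft$ is non-trivial on $g$. Under this assumption, Proposition~\ref{prop:ppv-as-cond-exp} gives
\[
\PPV_{g, \post_i \circ \soft} \;=\; \E\bigl[\soft(X_g) \mid \post_i(\soft(X_g), g) = 1\bigr] \;=\; \E_{\score \sam \pdf_{g, \post_i}}[\score]
\]
for $i \in \{1,2\}$, since by definition $\pdf_{g,\post_i}$ is exactly the distribution of $\soft(X_g)$ conditioned on the event $\post_i(\soft(X_g),g)=1$ (averaging over the randomness of $\post_i$), and it is supported on the finite set $\supp(\pdf_g)\subseteq[0,1]$. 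Hence $\bigl|\PPV_{g,\post_1\circ\soft} - \PPV_{g,\post_2\circ\soft}\bigr|$ equals the gap between the means of $\pdf_{g,\post_1}$ and $\pdf_{g,\post_2}$.

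Next I would prove the general Lipschitz bound. For any two distributions $P,Q$ on a finite set $S\subseteq[0,1]$, write $\Delta(\score)=P(\score)-Q(\score)$; then $\sum_{\score\in S}\Delta(\score)=0$, so for any constant $c$ we have $\E_P[\score]-\E_Q[\score]=\sum_{\score}(\score-c)\Delta(\score)$. Taking $c=\tfrac12$ makes $|\score-c|\le\tfrac12$ on $S$, so
\[
\bigl|\E_P[\score] - \E_Q[\score]\bigr| \;\le\; \tfrac12\sum_{\score\in S}|\Delta(\score)| \;=\; d_{TV}(P,Q)
\]
(more generally the bound is $(\max S-\min S)\cdot d_{TV}(P,Q)\le d_{TV}(P,Q)$). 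Applying this with $P=\pdf_{g,\post_1}$, $Q=\pdf_{g,\post_2}$ and the hypothesis $d_{TV}(\pdf_{g,\post_1},\pdf_{g,\post_2})<\eps$ yields $\bigl|\PPV_{g,\post_1\circ\soft} - \PPV_{g,\post_2\circ\soft}\bigr|<\eps$, which is the asserted $O(\eps)$ bound (indeed with hidden constant $1$).

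There is essentially no obstacle here: the content is the PPV characterization already established, plus a one-line estimate. The only points needing care are the well-definedness caveat above and fixing the normalization convention for $d_{TV}$ (with $d_{TV}=\tfrac12\|\cdot\|_1$ the constant is $1$; the width of the interval containing the supports, at most $1$, is all one ever loses). If one prefers, the same bound follows from the coupling characterization of total variation distance: couple $\pdf_{g,\post_1}$ and $\pdf_{g,\post_2}$ so the two scores agree except on an event of probability $d_{TV}<\eps$, on which they differ by at most $1$, so the means differ by at most $\eps$.
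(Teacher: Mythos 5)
Your proposal is correct and follows essentially the same route as the paper: both invoke Proposition~\ref{prop:ppv-as-cond-exp} to identify $\PPV_{g,\post_i\circ\soft}$ with the mean of the conditional AP $\pdf_{g,\post_i}$, and then bound the difference of means of two distributions on $[0,1]$ by their total variation distance (the paper does this by summing over the set where $\pdf_{g,\post_1}>\pdf_{g,\post_2}$, you by centering at $\tfrac12$ or by coupling --- the same one-line estimate). Your explicit remarks on well-definedness and the $d_{TV}$ normalization are sensible but not a substantive departure.
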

\begin{proof}
Recall the characterization of PPV given in Proposition~\ref{prop:ppv-as-cond-exp} (and more explicitly written in Equation~\ref{eq:ppv-thresh-1}). This tells us that the PPV of group $g$ for the classifier $\post_1 \circ \soft$ is exactly the expectation of a random variable distributed according to $\pdf_{g, \post_1}$. Similarly, the PPV of group $g$ for the classifier $\post_2 \circ \soft$ is the expectation of a r.v. distributed according to $\pdf_{g, \post_2}$. Since both $\pdf_{g, \post_1}$ and $\pdf_{g, \post_2}$ have support bounded between 0 and 1, their expectations can differ by at most $\eps$, from which the claim follows. For completeness, we prove this below.

Suppose wlog that $\pdf_{g, \post_1}$ has the larger expectation. Let $S = \{ \score \in \supp(\pdf_g): \pdf_{g, \post_1}(\score) > \pdf_{g, \post_2}(\score)\}$. Then:
\begin{align*}
\PPV_{g, \post_1 \circ \soft} &= \sum_{\score \in \supp(\pdf_g)} \score \pdf_{g, \post_1}(\score) \\
&= \PPV_{g, \post_2 \circ \soft} + \sum_{\score \in S} \score (\pdf_{g, \post_1}(\score) - \pdf_{g, \post_2}(\score)) \\
&< \PPV_{g, \post_2 \circ \soft} + \eps
\end{align*}
where in the second line we use the fact that $\PPV_{g, \post_2 \circ \soft}$ is the expectation of $\pdf_{g, \post_2}$, and in the last line we use the fact that $\score \in [0,1]$ and that the TV-distance between the two distributions is less than $\eps$.
\end{proof}

Now, consider the following way to change $(\thresh(g_j), \threshRand(g_j))$. Fix $\eps > 0$, and an initial setting
for $(\thresh(g_j), \threshRand(g_j))$ s.t. $\thresh(g_j)$ is not the smallest item in the support or $\threshRand(g_j) > \eps$. Reduce $\threshRand(g_j)$ by $\eps$, wrapping around on the interval $(0,1]$ and decreasing $\thresh(g_j)$ to the next largest item in the support when this would otherwise make $\threshRand(g_j)$ negative.

This very minor transformation to the threshold changes the AP conditional on outputting 1 very slightly - so slightly that the TV distance between the old conditional AP and the new AP is at most some $\eps'$ which is a function of $\eps$. This lets us apply Claim~\ref{claim:ppv-continuity} to show that the PPV changes by at most a function of $\eps$. So as we take $\eps$ going towards 0, this shows that the PPV changes by an amount going towards 0. This establishes that the PPV changes ``continuously'' with respect to this deforming procedure.

By Claim~\ref{claim:ppv-monotonicity}, we have that the above deforming procedure can only decrease the PPV. Therefore, we can continuously decrease the PPV, starting from $\score_{max}$, by continuously deforming the threshold post-processor with the method above. Note that $\score_{\max} > v > \max_i \BR_{g_i} \geq \BR_{g_j}$. By the Intermediate Value Theorem, there must be a setting of $(\thresh(g_j), \threshRand(g_j))$ such that $\PPV_{g_j, \post_{(\thresh, \threshRand)} \circ \soft} = v$.

\end{proof}

We assert the analogous statement
for the case of NPV in Claim~\ref{claim:npv-continuity}. The corresponding statement for the case of soft classifiers with infinite range is asserted in Proposition~\ref{prop:equalize-ppv-cont}.

\subsubsection{The Limitations of Thresholding}
\label{sec:limits-of-thresholds}
While Proposition~\ref{prop:2-thresh-equal-ppv} shows that a threshold post-processor can equalize the PPV across
$n$ groups, this threshold post-processor can be unsatisfying from a social justice standpoint. Consider an example
with two groups $g_1$ and $g_2$, where group $g_2$ is ``privileged'' by having a higher base rate of, say, credit
worthiness. Suppose that we have an AP that is decreasing with respect to score on group $g_1$, and increasing with
respect to score on group $g_2$. This is illustrated in Example \ref{ex:social-unsat} and
Figure~\ref{fig:undesirable-results}. This means that a group blind threshold post-processor yields larger PPV on $g_2$, since large scores are given more weight in $g_2$. So, to equalize the PPV between the two groups, we will classify more low scores as positive in $g_2$ than $g_1$. This effectively means that our threshold on group $g_2$ is \emph{more lenient} than our threshold on $g_1$, which seems blatantly unfair, since $g_2$ is the privileged group in the first place! 

\begin{example}[Socially Unsatisfying Example]
\label{ex:social-unsat}
Fix groups $g_1$ and $g_2$, and we fix the AP of the soft classifier $\soft$ as follows. Let $\supp(\pdf_{g_i})$ be
$\{0, 0.01, 0.02, \ldots, 1)$ for $i=1$ and $2$, let $\pdf_{g_1}(\score) \propto a-\score$ for appropriately selected
constant $a> 0$ and let $\pdf_{g_2}(\score) \propto \score$.  Group $g_2$ has a higher base rate and may have social
advantages over group $g_1$.

Let $\post_{(\thresh, \threshRand)}$ be a non-trivial post-processor.
If $\post_{(\thresh, \threshRand)}$ were group blind, then by Lemma~\ref{lem:ppv-thresh}, since
$\pdf_{g_2, \post_{(\thresh, \threshRand)}}$ stochastically dominates $\pdf_{g_1, \post_{(\thresh, \threshRand)}}$,
the PPV on $g_2$ must be larger than the PPV of $g_1$.

To equalize PPV, by Claim~\ref{claim:ppv-monotonicity}, we must have either $\thresh(g_2) < \thresh(g_1)$, or $\thresh(g_2) =
\thresh(g_1)$ and $\threshRand(g_2) < \threshRand(g_1)$.
The disadvantaged group is now held to a \emph{higher} standard than the privileged group to maintain equality of PPV.
Figure~\ref{fig:undesirable-results} illustrates example thresholds that equalize the PPV.
\end{example}
\begin{figure}
    \centering
    \includegraphics[width=0.5\hsize]{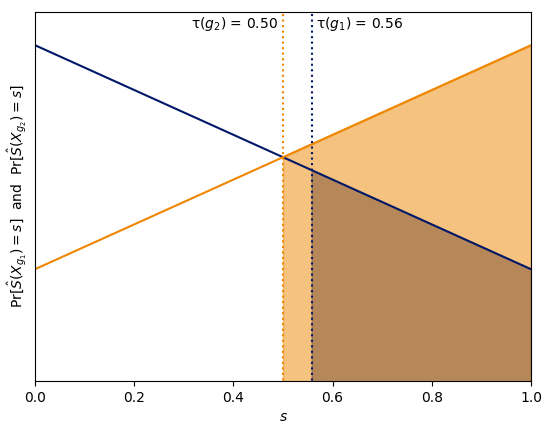}
    \caption{Accompanying Example~\ref{ex:social-unsat}, the PPV for both groups is 0.77.  However, the threshold for
        $g_1$ (dark blue) is \emph{higher} than
    the threshold for $g_2$ (orange), even though $g_2$ is likely the more privileged group.}
    \label{fig:undesirable-results}
\end{figure}

The only property needed for Example~\ref{ex:social-unsat} is that the AP of one (privileged) group stochastically dominates the AP of another (unprivileged) group.
We suspect that this to will occur in many settings. Indeed, the COMPAS scores we analyze in Section~\ref{sec:experiments} have this property, when considering Caucasians as the privileged group and African-Americans the  unprivileged group. As in the above example, using group-aware thresholds to equalize PPV between the groups on COMPAS data results in a more permissive decision rule for Caucasians, demonstrating a problem with this approach (see also Figure~\ref{fig:compas-thresholds}).

Furthermore, thresholding cannot in general equalize both PPV and NPV simultaneously, even for nice APs and using non-group blind thresholds.

\begin{proposition}
\label{prop:ppv-npv-impossibility}
Fix groups $g_1$ and $g_2$. There exists a soft classifier $\soft$ with a nice AP~$\pdf$ such that no threshold post-processor can simultaneously equalize PPV and NPV between groups $g_1$ and $g_2$.
\end{proposition}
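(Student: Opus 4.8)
The plan is to exhibit one concrete nice AP on a small support and then rule out \emph{every} threshold post-processor by a short case analysis, using the characterization of \PPV\ and \NPV\ from Proposition~\ref{prop:ppv-as-cond-exp} together with the monotonicity of Claim~\ref{claim:ppv-monotonicity}. First I would fix a three–point support $S=\{s_1,s_2,s_3\}$ with $0<s_1<s_2<s_3<1$ (say $\{\tfrac14,\tfrac12,\tfrac34\}$) and choose probability mass functions $\pdf_{g_1},\pdf_{g_2}$ each supported \emph{exactly} on $S$, so $\pdf$ is nice by Definition~\ref{def:nice-docs}, but with $\BR_{g_1}\neq\BR_{g_2}$ and with $\pdf_{g_1}$ concentrated near $s_1$ while $\pdf_{g_2}$ is concentrated near $s_3$ (a strong stochastic–dominance gap). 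The classifier $\soft$ is any calibrated classifier realizing this AP.

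Second, I would translate the goal into a statement about conditional means of two explicit discrete distributions. By Proposition~\ref{prop:ppv-as-cond-exp}, for a threshold post-processor $\post_{(\thresh,\threshRand)}$ and $\hard=\post_{(\thresh,\threshRand)}\circ\soft$, $\PPV_{\hard,g}$ is the mean of $\pdf_g$ conditioned on the (deterministically described) set of scores the post-processor sends to $\pT$, and $1-\NPV_{\hard,g}$ is the mean of $\pdf_g$ conditioned on the set sent to $\pF$. It is also convenient to record the identity $\BR_g=\Pr[\hard(X_g)=\pT]\cdot\PPV_{\hard,g}+\Pr[\hard(X_g)=\pF]\cdot(1-\NPV_{\hard,g})$, which follows from Proposition~\ref{prop:ppv-as-cond-exp} and the law of total expectation; together with $\BR_{g_1}\neq\BR_{g_2}$ this already forces any simultaneous equalizer to send different fractions of the two groups to $\pT$.

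Third, I would carry out the case analysis. With $|S|=3$ there are only a handful of regimes for where $\thresh(g_1),\thresh(g_2)$ sit relative to $s_1,s_2,s_3$, and by Claim~\ref{claim:ppv-monotonicity} the common value $P$ at which \PPV\ is equalized, together with the AP, pins down the threshold on each group essentially uniquely — the one exception being $P=s_3$, where the $\pT$-side is a sub-mass of the top score and $\threshRand$ carries an extra degree of freedom. So: fix $P$ in the (nonempty, by Proposition~\ref{prop:2-thresh-equal-ppv}) interval where \PPV\ can be equalized on both groups, express the two resulting per-group thresholds, compute the two $\pF$-conditional means from $\pdf_{g_1},\pdf_{g_2}$, and show they differ for every such $P$, treating the boundary regime $P=s_3$ separately. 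Concentrating the two PMFs at opposite ends of $S$ is meant to turn the two maps $P\mapsto\NPV_{\hard,g_i}$ (restricted to \PPV-equalizing thresholds) into simple monotone one-parameter curves that one checks never meet.

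The main obstacle is exactly this last step: one must choose $\pdf_{g_1}$ and $\pdf_{g_2}$ so that the two $\NPV$-versus-$P$ curves are disjoint over the \emph{entire} admissible range of $P$ — not merely different at one value — and so that disjointness survives the $P=s_3$ regime, where the extra $\threshRand$ freedom enlarges the set of achievable NPVs. If a three-point support proves too rigid to separate these curves globally (in particular if forcing the ``$P$ at the top score'' and ``$N$ at the bottom score'' cases to fail inadvertently makes the interior cases collide), the natural fallback is to enlarge $S$, adding extra mass points clustered near $s_1$ for $g_1$ and near $s_3$ for $g_2$, which gives more slack to push the two curves apart at the cost of a longer but still finite case analysis.
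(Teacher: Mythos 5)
Your skeleton---exhibit one nice AP, rewrite \PPV\ and \NPV\ as conditional means via Proposition~\ref{prop:ppv-as-cond-exp}, and case-analyze thresholds using the monotonicity of Claim~\ref{claim:ppv-monotonicity}---matches the paper's, but the decisive step is missing: you never actually produce an AP for which the two curves $P\mapsto\NPV_{\hard,g_i}$ (restricted to \PPV-equalizing thresholds) are provably disjoint over the whole admissible range, and you flag this yourself as the ``main obstacle,'' offering only a fallback of enlarging the support if the three-point attempt fails. For an existence statement that verification \emph{is} the proof. Moreover, the configuration you propose---unequal base rates, $\pdf_{g_1}$ concentrated near the bottom of $S$ and $\pdf_{g_2}$ near the top---points in an unpromising direction. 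The identity $\BR_g=\theta_g\cdot\PPV_{\hard,g}+(1-\theta_g)\cdot(1-\NPV_{\hard,g})$ with $\BR_{g_1}\neq\BR_{g_2}$ only forces $\theta_{g_1}\neq\theta_{g_2}$, which group-aware thresholds accommodate for free, so it creates no tension by itself; and your strict stochastic-dominance setup is exactly the configuration of Example~\ref{ex:social-unsat}, where the paper shows \PPV\ \emph{can} be equalized and makes no claim that \NPV\ must then fail.

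The paper's trick is the opposite choice: \emph{equal} base rates with different spread. It takes $\supp(\pdf_{g_i})=\{0.1,0.2,\ldots,0.9\}$ for both groups, $\pdf_{g_1}$ uniform and $\pdf_{g_2}(\score)=-a(\score-1/2)^2+b$; both are symmetric about $1/2$, so $\BR_{g_1}=\BR_{g_2}=1/2$, and now Claim~\ref{claim:baserate-convex-comb} genuinely bites. For any group-blind threshold with $\thresh\geq 1/2$, the conditional upper tail of $\pdf_{g_1}$ strictly dominates that of $\pdf_{g_2}$, so by Lemma~\ref{lem:ppv-thresh} the \PPV\ is lower on $g_2$; combined with the equal base rates the paper concludes the \NPV\ is also lower on $g_2$, and equalizing \PPV\ then requires pushing the $g_2$ threshold further to the right, which by Claim~\ref{claim:npv-monotonicity} only pushes $\NPV_{\hard,g_2}$ further down. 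The case $\thresh(g_1)<1/2$ is symmetric with the roles of \PPV\ and \NPV\ exchanged. This collapses your open-ended curve-tracing into two short cases with a determinate sign at every step. If you want to salvage your own route, the fix is to impose equal base rates (or some comparable symmetry) so that the direction in which \PPV-equalization forces the two thresholds apart provably drives the two \NPV\ values apart as well, rather than hoping a generic choice of masses keeps the curves separated.
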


Before proving the statement, we first show that the base rates of $g_1$ and $g_2$ can be written as convex combinations of the $\PPV$ and $1 - \NPV$ on the respective groups:

\begin{claim}
\label{claim:baserate-convex-comb}
Fix any group $g \in \G$, and let the hard classifier $\hard$ be non-trivial.  Then the base rate of $g$ can be written as a convex combination of $\PPV_{g, \hard}$ and $1 - \NPV_{g, \hard}$
\end{claim}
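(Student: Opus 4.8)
The plan is to exhibit the convex combination explicitly by taking the mixing weight to be $\lambda = \Pr[\hard(X_g) = \pT]$, and then verify the identity with a single application of the law of total probability. First I would record the consequences of non-triviality: since $\hard$ is non-trivial on $g$, both $\Pr[\hard(X_g)=\pT]$ and $\Pr[\hard(X_g)=\pF] = 1-\Pr[\hard(X_g)=\pT]$ lie in $(0,1)$. This simultaneously guarantees that $\PPV_{g,\hard}$ and $\NPV_{g,\hard}$ are well defined (we condition only on positive-probability events) and that $\lambda \in (0,1)$, so that the combination below is genuinely convex.

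Next, I would condition on the value of the hard prediction:
\[ \BR_g = \Pr[\ty(X_g)=\tT] = \Pr[\ty(X_g)=\tT \mid \hard(X_g)=\pT]\,\Pr[\hard(X_g)=\pT] + \Pr[\ty(X_g)=\tT \mid \hard(X_g)=\pF]\,\Pr[\hard(X_g)=\pF]. \]
By Definition~\ref{defn:four-measures}, the first conditional probability is exactly $\PPV_{g,\hard}$, and the second equals $1-\Pr[\ty(X_g)=\tF \mid \hard(X_g)=\pF] = 1-\NPV_{g,\hard}$. Substituting, and writing $\lambda = \Pr[\hard(X_g)=\pT]$, gives $\BR_g = \lambda\,\PPV_{g,\hard} + (1-\lambda)\,(1-\NPV_{g,\hard})$, which is the claimed convex combination.

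There is essentially no technical obstacle here; the only point requiring care is the appeal to non-triviality, which is exactly what makes the two predictive values meaningful and the weight $\lambda$ strictly between $0$ and $1$. The intended use of this claim is downstream: in the proof of Proposition~\ref{prop:ppv-npv-impossibility}, writing $\BR_{g_1}$ and $\BR_{g_2}$ as convex combinations of the (now equalized) quantities $\PPV$ and $1-\NPV$ forces the mixing weights $\Pr[\hard(X_{g_i})=\pT]$ to take prescribed, group-dependent values when $\BR_{g_1}\neq\BR_{g_2}$; choosing a nice AP on which no threshold post-processor can realize both required weights simultaneously then yields the impossibility.
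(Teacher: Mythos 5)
Your proof is correct and is essentially identical to the paper's: both apply the law of total probability conditioning on $\hard(X_g)$, identify the two conditional probabilities as $\PPV_{g,\hard}$ and $1-\NPV_{g,\hard}$, and take the mixing weight to be $\Pr[\hard(X_g)=\pT]$. Your explicit remark that non-triviality guarantees the conditioning events have positive probability and that the weight lies in $(0,1)$ is a small but welcome addition the paper leaves implicit.
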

\begin{proof}
\begin{align*}
\BR_g &= \Pr[Y(X_g) = 1] \\
&= \Pr[Y(X_g) = 1 \mid \hard(X_g) = 1] \Pr[\hard(X_g) = 1] \\
& \quad + \Pr[Y(X_g) = 1 \mid \hard(X_g) = 0] \Pr[\hard(X_g) = 0] \\
&= \PPV_{g, \hard} \cdot \theta + (1 - \NPV_{g, \hard}) \cdot (1 - \theta)
\end{align*}
where $\theta := \Pr[\hard(X_g) = 1]$.
\end{proof}

A simple intuition for the proof of Proposition~\ref{prop:ppv-npv-impossibility} is as follows. Suppose we have two
groups, and the soft classifier is almost perfect on one group - for all but a small fraction of people, it gives the
correct binary score, and gives the remaining people score 0.5. On the other group, it is the opposite - almost every
person is given score 0.5, and there are only a few people given their ground truth score. The corresponding APs of
each group have equal supports, and therefore are ``nice.'' However, it is clear that any threshold post-processor on the first group will have extremely high PPV and NPV, while any threshold post-processor on the second group will have to make decision on where to round the people in the 0.5 bucket, and will correspondingly either have low PPV or NPV.

This example is somewhat unsatisfying, as the AP satisfies niceness by a technicality (for example, it is extremely close to a AP with disjoint supports
between groups). The proof generalizes the above example to a case where scores for one group are slightly more
``correlated'' with the ground truth labels than scores for the other group.
Appendix~\ref{sec:appendix-cont} shows the corresponding result when the range of the soft classifier is infinite.

\begin{proof}[Proof of Proposition~\ref{prop:ppv-npv-impossibility}]
Let $\supp(\pdf_{g_i})$ be $\{ 0.1, 0.2, \ldots, 0.9\}$ for $i=1$ and $2$. Let $\pdf_{g_1}$ be uniform over scores, and let $\pdf_{g_2}(\score) = -a(\score - 1/2)^2 + b$ for some appropriately chosen constants $a, b \geq 0$ such that $\pdf_{g_2}$ is a valid probability distribution.

We claim that there is no threshold post-processor that can equalize PPV and NPV simultaneously for these two groups. First, note that the base rates for the two groups are equal to $1/2$ by design, due to the symmetric nature of the AP on each group. Just like in the proof of Proposition~\ref{prop:single-threshold-cannot-equalize}, we use the notation $\pdf_{g_1, \geq (\thresh, \threshRand)}$ to denote the conditional AP supported on scores $\geq \thresh$ that results from starting with the AP $\pdf_{g_1}$ over all scores in group $g_1$ and conditioning on the scores that $\post_{(\thresh, \threshRand)}$ sends to 1, and define $\pdf_{g_2, \geq (\thresh, \threshRand)}$ similarly. Let $\pdf_{g_i, \leq (\thresh, \threshRand)}$ denote the conditional AP starting from group $g_i$ and conditioning on the scores that $\post_{(\thresh, \threshRand)}$ sends to 0.

We proceed by a case analysis on the location of the threshold for group $g_1$ - that is, on $\thresh(g_1)$.

\paragraph{\textbf{Case 1: $\thresh(g_1) \geq 1/2$.}}
First, suppose additionally that $\post_{(\thresh, \threshRand)}$ is group blind over $g_1, g_2$, then $\pdf_{g_2, (\thresh, \threshRand)}$ is strictly stochastically dominated by $\pdf_{g_1, (\thresh, \threshRand)}$ so the PPV is lower on $g_2$ than $g_1$ (Lemma~\ref{lem:ppv-thresh}). Therefore, to equalize the PPV between the two groups, the threshold on group $g_2$ must be to the ``right'' of the threshold on $g_1$ (that is, either $\thresh(g_2) > \thresh(g_1)$ or $\thresh(g_2) = \thresh(g_1)$ and $\threshRand(g_2) < \threshRand(g_1)$), which follows from Claim~\ref{claim:ppv-monotonicity}.

However, we claim that this setting of thresholds makes the NPV on group $g_2$ lower than the NPV on group $g_1$.
It suffices to show that, for any constant threshold post-processor classifier $\post_{(\thresh, \threshRand)}$ with $\thresh(g_1) = \thresh(g_2) \geq 1/2$, the NPV on group $g_2$ is lower than the NPV on group $g_1$. This implies that the same statement holds when either $\thresh(g_2) > \thresh(g_1)$ or $\thresh(g_2) = \thresh(g_1)$ and $\threshRand(g_2) < \threshRand(g_1)$ as well, due to the monotonicity property of NPV (Claim~\ref{claim:npv-monotonicity}).
This suffices to prove Proposition~\ref{prop:ppv-npv-impossibility} for the case when $\thresh(g_1) \geq 1/2$: we need to set $\thresh(g_2), \threshRand(g_2)$ such that either $\thresh(g_2) > \thresh(g_1)$ or $\thresh(g_2) = \thresh(g_1)$ and $\threshRand(g_2) < \threshRand(g_1)$ in order to equalize PPV, but this leaves the NPV on group $g_2$ lower than the NPV on group $g_1$.

Now we proceed to show that the NPV is lower on $g_2$ when the threshold post-processor is constant. Fix a constant threshold post-processor $\post_{(\thresh, \threshRand)}$ with $\thresh \geq 1/2$. We have already established that this means that $\PPV_{g_2, \post_{(\thresh, \threshRand)} \circ \soft} < \PPV_{g_1, \post_{(\thresh, \threshRand)} \circ \soft}$.

Now, since  the base rates of $g_1$ and $g_2$ are equal, this implies that
\[ 1 - \NPV_{g_2, \post_{(\thresh, \threshRand)} \circ \soft} > 1 - \NPV_{g_1, \post_{(\thresh, \threshRand)} \circ \soft}\]
Rearranging, this shows that $\NPV_{g_2, \post_{(\thresh, \threshRand)} \circ \soft} < \NPV_{g_1, \post_{(\thresh, \threshRand)} \circ \soft}$, finishing the proof of this case.

\paragraph{\textbf{Case 2: $\thresh(g_1) < 1/2$.}}
The argument in this case is symmetrical to the previous case, where we switch the roles of PPV and NPV in the argument. We sketch it for the sake of completeness.

Fix a constant threshold post-processor such that $\thresh(g_1) = \thresh(g_2) < 1/2$. By design, $\pdf_{g_2, \leq (\thresh, \threshRand)}$ strictly stochastically dominates $\pdf_{g_1, \leq (\thresh, \threshRand)}$. Hence, the $\NPV$ on group $g_2$ is smaller than the NPV on group $g_1$ (this follows from an analogous version of Lemma~\ref{lem:ppv-thresh} for NPV instead of PPV). This means that the threshold post-processor cannot be constant to equalize $\NPV$s - it must be moved such that either $\thresh(g_2) < \thresh(g_1)$ or $\thresh(g_2) = \thresh(g_1)$ and $\threshRand(g_2) > \threshRand(g_1)$ (Claim~\ref{claim:npv-monotonicity}).

However, by the same convex combination argument as in the previous case, we get that any such constant threshold post-processor must make the PPV on $g_2$ strictly smaller than the PPV on $g_1$. By the monotonicity of PPV, this means that any threshold post-processor with either $\thresh(g_2) < \thresh(g_1)$ or $\thresh(g_2) = \thresh(g_1)$ and $\threshRand(g_2) > \threshRand(g_1)$ must also make the PPV on $g_2$ smaller than the PPV on $g_1$, finishing the proof.
\end{proof}

\subsection{Equalizing APs}
\label{sec:docs-without-deferrals}

While thresholding is a conceptually simple approach to post-processing a soft classifier, its power is limited. We now consider a very different approach using
soft post-processors to equalize the APs across groups of a soft classifier. The intuition is that if the APs are equal across groups, then any hard post-processor that is \emph{group blind} should result in equal PPV, NPV, FPR, and FNR. We formalize this intuition in Claim~\ref{claim:equalizing-docs-good}.

Let $\soft$ be a soft classifier and for each group $g \in \G$, let $\pdf_g$ be the AP of $\soft$ for group $g$. For a soft post-processor $\postsoft$, let
$\soft' = \postsoft \circ \soft$ and let $\pdf'_g$ be the corresponding AP for group $g$.

Our goal is to find a soft post-processor $\postsoft$ such that $\soft'$ is groupwise calibrated, and $\pdf'_g = \pdf'_{g'}$ for all $g,g'\in\G$. In this section, we describe only one approach to constructing $\postsoft$ which we call \emph{mass averaging}.

The approach of equalizing APs has a fundamental weakness: if $\pdf'_g = \pdf'_{g'}$ and both are calibrated, then $\BR_g = \BR_{g'}$. This severely limits applicability of this approach.
However, this limitation will removed in Section~\ref{sec:docs-with-deferrals} by allowing deferrals.

\begin{claim}
  \label{claim:equalizing-docs-good}
  If the APs are equal for two groups, then PPV, NPV, FPR, and FNR are
  equalized by any hard post-processor $\decide$ satisfying group blindness.
\end{claim}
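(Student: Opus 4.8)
The plan is to show that, for a group-blind hard post-processor $\decide$, the entire joint distribution of the pair $(\soft(X_g),\hard(X_g))$ — where $\hard=\decide\circ\soft$ — depends on the group $g$ only through the accuracy profile $\pdf_g$, and then to observe that each of PPV, NPV, FPR, FNR is a function of that joint distribution alone. Once both facts are in hand, equality of the APs on two groups immediately forces equality of all four measures.

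For the first step I would argue as follows. Because $\decide$ is a group-blind randomized function, its output on an input score $\score$ is obtained by applying a fixed, group-independent stochastic map to $\score$ using fresh randomness that is independent of everything else (here I use that the classifiers are memoryless, as in Fact~\ref{fact:conditional-independence}). Consequently the law of $(\soft(X_g),\decide(\soft(X_g),g))$ is exactly the pushforward of $\pdf_g$ through this one fixed map, with no other dependence on $g$. So if $\pdf_g=\pdf_{g'}$, then $(\soft(X_g),\hard(X_g))$ and $(\soft(X_{g'}),\hard(X_{g'}))$ are identically distributed. In particular $\Pr[\hard(X_g)=\pT]=\Pr[\hard(X_{g'})=\pT]$ and $\E[\soft(X_g)]=\E[\soft(X_{g'})]$; by Proposition~\ref{prop:baserate-calibrated} the latter also re-derives the claimed side consequence $\BR_g=\BR_{g'}$.

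For the second step I would invoke the two characterization propositions directly. Proposition~\ref{prop:ppv-as-cond-exp} writes $\PPV_{\hard,g}=\E[\soft(X_g)\mid\hard(X_g)=\pT]$ and $\NPV_{\hard,g}=1-\E[\soft(X_g)\mid\hard(X_g)=\pF]$, which are determined solely by the joint law of $(\soft(X_g),\hard(X_g))$; hence they agree on $g$ and $g'$. Proposition~\ref{prop:fpr-as-cond-exp} expresses $\FPR_{\hard,g}$ and $\FNR_{\hard,g}$ in terms of $\Pr[\hard(X_g)=\pT]$, $\Pr[\hard(X_g)=\pF]$, $\E[\soft(X_g)\mid\hard(X_g)=\pT]$, $\E[\soft(X_g)\mid\hard(X_g)=\pF]$, and $\E[\soft(X_g)]$, each of which was just shown to be group-invariant; hence FPR and FNR also agree on $g$ and $g'$. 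I would also remark that the quantities are well-defined on $g$ exactly when they are on $g'$: non-triviality of $\hard$ on $g$ amounts to $0<\Pr[\hard(X_g)=\pT]<1$, and $0<\BR_g<1$ to $0<\E[\soft(X_g)]<1$, and both conditions depend only on $\pdf_g$ and $\decide$.

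The only step that requires genuine care is the first one — making precise the statement that the joint law of $(\soft(X_g),\hard(X_g))$ factors through $\pdf_g$, which is exactly where memorylessness/group-blindness of $\decide$ is used (and where the analysis would fail for a group-aware post-processor). Everything after that is bookkeeping via Propositions~\ref{prop:ppv-as-cond-exp} and~\ref{prop:fpr-as-cond-exp}, so I do not anticipate a substantive obstacle beyond stating the reduction cleanly.
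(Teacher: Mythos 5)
Your proposal is correct, but it is organized differently from the paper's own argument, and the difference is worth noting. The paper proves the claim by writing $\PPV_{\hard,g}$ directly via Bayes' rule as a product of factors --- $\Pr[\ty(X_g)=1]$, $\Pr[\hard(X_g)=1]$, $\Pr[\soft(X_g)=\score]$, $\Pr[\ty(X_g)=1\mid\soft(X_g)=\score]$, and $\Pr[\hard(X_g)=1\mid\soft(X_g)=\score,\ty(X_g)=1]$ --- and then checking factor by factor that each is group-invariant under the hypotheses (calibration, equal APs, group blindness); the other three measures are asserted to follow ``similarly.'' You instead isolate a single reduction --- that for a group-blind $\decide$ the joint law of $(\soft(X_g),\hard(X_g))$ is the pushforward of $\pdf_g$ through one fixed stochastic map, hence identical across groups with equal APs --- and then read off all four measures at once from the characterizations in Propositions~\ref{prop:ppv-as-cond-exp} and~\ref{prop:fpr-as-cond-exp}, which express PPV, NPV, FPR, and FNR purely in terms of that joint law and $\E[\soft(X_g)]$. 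Your route reuses machinery the paper has already established (which is where calibration enters for you, just as it enters the paper's factor-by-factor check), treats the four measures uniformly rather than leaving three of them to an analogy, and makes explicit the well-definedness bookkeeping (non-triviality and $0<\BR_g<1$ transfer between groups because they too are functions of $\pdf_g$ and $\decide$). The paper's version is more self-contained in that it does not lean on Proposition~\ref{prop:fpr-as-cond-exp}. Both are valid; no gap.
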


The group-blindness requirement in the claim is necessary: consider the (not group blind) post-processor that outputs 0 on one group and 1 on the other;
PPV will not be equalized.

\begin{proof}[Proof of Claim:~\ref{claim:equalizing-docs-good}]
  We prove only that PPV is equalized; the remaining properties may be proved similarly.
  Let $\hard = \decide \circ \soft$ be a hard classifier $\hard$ that is a group blind post-processor $\post$ composed with a calibrated soft
  classifier $\soft$. All probabilities below are over $X_g\sam \X_g$, and the coins of $\soft$ and $\decide$.
  \begin{align*}
     &\PPV_{\hard,g}
     = \Pr[\ty(X_g) = 1 \mid \hard(X_g,g)=1] \\
     &= \frac{\Pr[\ty(X) = 1 ]}{\Pr[\hard(X_g,g) =1]} \\
     & \quad \cdot
     \sum_{\score \in \Supp(\pdf_g)} \biggl( \frac{\Pr[\ty(X_g) = 1 \mid \soft(X_g) = \score]\cdot\Pr[\soft(X_g) =
         \score ]}{\Pr[\ty(X_g) = 1]}\\
     &\quad\quad\quad\quad\quad \cdot \Pr[\hard(X_g,g) = 1 \mid \soft(X_g) = \score, \ty(X_g) = 1]\biggr)
  \end{align*}
  Each factor in this product is equal across groups by the assumptions. Namely, $\Pr[\ty(X_g) = 1]$ is equalized by calibration and equalized APs;
  $\Pr[\hard(X_g,g) =1 ]$ by group blindness and equalized APs;
  $\Pr[\hard(X_g, g) = 1\mid \soft(X_g) = \score, \ty(X_g) = 1]$ by group blindness;
  $\Pr[\ty(X_g) = 1\mid \soft(X_g) = \score]$ by calibration; and finally
  $\Pr[\soft(X_g) = \score ]$ by equalized APs.
\end{proof}

\subsubsection{Mass Averaging}
The mass-averaging technique is best illustrated with an example. Suppose that $\pdf_{g_1}$ is uniform over $\{0,0.5,1\}$, and $\pdf_{g_2}$ is uniform over $\{0,1\}$. It is easy to define a soft post-processor \postsoft which equalizes these two APs. On $g_1$, we leave the score unchanged: $\postsoft(\score,g_1) = \score$. On $g_2$, we compute the output as
$$
\postsoft(\score,g_2) = \begin{cases}
\score & \text{w.p. } 2/3 \\
0.5 & \text{w.p. } 1/3
\end{cases}
.$$
The APs for groups $g_1$ and $g_2$ of the resulting soft classifier $\soft' = \postsoft \circ \soft$ are equal, and are equal to $\pdf_{g_1}$.

In the example, the probability mass is being redistributed by averaging the scores. This can be equivalently viewed as adding noise to the scores and then recalibrating the scores, something discussed in \cite{Corbett}.

More generally, a mass-averaging post processor $\postsoft$ assigns to each possible pair $(\score,g)$ a distribution over possible output scores $\score'$. Such a \postsoft is fully specified by $k\cdot k' \cdot |\G|$
parameters, where $k$ is the number of possible values of $\score$ and $k'$ is the number of possible values of $\score'$. Given a soft classifier $\soft$ and a mass-averaging post processor $\postsoft$, the constraint that the resulting APs are equalized across groups is linear in these parameters. Such classifiers, therefore, may be found by a linear program. We do not explore the choice of mass-averaging post-processors further.

%%% Local Variables:
%%% mode: latex
%%% TeX-master: "../main"
%%% End:

 %\input{src/deferral_prelims} % now obsolete due to discussion at end.
 % !TeX root = ../main.tex

\section{Post-Processing Calibrated Classifiers with Deferrals}
\label{sec:deferrals}
In the first part of the paper, we considered the problem of post-processing calibrated soft classifiers, which output a score
$\score \in [0,1]$, into fair hard classifiers, which output a decision in $\py\in\{\pF,\pT\}$, subject to a number of group fairness conditions.
In the remainder of this work, we reconsider this problem, but with one important change: we allow classifiers to
``refuse to decide'' by outputting the special symbol $\punt$. We call such classifiers \emph{deferring} classifiers,
borrowing the nomenclature from \cite{MPZ17}. The output $\punt$ is the deferring classifier's way of refusing to make a decision and deferring to a downstream decision maker. For example, a risk assessment tool might aid a parole board to make a decision by categorizing an individual as high risk or low risk, or it might output $\punt$---providing no advice and deferring to the  judgment of the board.

We now modify our notation appropriately. Instances $x$ are still associated with a true type $\ty(x) \in \{\tF, \tT\}$ and a group $\group(x) \in \G$. A deferring hard classifier $\hard$ is a randomized function $\hard:\X \to \{\pF,\pT, \punt\}$.
A deferring soft classifier is a randomized function $\soft:\X\to[0,1]\cup \{\punt\}$.
A deferring hard (resp.~soft) post-processor is a randomized function $\post: [0,1]\cup\{\punt\} \times \G \to
\{\pF,\pT,\punt\}$ (resp.~$\postsoft: [0,1] \cup \{\punt\} \times \G \to [0,1] \cup \{\punt\}$) that takes as input the
output of a deferring soft and post-processes it into a deferring hard (resp.~soft) classifier.
We also introduce new versions of the FPR and FNR, conditioned on not deferring.  

\begin{definition}\label{defn:conditional-measures}
The \emph{conditional false positive rate}  and \emph{conditional false negative rate} of a deferring hard classifier $\hard$ for a group $g$ are, respectively: 
\begin{align*} \cFPR_{\hard,g} &= \Pr[\hard(X_g) = \pT \mid \ty(X_g) = \tF, \hard(X_g) \neq \punt] \\
\cFNR_{\hard,g} &= \Pr[\hard(X_g) = \pF \mid \ty(X_g) = \tT, \hard(X_g) \neq \punt].
\end{align*}
\end{definition}

We additionally consider a version of the accuracy profile conditioned on not deferring, which we call the \emph{conditional AP}. For non-deferring soft classifiers, Definitions~\ref{defn:conditional-docs} and \ref{defn:docs} coincide.
\begin{definition}
  \label{defn:conditional-docs}
 The \emph{conditional AP} $\pdf_g$ of a classifier $\soft$ for a group $g$ is the PMF of $\soft(X_g)$, conditioned on not outputting $\punt$. That is, for $\score \in [0,1]$,
 $\pdf_g(\score) =
 \Pr[\soft(X_g) = \score \mid \soft(X_g) \neq \punt].$ Note that the conditional AP is undefined if $\Pr[\soft(X_g) \neq \punt] = 0$.

 Abusing notation, we denote by $\pdf$ the collection $\{\pdf_g\}_{g\in \G}$, and call it the \emph{conditional AP of~\soft}.
\end{definition}

The conditional error rates are
applicable generally, but they can be difficult to interpret.
The consequences of using the conditional FPR and FNR are discussed further in Section \ref{sec:discussion} along with a discussion of different deferral models. 
They are also amenable to the consideration of additional goals which we will briefly address. 
For example, one could seek to minimize the total deferral rate, equalize the deferral rate among groups, or prefer deferrals on positive instances.

\subsection{Thresholding with deferrals}
\label{sec:threshold-deferrals}
We return now to the problem of post-processing of calibrated soft classifiers, but now with the extra power of deferring on some inputs. We revisit the two approaches discussed in Section~\ref{sec:postprocess-limits}: thresholding and equalizing APs.

Proposition~\ref{prop:info-theory-impossibility-3} stated PPV and NPV cannot both be equalized across groups in general when using only a single threshold per group.
By using two thresholds per groups and deferring on some inputs, PPV and NPV can always be equalized across groups.

We post-process using two thresholds per group as follows:
return $\pF$ when $\score$ is lower than the first threshold, return $\pX$
between the thresholds, and return $\pT$ above the second threshold, as shown in Figure \ref{fig:two-thresh-per-group}.
This buys us more degrees of freedom when equalizing binary constraints, and it has the useful property
that we say $\pX$ on the instances where we are the least confident about the predicted type.

We adapt our notation as follows:
\begin{definition}[Deferring Threshold Post-Processor]\label{def:thresh-deferral}
A deferring threshold post-processor $\post_{(\tau_0,\tau_1,\threshRand_0,\threshRand_1)}$ assigns to each group $g$ two thresholds $\tau_0(g),\tau_1(g) \in \Supp(\pdf_g)$, and two probabilities $\threshRand_0(g),\threshRand_1(g) \in [0,1]$, with the following requirements:
\begin{enumerate}
    \item for all $g \in \G$, $\tau_0(g) \le \tau_1(g)$
    \item for all $g \in \G$ for which $\tau_0(g) = \tau_1(g)$, $\threshRand_1(g) + \threshRand_0(g) \le 1$.  This corresponds to the case
        where the two thresholds are the same, and therefore individuals with that score must be mapped to $\pT$ with
        probability $\threshRand_1(g)$, and to $\pF$ with probability $\threshRand_0(g)$, with the remainder mapped to
        $\pX$.
\end{enumerate}
The corresponding threshold post-processor is defined as follows:\\
\resizebox{\hsize}{!}{
    \begin{minipage}{\linewidth}
\begin{align*}
    \post_{(\tau_0, \tau_1, \threshRand_0, \threshRand_1)}(\score, g) = \begin{cases}
        \pT & \score > \tau_{1}(g) \\
        \pF & \score < \tau_0(g) \\
        \pX & \tau_0(g) < \score < \tau_1(g) \\
        \pT \text{ w.p. }\threshRand_1(g), \text{ else }\pX & \score = \tau_1(g) \\
        \pF \text{ w.p. }\threshRand_0(g), \text{ else }\pX & \score = \tau_0(g) \\
        \pT \text{ w.p. }\threshRand_1(g), \pF \text{ w.p. }\threshRand_0(g), \text{ else }\pX & \score = \tau_0(g) = \tau_1(g) \\
        \pX & \score = \pX
\end{cases}
\end{align*}
\end{minipage}}
\end{definition}

\begin{figure}
    \centering
    \includegraphics[width=0.5\hsize]{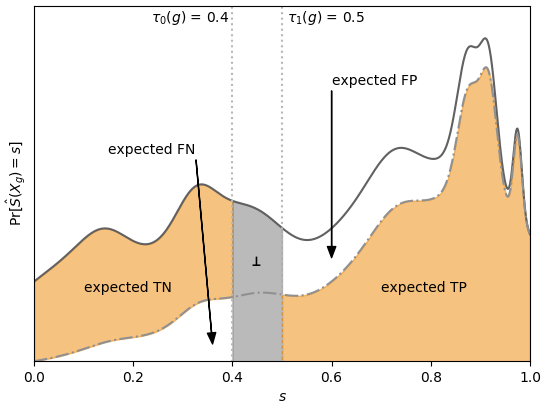}
    \caption{For threshold post-processors with deferrals, defer between the thresholds.}
    \label{fig:two-thresh-per-group}
\end{figure}

Using two thresholds allows the equalization of both PPV and NPV across groups in general, whereas without deferrals we
could only equalize one or the other.  We first demonstrate the existence of post-processors that are fairly limited, analogously to those defined in
Proposition~\ref{prop:trivial-threshold-equalizes}.

\begin{proposition}\label{prop:trivial-thresholds-deferrals}
    Let $\soft$ be a soft classifier with a nice AP for a set of groups $\G$.
    Then every threshold post-processor $\post_{(\thresh_0, \thresh_1, \threshRand_0, \threshRand_1)}$ satisfying the following properties 
    equalizes both the PPV and NPV for all groups in $\G$ of the composed classifier $\hard = \post_{(\thresh_0, \thresh_1, \threshRand_0, \threshRand_1)} \circ \soft$:
\begin{enumerate}
                \item $\tau_0(g) = \min(\supp(\pdf_g))$ for all $g$ 
                \item $\threshRand_0(g) > 0$ for all $g$.
                \item $\tau_1(g) = \max(\supp(\pdf_g))$ for all $g$ 
                \item $\threshRand_1(g) > 0$ for all $g$.
\end{enumerate}
\end{proposition}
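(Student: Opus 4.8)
The plan is to observe that, under conditions 1--4, the post-processor $\post_{(\thresh_0,\thresh_1,\threshRand_0,\threshRand_1)}$ outputs $\pT$ \emph{only} on the input $\score_{\max} := \max(\supp(\pdf_g))$ (and then only with probability $\threshRand_1(g)$), and outputs $\pF$ \emph{only} on the input $\score_{\min} := \min(\supp(\pdf_g))$; every other score is sent to $\pX$. This is immediate from the case analysis in Definition~\ref{def:thresh-deferral}: the branches $\score > \tau_1(g)$ and $\score < \tau_0(g)$ are vacuous since $\tau_1(g) = \score_{\max}$ and $\tau_0(g) = \score_{\min}$ are the extreme points of the support. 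By niceness, $\supp(\pdf_g)$ is the same set for every $g$, so $\score_{\max}$ and $\score_{\min}$ are single values that do not depend on $g$. From here the argument parallels Proposition~\ref{prop:trivial-threshold-equalizes}, now for PPV and NPV simultaneously.

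First I would check well-definedness of $\PPV_{\hard,g}$ and $\NPV_{\hard,g}$, i.e.\ non-triviality of $\hard = \post_{(\thresh_0,\thresh_1,\threshRand_0,\threshRand_1)} \circ \soft$ on every $g$. Since $\score_{\max},\score_{\min} \in \supp(\pdf_g)$ we have $\pdf_g(\score_{\max}) > 0$ and $\pdf_g(\score_{\min}) > 0$; combined with $\threshRand_1(g) > 0$ (condition 4) and $\threshRand_0(g) > 0$ (condition 2) this gives $\Pr[\hard(X_g) = \pT] > 0$ and $\Pr[\hard(X_g) = \pF] > 0$. The only point needing care is the degenerate case $|\supp(\pdf_g)| = 1$, where $\score_{\min} = \score_{\max}$ and the two thresholds coincide: then the relevant branch of Definition~\ref{def:thresh-deferral} outputs $\pT$ w.p.\ $\threshRand_1(g)$ and $\pF$ w.p.\ $\threshRand_0(g)$, and the constraint $\threshRand_0(g)+\threshRand_1(g)\le 1$ from the definition together with conditions 2 and 4 still makes both decisions occur with positive probability. (By niceness this forces all groups to have the same single-point support, so the conclusion below holds trivially.)

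Next I would compute the two quantities. Conditioned on $\hard(X_g) = \pT$, the score $\soft(X_g)$ must equal $\score_{\max}$, since that is the only score the post-processor ever maps to $\pT$. Using that $\hard(X_g)$ is conditionally independent of $\ty(X_g)$ given $\soft(X_g)$ and the (fixed) group --- Fact~\ref{fact:conditional-independence}, which applies verbatim to deferring post-processors since fixing the post-processor's inputs leaves its output a function of its coins alone --- together with groupwise calibration, one gets
\[
\PPV_{\hard,g} = \Pr[\ty(X_g)=\tT \mid \soft(X_g)=\score_{\max}] = \score_{\max}.
\]
The symmetric computation, conditioning on $\hard(X_g)=\pF$, gives $\NPV_{\hard,g} = \Pr[\ty(X_g)=\tF \mid \soft(X_g)=\score_{\min}] = 1-\score_{\min}$. (Equivalently one may invoke the characterization of Proposition~\ref{prop:ppv-as-cond-exp}, whose proof extends unchanged to deferring post-processors since it only conditions on the events $\hard(X_g)=\pT$ and $\hard(X_g)=\pF$.) Since $\score_{\max}$ and $\score_{\min}$ are independent of $g$ by niceness, both $\PPV_{\hard,g}$ and $\NPV_{\hard,g}$ take the same value across all $g \in \G$, which is exactly the claim. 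I do not expect a genuine obstacle: the statement is the deferring analogue of Proposition~\ref{prop:trivial-threshold-equalizes}, and the only slightly delicate point is handling the coinciding-threshold corner case noted above.
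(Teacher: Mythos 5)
Your proof is correct and takes essentially the same approach as the paper's (which simply notes that only the largest score can map to $\pT$ and only the smallest to $\pF$, so PPV equals the maximum of the common support and NPV equals one minus its minimum). Your version is just more detailed, additionally checking non-triviality and the coinciding-threshold corner case.
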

Notice that these classifiers cannot be trivial, because we defined $\threshRand_0$ and $\threshRand_1$ in a way that prohibits the possibility that the
composed classifier never returns 0 or 1.  For the cases where the range of soft classifier outputs is infinite and there is no max or min element, these
classifiers do not exist.
\begin{proof}[Proof of Proposition~\ref{prop:trivial-thresholds-deferrals}]
    The reasoning is similar to the non-deferral case for equality of PPV alone.  The thresholds only allow one score
    $\score$ to map to $\pF$ and one score to map to $\pT$.  Thus, PPV for both groups is equal to the largest score
    in the support and NPV for both groups is equal to 1 minus the smallest score in the support.
\end{proof}

Now, much like in Proposition~\ref{prop:2-thresh-equal-ppv}, which showed the existence of meaningful non-trivial threshold post-processors that equalized PPV
across groups, we show the existence of meaningful, nontrivial \emph{deferring} threshold post-processors that equalize PPV and
NPV across groups.

\begin{proposition}\label{prop:2n-thresholds-deferrals}
  Let $\soft$ be a soft classifier with a nice AP that is groupwise calibrated for a set of groups $\G$. Suppose that
  $\vert\supp(\pdf_{g})\vert \ge 2$ for all $g \in \G$.  Then there exists a non-trivial threshold post-processor $\post_{(\thresh_0, \thresh_1, \threshRand_0, \threshRand_1)}$ that is not one of those defined 
  in Proposition~\ref{prop:trivial-thresholds-deferrals}, such that the hard classifier $\hard = \post_{(\thresh_0,
  \thresh_1, \threshRand_0, \threshRand_1)} \circ \soft$ equalizes $\PPV_g$ and $\NPV_g$ for all $g \in \G$. 
 \end{proposition}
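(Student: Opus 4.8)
\emph{Proof plan.} The structural point that makes this work is that, by Proposition~\ref{prop:ppv-as-cond-exp}, $\PPV_{\hard,g}=\E[\soft(X_g)\mid\hard(X_g)=\pT]$ depends only on which scores are sent to $\pT$, so for a deferring threshold post-processor it is a function of $(\thresh_1(g),\threshRand_1(g))$ alone; symmetrically $\NPV_{\hard,g}=1-\E[\soft(X_g)\mid\hard(X_g)=\pF]$ is a function of $(\thresh_0(g),\threshRand_0(g))$ alone. Thus the ``positive side'' and the ``negative side'' of each group's rule can be tuned independently, provided we respect the ordering requirement $\thresh_0(g)\le\thresh_1(g)$ of Definition~\ref{def:thresh-deferral}. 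The plan is: (i) fix common targets $v_P$ for PPV and $v_N$ for NPV; (ii) for each group $g$, run the monotonicity/continuity/intermediate-value argument already used in Proposition~\ref{prop:2-thresh-equal-ppv}, together with its obvious NPV analogue (the NPV halves of Claims~\ref{claim:ppv-monotonicity} and~\ref{claim:ppv-continuity}), once on the positive side to get $(\thresh_1(g),\threshRand_1(g))$ with $\PPV_{\hard,g}=v_P$ and once on the negative side to get $(\thresh_0(g),\threshRand_0(g))$ with $\NPV_{\hard,g}=v_N$; (iii) verify that the object produced is a legal deferring threshold post-processor, that it is non-trivial, and that it is not one of those in Proposition~\ref{prop:trivial-thresholds-deferrals}.

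For steps (i) and (iii) to coexist I would choose $v_P$ slightly below $\score_{max}:=\max\supp(\pdf_g)$ and $v_N$ slightly below $1-\score_{min}$, where $\score_{min}:=\min\supp(\pdf_g)$ (both well defined and group-independent by niceness and finiteness of the image). Using $\BR_g=\E[\soft(X_g)]$ from Proposition~\ref{prop:baserate-calibrated} and niceness, every $\BR_g$ lies strictly inside $(\score_{min},\score_{max})$, so (for finite $\G$) any $v_P$ close enough to $\score_{max}$ satisfies $v_P\in(\max_g\BR_g,\score_{max})$ and any $v_N$ close enough to $1-\score_{min}$ satisfies $v_N\in(1-\min_g\BR_g,\,1-\score_{min})$ --- exactly the ranges in which the argument of Proposition~\ref{prop:2-thresh-equal-ppv} and its NPV analogue produce, for every group, a one-sided threshold hitting the target. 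Choosing $v_P<\score_{max}$ forces $\thresh_1(g)<\score_{max}$ for every $g$ (a $\pT$-region with conditional mean strictly below $\score_{max}$ cannot be just a fraction of the top score), so the post-processor is automatically outside the family of Proposition~\ref{prop:trivial-thresholds-deferrals}; it is non-trivial because its $\pT$-region contains the full mass at $\score_{max}$ and its $\pF$-region the full mass at $\score_{min}$, both positive in every group by niceness.

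The one place that needs care, and the step I expect to be the main obstacle, is confirming in (iii) that the positive and negative thresholds are mutually compatible --- i.e.\ that $\thresh_0(g)\le\thresh_1(g)$ and, when equality holds, $\threshRand_0(g)+\threshRand_1(g)\le1$. Write the common support as $\score_{min}=a_1<a_2<\dots<a_k=\score_{max}$. Once $v_P$ exceeds the conditional mean of $\{a_{k-1},a_k\}$ in every group (which holds for $v_P$ close enough to $a_k$, since there are finitely many groups and each such mean is $<a_k$), the $\pT$-region of every group is contained in $\{a_{k-1},a_k\}$; symmetrically, for $v_N$ close enough to $1-a_1$, the $\pF$-region of every group is contained in $\{a_1,a_2\}$. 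For $k\ge4$ these sets are disjoint, so $\thresh_0(g)<\thresh_1(g)$ and the scores strictly between are simply deferred. For $k=3$ the sets meet only at $a_2$, where $\thresh_0(g)=\thresh_1(g)=a_2$; the two fractional weights placed on $a_2$ (one from each side) each tend to $0$ as $v_P\to a_k$ and $v_N\to1-a_1$, so for targets close enough to the extremes their sum is below $1$ for all (finitely many) groups. The genuinely degenerate case is $k=2$: then any $\pT$-region with conditional mean below $a_2$ must contain all of $a_2$ and any $\pF$-region with conditional mean above $a_1$ must contain all of $a_1$, so their fractional remainders collide at full weight and both targets cannot be non-extremal at once; here one instead keeps $\NPV$ at its extremal equalized value $1-a_1$ (trivially equal across groups whenever the $\pF$-region is supported on $\{a_1\}$) and tunes only the positive side to a common $v_P\in(\max_g\BR_g,a_2)$, still obtaining a non-trivial post-processor with $\thresh_1(g)=a_1\ne\score_{max}$, hence not one of those in Proposition~\ref{prop:trivial-thresholds-deferrals}. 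All monotonicity and continuity inputs used are the one-sided analogues of Claims~\ref{claim:ppv-monotonicity} and~\ref{claim:ppv-continuity}.
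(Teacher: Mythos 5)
Your proof is correct, and it shares its skeleton with the paper's: both exploit the fact that $\PPV_{\hard,g}$ depends only on $(\thresh_1(g),\threshRand_1(g))$ and $\NPV_{\hard,g}$ only on $(\thresh_0(g),\threshRand_0(g))$, and both invoke Proposition~\ref{prop:2-thresh-equal-ppv}, its NPV analogue (Proposition~\ref{prop:2-thresh-equal-npv}), and the monotonicity/continuity claims. Where you genuinely diverge is in how the compatibility constraint $\thresh_0(g)\le\thresh_1(g)$ (and $\threshRand_0(g)+\threshRand_1(g)\le 1$ at equality) is enforced. The paper tunes the two sides to \emph{arbitrary} admissible targets, accepts that the resulting thresholds may overlap, and then runs an iterative repair: merge the overlapping pair at the nearest support point to their midpoint, renormalize the randomization weights, observe by monotonicity that this can only raise that group's PPV and NPV, and then re-equalize all other groups upward/downward via the intermediate value theorem, arguing that this never creates new overlaps and terminates in at most $2|\G|$ rounds. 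You instead preclude overlap a priori by pushing the common targets $v_P$ and $v_N$ close enough to $\score_{max}$ and $1-\score_{min}$ that every group's $\pT$-region lives in the top two support points and every $\pF$-region in the bottom two, with the $k=3$ collision at the middle point handled by sending both randomization weights to zero and the degenerate $k=2$ case handled by fixing NPV at its extremal equalized value. Your route trades the paper's somewhat delicate loop invariant (no new overlaps, termination) for a finite case analysis on $|\supp(\pdf_g)|$ and a careful choice of targets; it is arguably cleaner and non-iterative, at the cost of only exhibiting equalizing post-processors whose common PPV and NPV are near the extremes (which suffices, since the proposition only asserts existence). Both arguments are sound; yours is an acceptable alternative proof.
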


The main idea of the proof of this proposition is
to use Proposition \ref{prop:2-thresh-equal-ppv} twice: once for getting thresholds to equalize the PPV, and once
for thresholds to equalize the NPV.  These thresholds may be invalid because there may be a group $g$ for which
$\tau_0(g) > \tau_1(g)$.  We use Claims \ref{claim:ppv-monotonicity} and \ref{claim:ppv-continuity} to allow ourselves
to push the PPV thresholds toward 1 and the NPV thresholds toward 0 until they no longer overlap, while still
maintaining equalization of PPV and NPV for the other groups.

\begin{proof}[Proof of Proposition \ref{prop:2n-thresholds-deferrals}]
Recall by Claim~\ref{claim:ppv-monotonicity} that the PPV of $\soft$ on a group $g$ monotonically increases as
$\tau_1(g)$ increases and, if $\thresh_1(g)$ is constant, as $\threshRand_1(g)$ increases.  By
Claim~\ref{claim:npv-monotonicity}, NPV monotonically increases as $\tau_0(g)$ decreases, and, if $\thresh_0(g)$ is
constant, as
$\threshRand_0(g)$ increases.  Recall by Claim \ref{claim:ppv-continuity} that if the total
variance distance between conditional APs (conditioned on being post-processed to a result of $\pT$) for different
groups is at most $\epsilon$, then the PPV difference for these groups is bounded by $O(\epsilon)$.  Thus, $\PPV_g$ is
continuous and monotonically increasing with regard to  $(\tau_1(g), \threshRand_1(g))$.  Similarly, $\NPV_g$ is continuous and monotonically
increasing with $(-\tau_0(g), \threshRand_0(g))$.

We know by Proposition \ref{prop:2-thresh-equal-ppv} that there exists a non-group blind threshold rule (without deferrals) that equalizes the PPV among the
groups.  By the analogous Proposition~\ref{prop:2-thresh-equal-npv}, there exists a (different) non-group-blind threshold rule that equalizes the NPV among the
groups.  For both of these, we know that they are not the classifiers from
Proposition~\ref{prop:trivial-thresholds-deferrals}.

If the thresholds meet the conditions of being a deferring post-processor listed in Definition~\ref{def:thresh-deferral}, then the statement is proven.  If they do
not meet the conditions because the thresholds ``overlap,'' we repeat the following procedure until the conditions are met:

\begin{enumerate}
    \item Let $g$ be a group for which the conditions are not met, i.e. either $\tau_0(g) > \tau_1(g)$, or 
          $\tau_0(g) = \tau_1(g)$ and $\threshRand_0(g) + \threshRand_1(g) > 1$.

    \item If $\tau_0(g) > \tau_1(g)$, define $t' = \frac{\thresh_0(g) + \thresh_1(g)}{2}$.  Let $t = \argmin_{s \in \supp(\pdf_g)} \vert s - t' \vert$.
        Notice that $t \le \tau_0(g)$ and $t \ge \tau_1(g)$, but because by assumption $\tau_0(g) > \tau_1(g)$, $t$ cannot be equal to both thresholds.  
        Set the new value for both thresholds to $t$: $\tau_0(g) = \tau_1(g) = t(g)$.
\item If $\threshRand_0(g) + \threshRand(g) > 1$, then do the following:
    \begin{enumerate}
        \item If $\thresh_0(g)$ remained unaltered in the previous step, then keep $\threshRand_0(g)$ the same, and set
        $\threshRand_1(g) = 1 - \threshRand_0(g)$.
    \item If $\thresh_1(g)$ remained unaltered in the previous step, then set $\threshRand_1(g)$ the same and set
        $\threshRand_0(g) = 1 - \threshRand_1(g)$.
    \item If neither of these is true, then let $r =
\frac{\threshRand_0(g)}{\threshRand_0(g) + \threshRand_1(g)}$.  Set $\threshRand_0(g) = r$ and $\threshRand_1(g) =
1-r$.
    \end{enumerate}

\item These thresholds are no longer overlapping, but they altered $\PPV_g$ and $\NPV_g$.
    Notice that, by the monotonicity properties described above, the threshold rules were changed in ways that can only increase $\PPV_g$ or $\NPV_g$:
\begin{enumerate}
    \item $\thresh_1(g)$ has increased or remained constant
    \item if $\thresh_1(g)$ remained constant, then $\threshRand_1(g)$ also remained constant
    \item $\thresh_0(g)$ has decreased or remained constant
    \item if $\thresh_0(g)$ remained constant, then $\threshRand_0(g)$ remained constant
\end{enumerate}
The new PPV and NPV for $g$ may now be higher than those of the other groups.

\item For all other groups $g' \ne g$, by the Intermediate Value Theorem and the continuity of NPV, there exists some $(\thresh_0(g'),
\threshRand(g'))$ that sets $\NPV_{g'} = \NPV_g$, and by the monotonicity of NPV, this threshold is lower than the old
one.  Similarly, there exists some $(\thresh_1(g'), \threshRand(g'))$ that sets $\PPV_{g'} = \PPV_g$ and it is higher than
the old one.
By the monotonicity of PPV and NPV, we know that this process will not cause non-overlapping thresholds to become overlapping.
\end{enumerate}

The ultimate effect of these steps was to reduce the number of overlapping thresholds by at least one.  We can
repeat this process up to $2|\G|$ times until none of the thresholds overlap.

Notice that this classifier is not one of the ones from Proposition~\ref{prop:trivial-thresholds-deferrals} - if we did not
have to correct for ``overlapping,'' then this is true by assumption, and if we did do the correction process, then
$\thresh_0(g) = \thresh_1(g)$ for at least one $g$, and by assumption we had $\vert\Supp(\pdf)\vert \ge 2$.

Thus, we have created valid a post-processor $\post_{(\thresh_0, \thresh_1, \threshRand_0, \threshRand_1)}$ that
equalizes PPV and NPV for all groups simultaneously and is not one of the ones in
Proposition~\ref{prop:trivial-thresholds-deferrals}, proving the claim.
\end{proof}

The following example demonstrates that it is sometimes possible to equalize PPV, NPV, FPR, and FNR using deferrals, but without equalizing the
APs themselves:

\begin{example}[Equalizing PPV, NPV, cFPR, and cFNR with Thresholds]
    \label{example:double-threshold-equalization}
This example is presented with continuous support $[0,1]$ for
simplicity.  Consider two APs, one for group $g_1$ and one for $g_2$.
Let the AP for $g_1$ be uniform (with density give by the line
$\pdf(\score) = 1$), and let the AP for group $g_2$ have density
given by the parabola $\pdf(\score) = 6\score(1-\score)$, as shown in Figure \ref{fig:parabola-line}.

Consider the post-processor $\postsoft_{(\tau_0,
  \tau_1)}$.\footnote{In the case where the distributions are continuous,
$\threshRand_0$ and $\threshRand_1$ are meaningless because $\Pr[\score = \tau_0(g)] = \Pr[\score = \tau_1(g)]= 0 \forall \score$.}
Let $\tau_0(g_1) = \tau_0(g_1) = 0.5$, let $\tau_0(g_2) = \frac16(5 - \sqrt{7})$ and let $\tau_1(g_2) = 1 - \frac16(5 -
\sqrt{7})$ as shown in Figure \ref{fig:parabola-line}.

The PPV and NPV of both groups is $\frac34$, and the \cFPR and \cFNR of both is $\frac14$, thus equalizing all four
values.

This example is somewhat unsatisfactory because the base rates are
equal in the two groups. We did not find a similar example without
equal base rates.
\end{example}

\begin{figure}[htp]
    \centering
    \includegraphics[width=0.5\hsize]{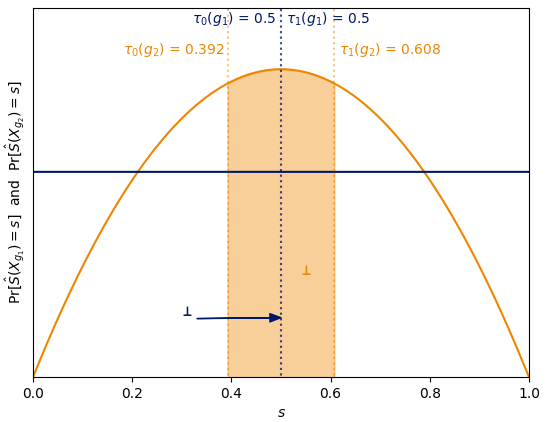}
    \caption{This threshold post-processor equalizes PPV, NPV, cFPR, and cFNR as described in Example
      \ref{example:double-threshold-equalization}. 
  } 
    \label{fig:parabola-line}
\end{figure}

\subsection{Equalizing APs with deferrals}
\label{sec:docs-with-deferrals}
As with Claim~\ref{claim:equalizing-docs-good}, equalizing the conditional APs between groups renders trivial the task of downstream decision-making subject to equality of PPV, NPV, cFPR, and cFNR. Importantly, unlike in Section~\ref{sec:docs-without-deferrals}, equalizing the conditional APs between groups does not require the groups to have equal base rates, greatly increasing the applicability of this approach.

\begin{claim}
  \label{claim:equalizing-conditional-docs-good}
  If the conditional APs are equal for two groups, then PPV, NPV, cFPR, and cFNR are
  equalized (or simultaneously undefined) by any hard deferring
  post-processor $\post$ satisfying (1) \emph{group blindness} and (2)
  $\post(\punt,g) = \punt \   (\forall g)$.
\end{claim}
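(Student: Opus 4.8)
The plan is to follow the proof of Claim~\ref{claim:equalizing-docs-good}, but to carry out all of the bookkeeping \emph{conditioned on the soft classifier not deferring}. Fix a groupwise calibrated deferring soft classifier $\soft$ whose conditional APs agree across the two groups, write $\pdf$ for the common conditional AP, and set $\hard = \post\circ\soft$. For a group $g$ let $A_g$ denote the event $\{\soft(X_g)\neq\punt\}$. The first step is the observation that \emph{every} event appearing in the definitions of $\PPV_{\hard,g}$, $\NPV_{\hard,g}$, $\cFPR_{\hard,g}$, $\cFNR_{\hard,g}$ --- both the conditioning events $\{\hard(X_g)=\pT\}$, $\{\hard(X_g)=\pF\}$, $\{\ty(X_g)=\tF,\hard(X_g)\neq\punt\}$, $\{\ty(X_g)=\tT,\hard(X_g)\neq\punt\}$ and the corresponding numerator events (e.g. $\{\hard(X_g)=\pT,\ty(X_g)=\tF\}$ for $\cFPR$) --- is a subset of $A_g$. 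This is exactly where property (2) is used: since $\post(\punt,g)=\punt$, we have $\hard(X_g)\neq\punt \Rightarrow \soft(X_g)\neq\punt$, and in particular $\hard(X_g)\in\{\pT,\pF\}$ forces $\soft(X_g)\neq\punt$.

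The second step is a small lemma: conditioned on $A_g$, the joint law of the triple $\bigl(\soft(X_g),\,\ty(X_g),\,\hard(X_g)\bigr)$ does not depend on $g$. Indeed, conditioned on $A_g$ the marginal of $\soft(X_g)$ is the conditional AP $\pdf_g = \pdf$, which is group-independent by hypothesis; conditioned further on $\soft(X_g)=\score$ (where necessarily $\score\in[0,1]$, so we are automatically inside $A_g$), groupwise calibration gives $\ty(X_g)\sim\mathrm{Bernoulli}(\score)$, again group-independent; and $\hard(X_g)=\post(\soft(X_g),g)$ is, by group blindness, a randomized function of $\soft(X_g)$ alone, so its conditional law given $\soft(X_g)$ is group-independent as well. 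Chaining these three conditional laws gives the joint law of the triple conditioned on $A_g$.

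The third step combines the two. Each of the four quantities is a ratio $\Pr[E_{\mathrm{num}}]/\Pr[E_{\mathrm{den}}]$ with $E_{\mathrm{num}},E_{\mathrm{den}}\subseteq A_g$, so writing $\Pr[E]=\Pr[A_g]\cdot\Pr[E\mid A_g]$ the common factor $\Pr[A_g]$ cancels, leaving a ratio of probabilities of the form $\Pr[\cdot\mid A_g]$ --- each computable from the group-independent joint law of the triple, hence the same for both groups. (This cancellation is the whole point: $\Pr[A_g]$, the non-deferral probability, may differ across groups, which is exactly why base rates need not be equal here, in contrast to Section~\ref{sec:docs-without-deferrals}.) For the ``simultaneously undefined'' clause, note that the conditional AP being defined forces $\Pr[A_g]>0$, so a given quantity is undefined for group $g$ precisely when its denominator event has probability zero \emph{conditioned on} $A_g$ --- a group-independent condition --- so it is undefined for one group iff it is undefined for the other.

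I do not anticipate a serious obstacle; the one place to be careful is the reduction in the first step, i.e. checking that all events in sight (including the numerators and the $\hard(X_g)\neq\punt$ clauses in $\cFPR$/$\cFNR$) genuinely sit inside $A_g$, so that the possibly group-dependent deferral probability cancels cleanly rather than leaking into the final expression. It is also worth stating explicitly, as in the remark following Claim~\ref{claim:equalizing-docs-good}, that group blindness is essential --- it is used to make the conditional law of $\hard(X_g)$ given $\soft(X_g)$ group-independent, and without it a post-processor could simply branch on $g$ and destroy the equalization.
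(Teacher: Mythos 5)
Your proof is correct. It rests on the same three pillars as the paper's own proof --- equal conditional APs, groupwise calibration, and group blindness, all applied after conditioning on the soft classifier not deferring --- but you package them differently and, in one respect, more completely. The paper expands $\PPV_{\hard,g}$ into an explicit product of factors, verifies that each factor is group-independent, and then asserts that the other three measures ``may be proved similarly.'' You instead prove the single stronger lemma that the joint law of $(\soft(X_g),\ty(X_g),\hard(X_g))$ conditioned on $\{\soft(X_g)\neq\punt\}$ is group-independent, and observe that each of the four measures is a ratio of probabilities of events contained in that conditioning event, so the possibly group-dependent non-deferral probability cancels. This buys you a uniform treatment of all four quantities, makes the role of condition (2) explicit (it is precisely what forces every numerator and denominator event inside the non-deferral event), and cleanly disposes of the ``simultaneously undefined'' clause, which the paper's proof does not address. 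The factors in the paper's product are exactly the conditional laws you chain together, so the two arguments are mathematically the same; yours is simply the more systematic rendering.
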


The additional condition---that $\post$ defers on input $\punt$---is necessary: if $\post$ output $1$ on all inputs (even on $\punt$), then PPV would remain unequal as long as the base rates differed. The proof is similar to the proof of Claim~\ref{claim:equalizing-docs-good}.

 \begin{proof}[Proof of Claim~\ref{claim:equalizing-conditional-docs-good}]
   We prove only that PPV is equalized; the remaining properties may be proved similarly.
   Let $\hard = \post \circ \soft$. All probabilities below are over $X_g\sam \X_g$, and the coins of $\soft$ and $\post$.
   \begin{align*}
      &\PPV_{\hard,g}
      = \Pr[\ty(X_g) = 1 \mid \hard(X_g,g)=1] \\
      &= \frac{\Pr[\ty(X_g) = 1 \mid \soft(X_g) \neq \punt]}{\Pr[\hard(X_g,g) =1\mid \soft(X_g) \neq \punt]} \\
      & \quad \cdot
      \sum_{\score \in \Supp(\pdf_g)} \biggl( \frac{\Pr[\ty(X_g) = 1 \mid \soft(X_g) = \score]\cdot\Pr[\soft(X_g) = \score \mid \soft(X_g) \neq \punt]}{\Pr[\ty(X_g) = 1\mid \soft(X_g) \neq \punt]}\\
      &\quad\quad\quad\quad\quad \cdot \Pr[\post(s,g) = 1 \mid \soft(X_g) = \score, \ty(X_g) = 1]\biggr)
   \end{align*}
   Each factor in this product is equal across groups by the assumptions. Namely, $\Pr[\ty(X_g) = 1\mid \soft(X_g) \neq \punt]$ is equalized by calibration and equalized APs;
   $\Pr[\hard(X_g,g) =1 \mid \soft(X_g) \neq \punt]$ by group blindness and equalized APs;
   $\Pr[\post(s, g) = 1\mid \soft(X_g) = \score, \ty(X_g) = 1]$ by group blindness;
   $\Pr[\ty(X_g) = 1\mid \soft(X_g) = \score]$ by calibration; and finally
   $\Pr[\soft(X_g) = \score \mid \soft(X_g) \neq \punt]$ by equalized APs.
 \end{proof}

Deferrals are a powerful tool for manipulating, and thereby equalizing,  conditional APs. Consider a function $\puntprob:(\score,g) \mapsto [0,1]$
\begin{align*}
    \postsoft_{\puntprob}(\score, g) = \begin{cases}
        \punt & \text{if } \score = \punt \\
        \punt \text{ w.p. } \puntprob(\score, g)\text{, else }\score & \text{otherwise}
    \end{cases}
\end{align*}
If $\soft$ is a calibrated classifier, the soft deferring classifier $\soft' := \postsoft_{\puntprob} \circ \soft$ is still calibrated.
For a group $g$, let $\pdf_g$ be the AP of $\soft$ and $\pdf'_g$ be the AP of $\soft'$.
There is a simple graphical intuition for the shape of $\pdf'_g$, as shown in Figure~\ref{fig:transform_docs1_to_docs2}.
More formally,
\begin{equation}
\label{eq:defer-equal-docs}
\pdf'_g(\score) = \frac{\pdf_g(\score)(1-\puntprob(\score,g))}{1-\Delta}
\end{equation}
where $\Delta := \Pr[\soft'(X_g) = \punt \mid \soft(X_g) \neq \punt] = \sum_{\score\in\Supp(\pdf_g)}\pdf_g(\score)\puntprob(\score,g)$.

\medskip 

By appropriate choice of $\puntprob$, any conditional AP can be transformed into almost any other conditional AP.
\begin{theorem}
  \label{thm:defer-to-any-docs}
  Let $\pdf_g$ be a conditional AP of a soft classifier $\soft$ on group $g$, and let $\pdf^*$ be any probability mass function such that $\Supp(\pdf^*) \subseteq \Supp(\pdf_g)$.
  Then there exists $\puntprob$ for which the calibrated AP $\pdf'_g$ of $\postsoft_{\puntprob}\circ\soft$ is equal to $\pdf^*$.
\end{theorem}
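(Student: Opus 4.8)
The plan is to work directly from the characterization of the post-deferral conditional AP in Equation~(\ref{eq:defer-equal-docs}): I want to choose $\puntprob(\cdot,g)$ so that $\pdf_g(\score)(1-\puntprob(\score,g)) = (1-\Delta)\,\pdf^*(\score)$ for every $\score$. The first observation is that on scores $\score\in\Supp(\pdf_g)\setminus\Supp(\pdf^*)$ there is no choice: one must defer with probability $1$, i.e.\ $\puntprob(\score,g)=1$, so that these scores vanish from the conditional AP. On scores in $\Supp(\pdf^*)$ — where, crucially, $\pdf_g(\score)>0$ by the hypothesis $\Supp(\pdf^*)\subseteq\Supp(\pdf_g)$, so dividing by $\pdf_g(\score)$ is legitimate — I would treat the as-yet-unknown normalizing constant $c := 1-\Delta$ as a single free parameter and set $\puntprob(\score,g) := 1 - c\,\pdf^*(\score)/\pdf_g(\score)$. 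Scores outside $\Supp(\pdf_g)$ are never output, so the value of $\puntprob$ there is irrelevant.

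Next I would check that the construction is self-consistent. Plugging this $\puntprob$ into $\Delta = \sum_{\score\in\Supp(\pdf_g)}\pdf_g(\score)\puntprob(\score,g)$ and using $\sum_{\score\in\Supp(\pdf^*)}\pdf^*(\score)=1=\sum_{\score\in\Supp(\pdf_g)}\pdf_g(\score)$, the sum telescopes to $\Delta = 1-c$, confirming $c=1-\Delta$ — so whatever value of $c$ is chosen is automatically consistent with the definition of $\Delta$. Substituting back into Equation~(\ref{eq:defer-equal-docs}) then gives $\pdf'_g(\score)=\pdf_g(\score)\bigl(c\,\pdf^*(\score)/\pdf_g(\score)\bigr)/c = \pdf^*(\score)$ on $\Supp(\pdf^*)$ and $0$ elsewhere, i.e.\ $\pdf'_g=\pdf^*$ as desired.

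The one genuine thing to verify — and the only place the hypotheses really get used — is that $c$ can be picked so that every $\puntprob(\score,g)$ lands in $[0,1]$ and so that $\Delta<1$ (the latter being needed for the conditional AP $\pdf'_g$ to be defined at all). Requiring $\puntprob(\score,g)\in[0,1]$ for $\score\in\Supp(\pdf^*)$ is equivalent to $0\le c\le \pdf_g(\score)/\pdf^*(\score)$, so it suffices to take $c := \min_{\score\in\Supp(\pdf^*)}\pdf_g(\score)/\pdf^*(\score)$, a minimum of finitely many strictly positive numbers, hence $c>0$ (so $\Delta=1-c<1$). Moreover $c\le 1$, since $\Supp(\pdf^*)\subseteq\Supp(\pdf_g)$ forces $\sum_{\score\in\Supp(\pdf^*)}\pdf_g(\score)\le 1 = \sum_{\score\in\Supp(\pdf^*)}\pdf^*(\score)$, so some score has $\pdf_g(\score)\le\pdf^*(\score)$. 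With this choice $\puntprob$ is a bona fide deferral function and the computation above finishes the proof. I expect no real obstacle: the argument is essentially a one-parameter normalization check, and the only substantive point is that the support condition $\Supp(\pdf^*)\subseteq\Supp(\pdf_g)$ is exactly what makes the relevant division well-defined and pins $c$ into $(0,1]$.
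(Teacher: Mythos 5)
Your proposal is correct and follows essentially the same route as the paper: the paper likewise sets $\puntprob(\score,g) = 1 - \frac{\pdf^*(\score)}{\pdf_g(\score)}(1-\Delta)$ with $1-\Delta = \min_{\score}\frac{\pdf_g(\score)}{\pdf^*(\score)}$ and verifies consistency with Equation~(\ref{eq:defer-equal-docs}) by the same telescoping sum. Your write-up is in fact slightly more careful than the paper's, since you explicitly check that $\puntprob(\score,g)\in[0,1]$ everywhere and handle scores in $\Supp(\pdf_g)\setminus\Supp(\pdf^*)$ separately, points the paper leaves implicit.
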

\begin{proof}[Proof of Theorem~\ref{thm:defer-to-any-docs}]
Let $\Delta = 1 - \min_{\score\in \Supp(\pdf_g)} \frac{\pdf_g(\score)}{\pdf^*(\score)}$. For all $\score \in \Supp(\pdf_g)$, let
$$\puntprob(\score,g) = 1 - \frac{\pdf^*(\score)}{\pdf_g(\score)}\cdot(1-\Delta).$$
Observe that
\begin{align*}
\sum_{\score \in \Supp(\pdf_g)}\pdf_g(\score)\puntprob(\score,g) &= \sum_{\score \in \Supp(\pdf_g)} \left( \pdf_g(\score) - (1-\Delta) \pdf^*(\score) \right) \\
&=  \Delta
\end{align*}
and hence $\Delta$ is defined as in Equation~\ref{eq:defer-equal-docs}, where we used the fact that $\sum \pdf_g(\score) = \sum \pdf^*(\score) = 1$ in the last line.
Plugging into the earlier formula for $\pdf'_g$ (Equation~\ref{eq:defer-equal-docs}) completes the proof.
\end{proof}
Together, Theorem~\ref{thm:defer-to-any-docs} and Claim~\ref{claim:equalizing-conditional-docs-good} suggest a general framework for using deferrals to post-process a soft, possibly deferring classifier $\soft$ which is groupwise calibrated into a hard deferring classifier which simultaneously equalizes PPV, NPV, cFPR, and cFNR across groups, as follows.

For each $g \in \G$, let $\pdf_g$ be the conditional AP of $\soft$ for group $g$.
Let $\pdf^*$ be any conditional AP such that $\Supp(\pdf^*) \subseteq \cap_{g\in \G}\Supp(\pdf_g)$. Use Theorem~\ref{thm:defer-to-any-docs} to equalize the conditional APs for all groups $g\in \G$. Then use any hard post-processor $\post$ satisfying the requirements of Claim~\ref{claim:equalizing-conditional-docs-good} to make the ultimate deferring hard classifier.
This method is shown in Figure \ref{fig:transform_docs1_to_docs2}.

This framework allows for enormous flexibility in the choice of both $\pdf^*$ and $\post$, even when considering just two groups $g_1$ and $g_2$. In Figure~\ref{fig:min-pdf}, we illustrate the first step of the framework on a COMPAS dataset using $\min\{\pdf_{g_1},\pdf_{g_2}\}$ as $\pdf^*$, where $g_1$ is African-Americans and $g_2$ is Caucasians. In Figures~\ref{fig:compas-black-to-white} and~\ref{fig:compas-white-to-black} in Section~\ref{sec:experiments}, we also use $\pdf_{g_1}$ and $\pdf_{g_2}$ as $\pdf^*$.

\begin{figure}[htp]
    \centering
    \includegraphics[width=0.5\hsize]{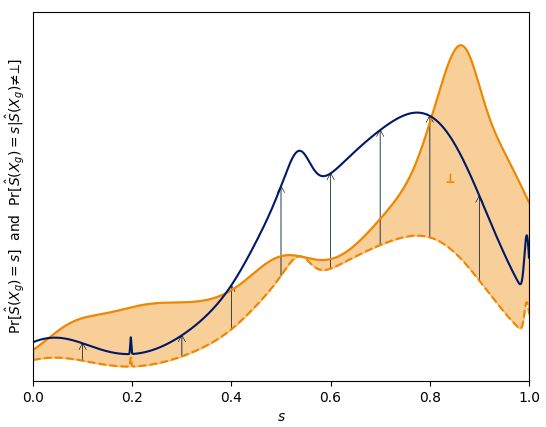}
    \caption{Choosing deferrals cleverly allows transforming one AP into another (conditional) AP.  In this
        example, the solid orange line is the original AP $\pdf_g = \Pr[\soft(X_g) = s]$.  By deferring at the rates
        indicated by the shaded region, the resulting conditional AP $\pdf_g' = \Pr[\soft(X_g)=s \mid \soft(X_g) \ne
    \idk]$ is represented by the dark blue line.  The area of the shaded region is $\Delta$.}
    \label{fig:transform_docs1_to_docs2}
\end{figure}

One can design $\pdf^*$ to achieve additional goals. For example, the choice $\pdf^* = \min\{\pdf_{g_1},\pdf_{g_2}\}$ results in \emph{equal deferral rate} across each group (equal to the total variation distance between the two initial conditional APs).
The framework can be further expanded by combining deferrals with other methods for manipulating  conditional APs, including the mass-averaging discussed in Section~\ref{sec:docs-without-deferrals}.
A better understanding of these techniques is left for future work.

%%% Local Variables:
%%% mode: latex
%%% TeX-master: "../main"
%%% End:

 % !TeX root = ../main.tex

\section{Experiments on COMPAS Data}
\label{sec:experiments}
We test our methodology on the Broward County data made publicly available by ProPublica \cite{angwin2016machine}. This data set contains the recidivism risk decile scores given by the COMPAS tool, 2-year recidivism outcomes, and a number of demographic and crime-related variables on individuals who were scored in 2013 and 2014. We restrict our attention to the subset of defendants whose race is recorded as African-American or Caucasian. These will form the two groups with respect to which we wish to examine different fairness criteria. 
After applying the same data pre-processing and filtering as reported in the ProPublica analysis, we are left with a data set on n = 5278 individuals, of
whom 3175 are African-American and 2103 are Caucasian. \\

\noindent 
Indeed, it has been shown that the COMPAS scoring mechanism is an approximately calibrated soft classifier with 10 possible outcomes. We note here that the distribution of the COMPAS scores differs significantly across the two groups. In particular, the scores for Caucasians are more evenly distributed as opposed to the skewed distribution seen with African-Americans. 

\subsection{Thresholding with Deferrals}
We first ran our two-threshold post-processing mechanism (Section~\ref{sec:threshold-deferrals}) and obtained a binary decision algorithm with deferrals which equalizes both PPV and NPV across Caucasians and African-Americans (See Figure \ref{fig:compas-thresholds}). For simplicity we avoid using randomization for members within a particular decile score and instead settle for approximate equalization of PPV and NPV. We observe that the percent of deferrals in total is smaller than 20\% of the decisions to be made which shows that a fairly large number of the defendants can be classified without having to defer to a downstream decision maker.

\begin{figure}[htp]
	\centering
    \subfloat{\includegraphics[width=0.4\hsize]{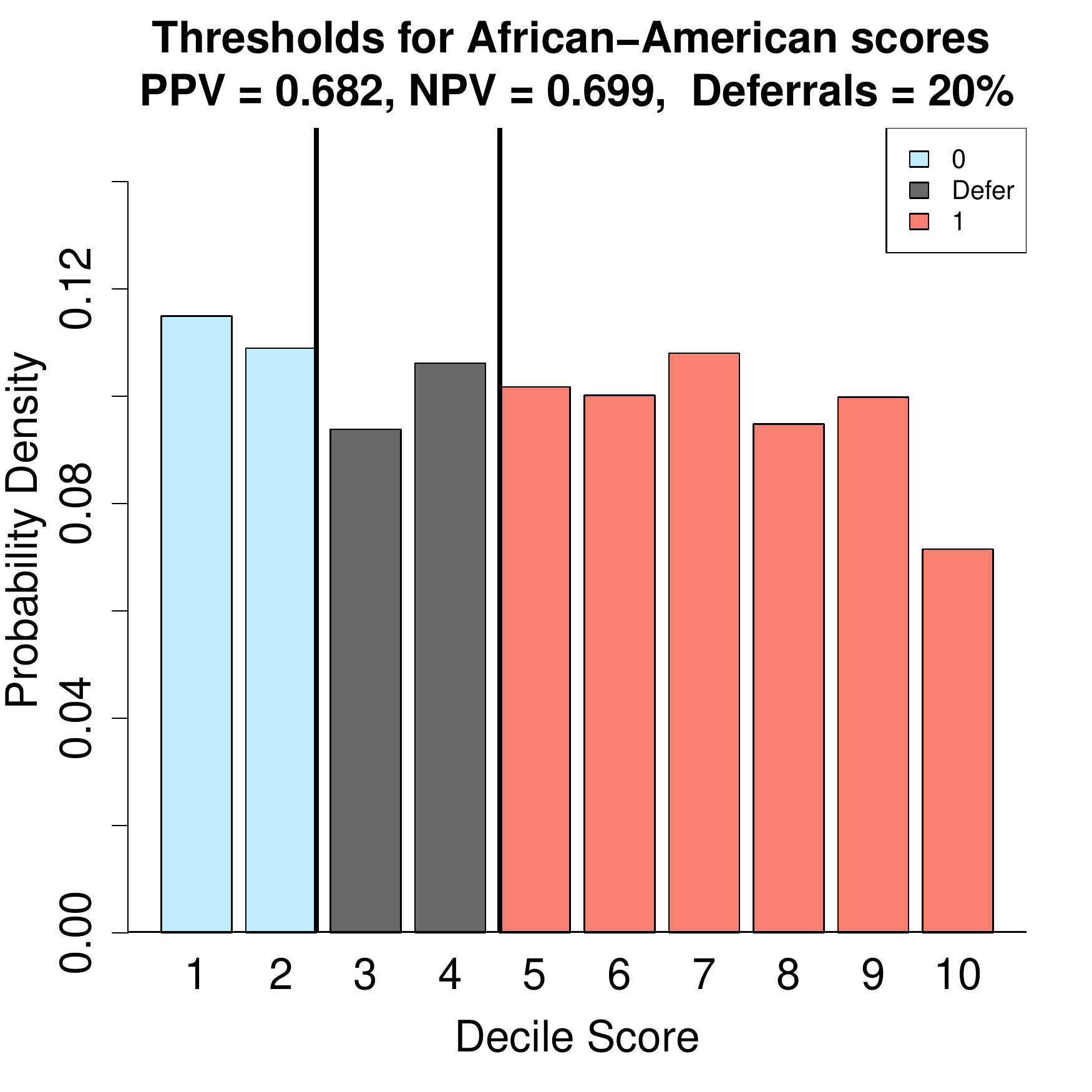}}
	\hspace*{2cm}
    \subfloat{\includegraphics[width=0.4\hsize]{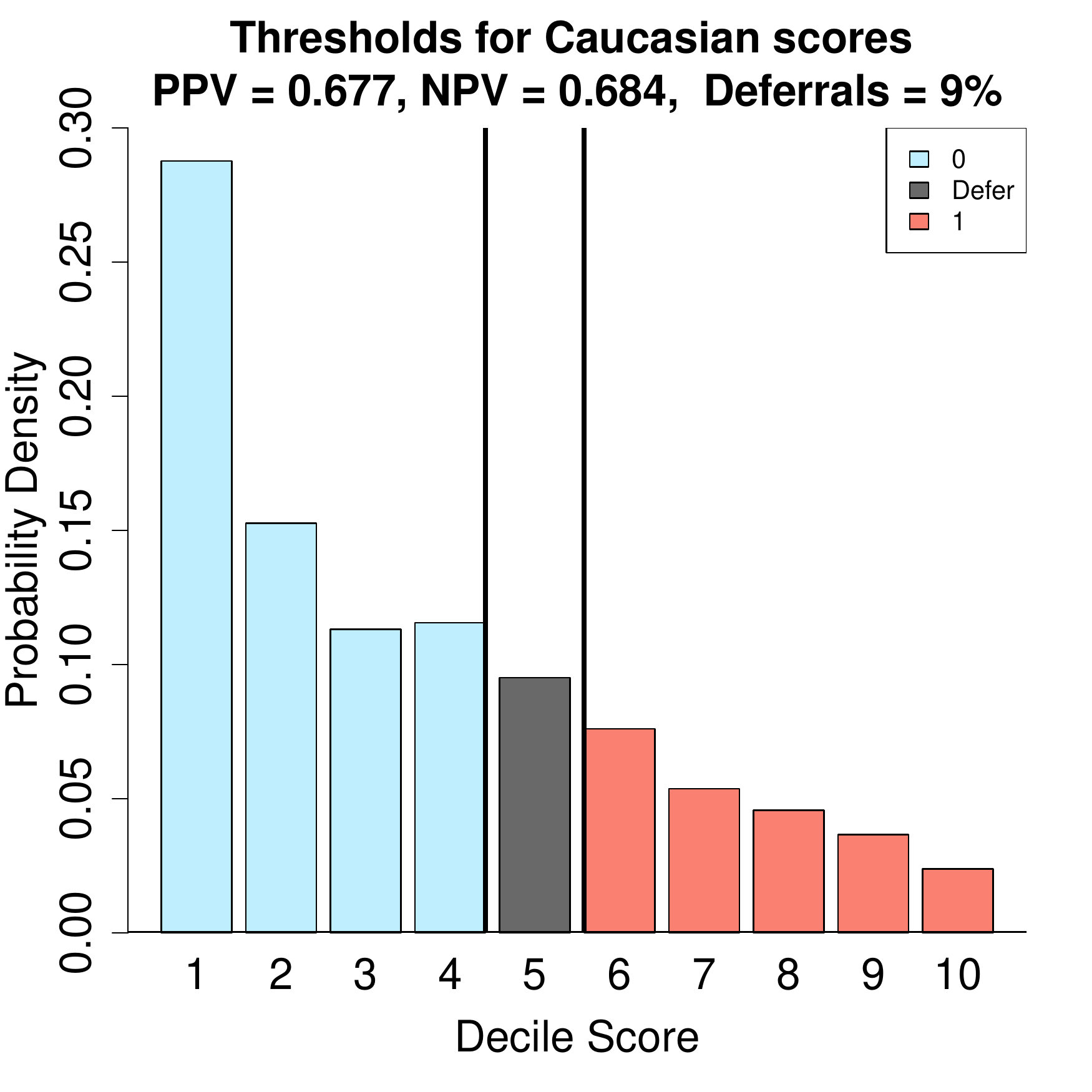}}
    \caption{Two thresholds are applied to each AP for the COMPAS data from 2016, (approximately) equalizing PPV and NPV.  In the left plot we show the thresholds for the African American group, and in the right plot, we show the thresholds for the Caucasian group.}
    \label{fig:compas-thresholds}
\end{figure}

The thresholds suffer from the issue highlighted in Example~\ref{ex:social-unsat}, demonstrating that blindly equalizing PPV and NPV using thresholds can be problematic. Namely, the resulting deferring hard classifier is much stricter for African-Americans than for Caucasians. 

Next we look at our post-processing mechanisms to equalize all four quantities PPV, NPR, FPR, and NPR using deferrals (Section~\ref{sec:docs-with-deferrals}). As was noted earlier in the paper, equalizing the APs of the two groups post-deferral achieves the goal of equalizing all four of the above quantities. We implement two methods for doing so.   
\subsection{Converting one AP into Another}
In the first method, decisions are deferred only on one group so as to convert its AP into that of the other group. First, we consider deferring only on Caucasians to convert their AP into that of African-Americans (Figure~\ref{fig:compas-white-to-black}); next, decisions are deferred only for African Americans (Figure~\ref{fig:compas-black-to-white}).
\begin{figure*}[htp]
	\centering
    \begin{minipage}{.5\textwidth}
    	\centering
  		\includegraphics[width=0.8\linewidth]{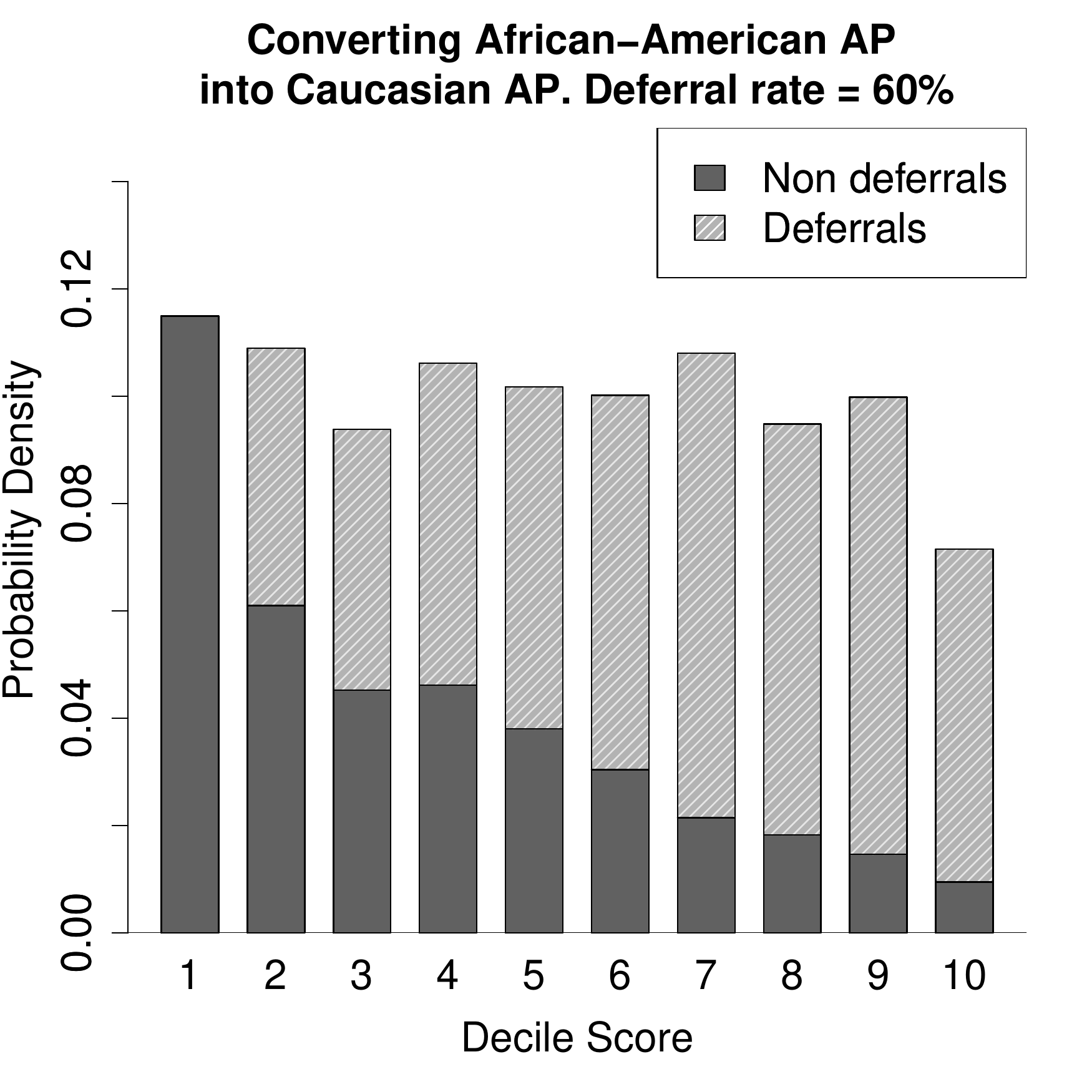}
    	\caption{Our conditional AP equalization method applied to COMPAS data from 2016. We use deferrals to create a conditional AP for African Americans that matches the AP for Caucasians.}
    	\label{fig:compas-black-to-white}
	\end{minipage}%
    \hspace*{1cm}
    \begin{minipage}{.5\textwidth}
    	\centering
        \includegraphics[width=0.8\linewidth]{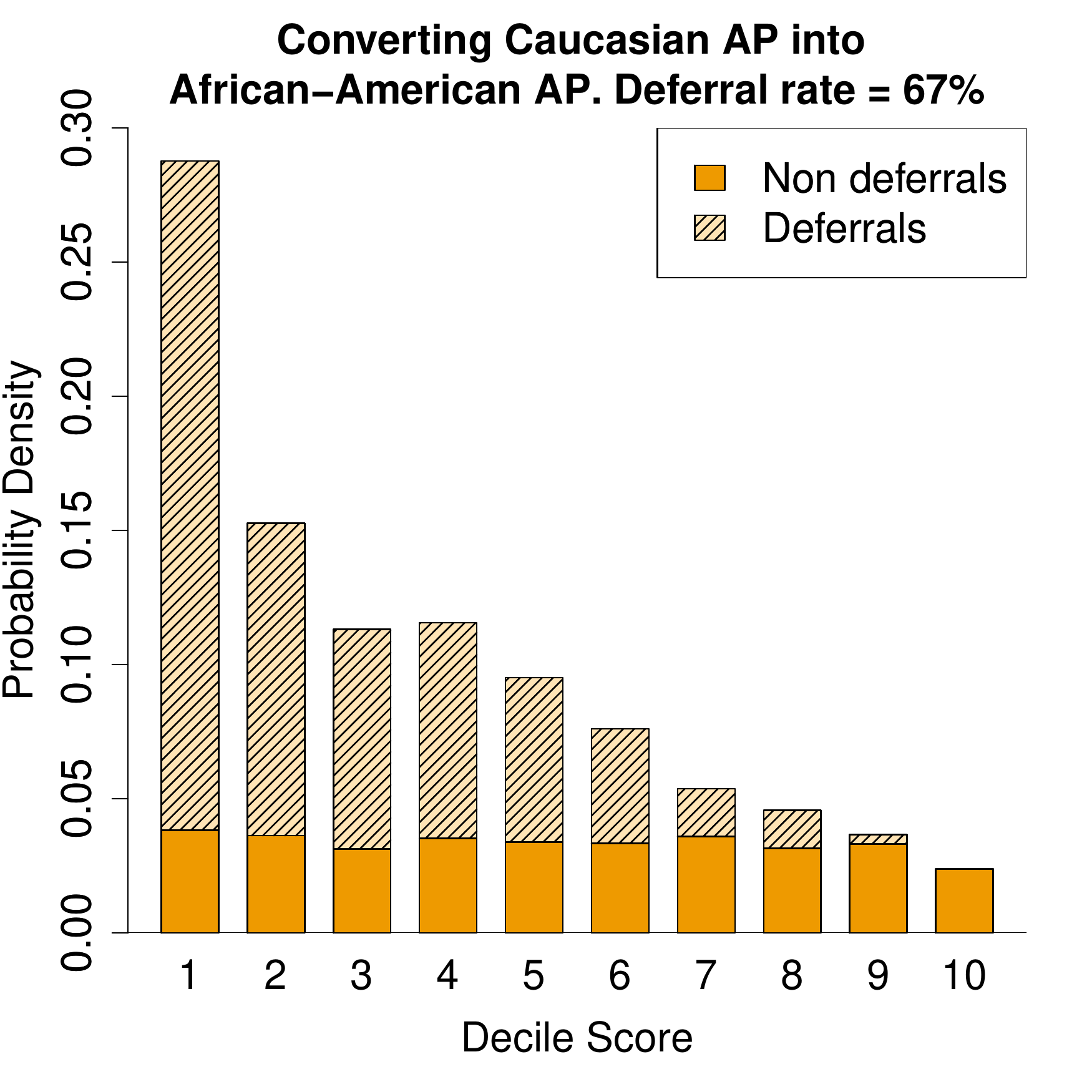}
  		\caption{We create a conditional AP for Caucasians using the AP for African Americans. Notice the difference in the rates and distributions of deferrals between the two methods.}
  		\label{fig:compas-white-to-black}
	\end{minipage}
\end{figure*}

\iffalse
\begin{figure}[h]
  \includegraphics[width=0.5\hsize]{graphs/black_to_white.pdf}
    \caption{Our conditional AP equalization method applied to COMPAS data from 2016. We use deferrals to create a conditional AP for African Americans that matches the AP for Caucasians.}
    \label{fig:compas-black-to-white}
\end{figure}
\begin{figure}[htp]
  \includegraphics[width=0.5\hsize]{graphs/white_to_black.pdf}
  \caption{Our conditional AP equalization method applied to COMPAS data from 2016. We create a conditional AP for Caucasians using the AP for African Americans. Notice the difference in the rates and distributions of deferrals between the two methods.}
  \label{fig:compas-white-to-black}
\end{figure}
\fi

\subsection{Equalizing APs}
Alternately we have a second method where decisions are deferred for an equal fraction of Caucasians and of African Americans (Figure~\ref{fig:min-pdf}). This fraction is precisely equal to the statistical  (\emph{total variation}) distance between the distributions of scores produced by the soft classifier on the two groups.
\begin{figure}[htp]
	\centering
    \includegraphics[width=0.5\hsize]{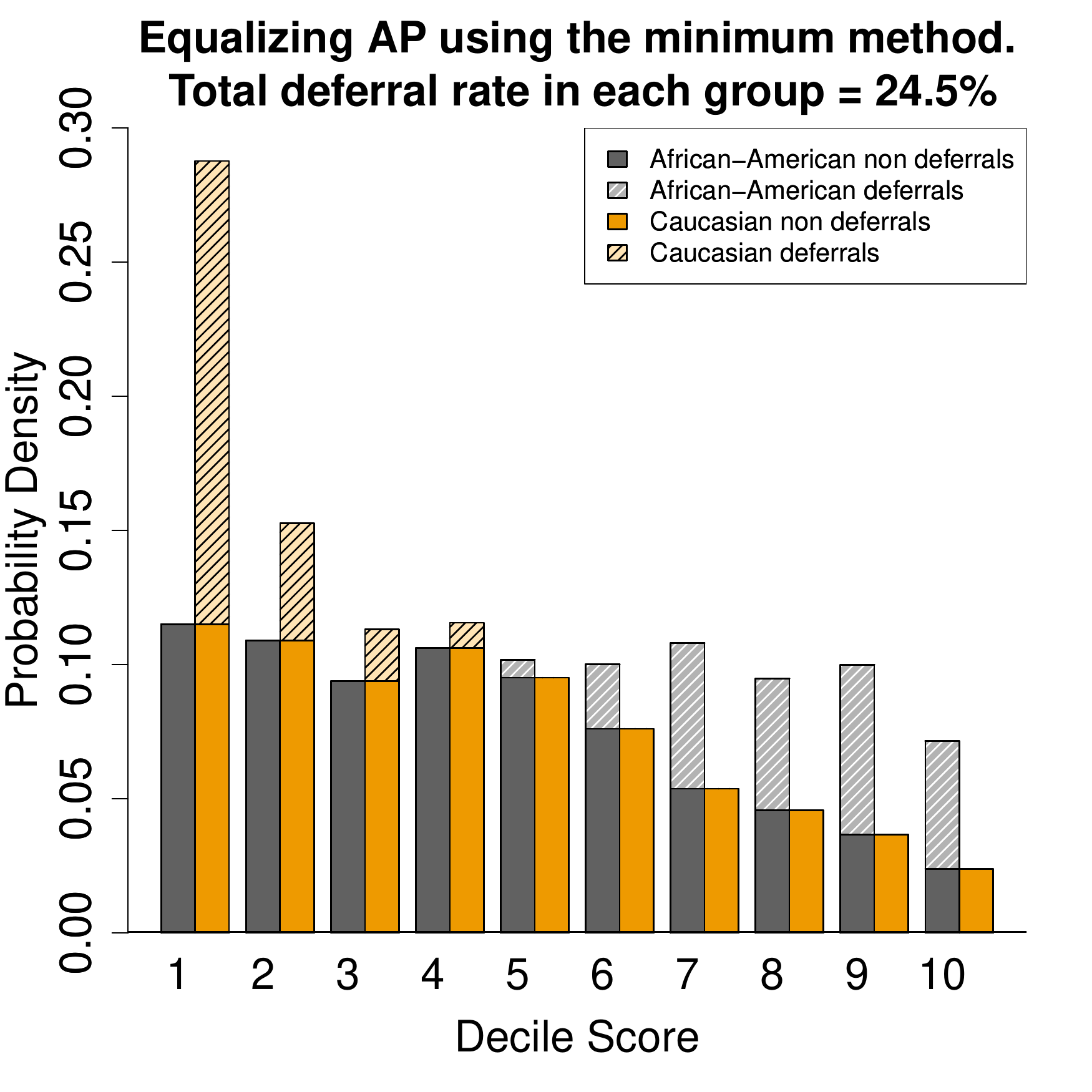}
    \caption{A version of our conditional AP equalization method applied to COMPAS data from 2016.  The AP for African Americans and the AP for Caucasians are converted into a conditional AP that has the same distribution as the pointwise minimum of the two APs.  Notice that the total deferral rate is equalized between the two groups (this is equal to precisely the total variation distance between the two APs), but the distribution of deferrals across scores is not.}
    \label{fig:min-pdf}
\end{figure}

We observe several striking phenomena on the COMPAS data set. First, using the method of deferring only on African-Americans we defer on roughly 36\% of the total decisions. 
This number goes down to roughly 25\% when we defer only on Caucasians. This seems to suggest as a general heuristic to try and use deferrals on the group with smaller size.
The total deferral fraction is also roughly 25\% when we defer on an equal fraction of Caucasians and African-Americans.

Second, for all three  methods that equalize the score distributions,  deferrals  happen  more on the ``extremes'', namely on individuals with respect to which the classifier had relatively high confidence (either close to 0 or close to 1). This stands in sharp contrast to how the two-threshold method (Figure \ref{fig:compas-thresholds}) distributes its deferrals---they occur only in the middle of the distribution (namely for elements for which the classifier is ``unsure''). More work is needed to better understand the full space of deferral strategies.

 \section{Models of Deferring}
\label{sec:discussion}

Whether or not a classifier is thought of as promoting fairness depends on the context; this is true for both deferring and non-deferring classifiers. In addition to the myriad considerations present for non-deferring classifiers, deferring classifiers and downstream decision makers introduce some additional axes for consideration. 
    \paragraph{Cost to the individual:} Even though it is not intended to be a final decision, a deferral may impose burdensome costs to an individual being classified. It may mean that a defendant remains in jail while additional hearings are scheduled, that invasive and expensive medical tests are ordered, or that continued investigation engenders social stigma. These costs may not be borne equally by all individuals, and may depend on their group membership, their true type, or other factors. For example, a delay in granting a loan to a applicant may overly burden poorer applicants, even those very likely to repay. 
    \paragraph{Cost to the decider:} Allowing  deferrals  might make the decision process more cost-effective: Given that in most cases making a determination is cheap, one may now invest more in the deferred cases.  For instance, a team of trained moderators might be hired  to manually review content on which an automated content filter defers, or an expensive investigation might be required to adjudicate insurance claims that are not cut-and-dry. 
    \paragraph{Accuracy of downstream decision} One reason to defer is to introduce a delay that will allow for a more accurate decision. Thus the usefulness of allowing a classifier to defer depends on the accuracy of the downstream decision maker. Additional medical tests might allow for highly accurate diagnoses.  But a judge deciding bail will be prone to a variety of errors and biases. 
\paragraph{``Fairness'' of downstream decision (and of composed classifier)} Similar to the above, the fairness of the downstream decision maker (however one wants to interpret that) will impact our interpretation of the deferring classifier. 
%Ran: added on procedural aspects)
Here one should take into account also the "procedural" aspect of the two-step evaluation; here it is important that the downstream  classifier will be deemed as "more fair"  and  "more knowledgeable" than the first stage. 
%If the downstream decision maker is generally viewed as biased or capricious, then perhaps deferrals should be avoided when possible. Ultimately, it is the fairness of the whole decision making system that is important. However, it is not clear that applying traditional, non-deferring fairness criteria to the composed classifier suffices to analyze the system; indeed, the temporal aspect of deferrals illustrates how a system of a deferring classifier and a downstream decision maker differs from a one-shot decision-maker. 
Exploring fairness criteria for systems of deferring classifiers and downstream decision-makers, e.g. as done in~\cite{bower} did for non-deferring classifiers, is an interesting direction for future work.
    \paragraph{Frequency of decisions} In many settings, the deferring classifier is a fast, automated test (e.g., automated risk assessment) while the downstream decision maker is a slow, manual process (e.g., parole board). However, we anticipate situations in which there may be repeated deferring classifiers chained together which comprise the complete decision making pipeline. For example, a doctor might have a sequence of diagnostic tests at her disposal as needed, or a bank might allow many rounds of appeal for loan applications, but with lengthy delays. 
Some applications might even permit hundreds or thousands of near-continuous deferring classifiers. As an example, consider a live video streaming platform that passively monitors streams for inappropriate content in real time. The automated passive monitor might decide the content is inappropriate, and shut it down; appropriate, and continue passive monitoring; or suspicious (by deferring), and begin active monitoring by devoting more computing resources or bringing in a human moderator.

\subsection{Technical implications of deferral model}

The contextual considerations discussed above directly impact the appropriate application of a deferring classifier and its goals. An obvious goal is to minimize the overall rate of deferrals while maintaining the best possible FPR, FNR, PPV, and NPV for the classifier conditioned on not deferring, and without considering the distribution of deferrals. However, one might desire very different properties from the distribution of deferrals in different contexts. The deferrals may be distributed differently among individuals with different true type, group membership, or soft-classifier scores, while the burden imposed by deferrals and errors may differ greatly between different populations. 

In a medical diagnosis scenario, a false negative (i.e., failing to diagnose a disease) may have serious consequences, and deferring to run additional non-invasive and inexpensive additional tests may be generally acceptable. On the other hand, an insurance provider may prefer to minimize expensive investigations by paying out more false claims.

The context may also affect the way one defines the deferral analogues of \FPR and \FNR. While calibration, \PPV, and \NPV
apply directly to deferring classifiers, it is not clear how best to generalize the definitions of error rates.
For example, consider false positive rate: by Definition~\ref{defn:four-measures}, the false positive rate of a non-deferring hard classifier $\hard$ for a group $g$ is $\FPR_{\hard,g} = \Pr[\hard(X_g) = \pT \mid \ty(X_g) = \tF]$.

The approach we take in Section~\ref{sec:deferrals} is to condition on not deferring (Definition~\ref{defn:conditional-measures}). A deferring classifier $\hard$ that output $\pT$ on half of true negative instances (within a $g$) would have
conditional false positive rate as low as 0.5 (if it never output $\punt$ on true negatives) or as high as $1$ (if it
never output $\pF$ on true negatives). The conditional false positive rate is agnostic towards the downstream decision maker. It codifies no value judgements as to whether a deferral is desirable or undesirable as an individual nor whether deferrals ultimately result in accurate or inaccurate decisions. This is, itself, a value judgement.

A second approach is to leave the original definition unchanged. The same deferring hard classifier as above would have unconditional false positive rate 0.5. This would be true regardless of whether $\hard$ output $\pF$ or $\punt$ on the other half of true negative instances.
We call this the \emph{unconditional false positive rate}.
The unconditional false positive rate effectively categorizes deferrals as correct outputs. This may be appropriate if the downstream decision maker has very high accuracy. If, for example, a doctor orders an additional, more accurate diagnostic test in response to a deferral, the unconditional false positive rate might be appropriate.

Finally, a third approach is to base our measure of inaccuracy on true negatives instead of false positives, a reverse of the above.

Just as in the case of non-deferring classifiers, the relationships among these contrasting group statistics, their meaningfulness in different settings, and their application in different settings are not well understood and deserve further study.

\bibliographystyle{alpha}
\bibliography{sample}

\newpage
\appendix
\section{Results for NPV and Continuous, Full Support APs}
\subsection{Results for Negative Predictive Value (NPV)}
\label{sec:appendix-npv}

In Section~\ref{sec:postprocess-limits}, we proved limitations on the ability of post-processors to equalize PPV given a distribution on calibrated scores with finite support. For completeness, in this section, we give the statements of the analogous limitations for equalizing NPV and for continuous probability density functions with full support $[0,1]$. 

We start with the analogous statement of Proposition~\ref{prop:info-theory-impossibility-3} for NPV instead of PPV.
\begin{proposition}
    \label{prop:info-theory-impossibility-npv}
Fix two disjoint groups $g_1$ and $g_2$ with respective base rates $\BR_1$ and $\BR_2$ such that $\BR_1 \neq \BR_2$. Then there exists a soft-valued classifier $\soft$ that is groupwise calibrated, but for which there is no post-processor $\post: [0,1] \times \G \to \{0,1\}$ such that $\post \circ \soft$ equalizes $\NPV$, unless $\Pr[\post(BR_i, g_i)=0]=0$ for $i=1$ or $2$.
\end{proposition}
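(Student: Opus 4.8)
The plan is to mirror, almost verbatim, the proof of Proposition~\ref{prop:info-theory-impossibility-3}, swapping the characterization of $\PPV$ for the characterization of $\NPV$ that is already packaged alongside it in Proposition~\ref{prop:ppv-as-cond-exp}. Concretely, I would take the soft classifier $\soft$ that outputs the constant score $\BR_1$ on every instance of $g_1$ and the constant score $\BR_2$ on every instance of $g_2$. This $\soft$ is groupwise calibrated essentially by definition: the only score occurring in group $g_i$ is $\BR_i$, and the fraction of true-$\tT$ instances among those receiving that score is, by the definition of the base rate (Definition~\ref{defn:base-rate}), exactly $\BR_i$.

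Next I would check that $\NPV_{\hard,g_i}$ is well-defined on both groups, where $\hard = \post\circ\soft$. The hypothesis is that $\Pr[\post(\BR_i,g_i)=\pF] > 0$ for $i=1,2$; since $\soft(X_{g_i})$ equals $\BR_i$ with probability one, this is precisely $\Pr[\hard(X_{g_i}) = \pF] > 0$, so the conditional probability defining $\NPV_{\hard,g_i}$ makes sense.

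Then the heart of the argument: apply Proposition~\ref{prop:ppv-as-cond-exp} to obtain
\[
\NPV_{\hard,g_i} \;=\; 1 - \E[\soft(X_{g_i}) \mid \hard(X_{g_i}) = \pF].
\]
Because $\soft(X_{g_i})$ is the constant $\BR_i$, conditioning on any event of positive probability leaves its expectation equal to $\BR_i$, so $\NPV_{\hard,g_i} = 1 - \BR_i$. Since $\BR_1 \neq \BR_2$, we get $1 - \BR_1 \neq 1 - \BR_2$, hence no post-processor $\post$ can equalize $\NPV$ across $g_1$ and $g_2$ (unless it makes one of the two NPVs undefined, i.e.\ $\Pr[\post(\BR_i,g_i)=\pF] = 0$ for some $i$).

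I do not anticipate a genuine obstacle here: the statement is the exact mirror image of the $\PPV$ version, and Proposition~\ref{prop:ppv-as-cond-exp} already supplies the $\NPV$ identity needed. The only points requiring a little care are stating the calibration check cleanly for a degenerate (single-atom) accuracy profile and flagging the well-definedness condition, both of which are straightforward.
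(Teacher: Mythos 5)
Your proof is correct and matches the paper's: the paper also uses the constant-score classifier $\soft(x)=\BR_i$ on $g_i$ and observes that the fraction of true negatives among predicted negatives in $g_i$ is forced to be $1-\BR_i$, so the differing base rates preclude equal NPV. Your invocation of the NPV identity from Proposition~\ref{prop:ppv-as-cond-exp} and the well-definedness check are exactly the intended details.
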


The nontriviality condition ensures that the NPV is well-defined on both groups (which can be compared to the nontriviality condition in Proposition~\ref{prop:info-theory-impossibility-3}, which ensures that the PPV is well-defined on both groups). The proof is essentially identical to the proof of Proposition~\ref{prop:info-theory-impossibility-3}: the fraction of predicted 0's in group $g_i$ that are true $0$'s is $1 - \BR_i$, as the post-processor has no other information by which to make its decision, and hence the NPVs are unequal due to the differing base rates.

We now proceed to give the NPV analogs of our results on threshold post-processors in Section~\ref{sec:postprocess-limits}. We start with the analogous statement for Proposition~\ref{prop:trivial-threshold-equalizes} - that there is a class of simple group-blind threshold post-processors that equalizes the NPV across groups.
\begin{proposition}
\label{prop:trivial-threshold-equalizes-npv}
For every nice groupwise calibrated soft classifier $\soft$ and for every group-blind threshold post-processor $\post_{(\tau, \threshRand)}$ such that $\thresh(g) = \min(\supp(\pdf_g))$ and $\threshRand(g) <1$ for all $g$, the composed classifer $\post \circ \soft$ equalizes $\NPV$s across all groups. 
\end{proposition}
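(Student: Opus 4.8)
The approach is to mirror the proof of Proposition~\ref{prop:trivial-threshold-equalizes}, exchanging the roles of the maximum and minimum of the support and of PPV and NPV. First I would observe that, for a threshold post-processor $\post_{(\thresh,\threshRand)}$ with $\thresh(g) = \min(\supp(\pdf_g))$, the only score that is ever mapped to $\pF$ is exactly $\thresh(g)$ itself: there are no scores strictly below $\thresh(g)$ in $\supp(\pdf_g)$, so the ``$\score < \thresh(g)$'' branch is vacuous, and at $\score = \thresh(g)$ the output is $\pF$ with probability $1 - \threshRand(g)$. Note also that niceness makes $\min(\supp(\pdf_g))$ the same value across all groups, so taking $\thresh$ to be this common constant is consistent with group-blindness.

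Next I would check that $\NPV_{\hard,g}$ is well-defined, i.e.\ that $\hard = \post_{(\thresh,\threshRand)}\circ\soft$ is non-trivial on the negative side for every $g$. Since $\thresh(g)\in\supp(\pdf_g)$ we have $\Pr[\soft(X_g) = \thresh(g)] > 0$, and since $\threshRand(g) < 1$ the score $\thresh(g)$ is mapped to $\pF$ with positive probability; hence $\Pr[\hard(X_g) = \pF] > 0$, so NPV is well-defined. (This is precisely where the hypothesis $\threshRand(g) < 1$ is used.)

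Then I would apply Proposition~\ref{prop:ppv-as-cond-exp}, which gives $\NPV_{\hard,g} = 1 - \E[\soft(X_g)\mid \hard(X_g)=\pF]$. Because the only score sent to $\pF$ is $\thresh(g) = \min(\supp(\pdf_g))$, the conditional distribution of $\soft(X_g)$ given $\hard(X_g)=\pF$ is a point mass at $\min(\supp(\pdf_g))$, so $\E[\soft(X_g)\mid \hard(X_g)=\pF] = \min(\supp(\pdf_g))$ and therefore $\NPV_{\hard,g} = 1 - \min(\supp(\pdf_g))$. Finally, by the niceness assumption $\supp(\pdf_g)$ is identical for all $g \in \G$, so $\min(\supp(\pdf_g))$ does not depend on $g$, and hence $\NPV_{\hard,g}$ is the same for all groups.

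I do not expect any real obstacle here: the statement is a direct dual of Proposition~\ref{prop:trivial-threshold-equalizes}, and the only point requiring a moment of care is tracking why $\threshRand(g) < 1$ (rather than $\threshRand(g) > 0$, as in the PPV version) is the correct non-triviality condition on the negative side — namely, we need the minimum score to land on $\pF$ with positive probability, whereas for PPV we needed the maximum score to land on $\pT$ with positive probability.
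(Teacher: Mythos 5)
Your proof is correct and matches the paper's argument, which likewise observes that with $\thresh(g)=\min(\supp(\pdf_g))$ the only score ever mapped to $\pF$ is the minimum of the support (with positive probability since $\threshRand(g)<1$), so $\NPV_{\hard,g}=1-\min(\supp(\pdf_g))$, which niceness makes identical across groups. Your explicit check of non-triviality on the negative side is a detail the paper leaves implicit, but the route is the same.
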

The existence of the threshold post-processors in Proposition~\ref{prop:trivial-threshold-equalizes-npv} follows from the assumed finiteness of the range of the soft classifier. In the case where the range of the soft classifier is infinite, such post-processors may not exist (as there may be no minimum element of the support). The proof is once again analogous to the proof of Proposition~\ref{prop:trivial-threshold-equalizes}: these classifiers only ever output 0 on the minimum element of the support of $\pdf_g$, and hence the NPV is simply 1 minus the smallest element of the support for each group.

However, much like the case for PPV, other group-blind threshold post-processors cannot possibly equalize NPV. 
\begin{proposition}\label{prop:single-threshold-cannot-equalize-npv}
There exists a groupwise-calibrated soft classifier with a nice AP for which no non-trivial group blind threshold post-processor, other than the ones in Proposition~\ref{prop:trivial-threshold-equalizes}, can equalize NPV across groups.
\end{proposition}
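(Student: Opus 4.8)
The plan is to mirror the proof of Proposition~\ref{prop:single-threshold-cannot-equalize} under the reflection $\score \mapsto 1-\score$, which swaps the roles of $\PPV$ and $\NPV$: "AP proportional to $\score$ / largest score / scores $\ge \thresh$" becomes "AP proportional to a decreasing function / smallest score / scores $\le \thresh$". First I would record the $\NPV$ analogue of Lemma~\ref{lem:ppv-thresh}. For a group-blind threshold post-processor $\post_{(\thresh,\threshRand)}$, let $\pdf_{g,\leq(\thresh,\threshRand)}$ denote the conditional AP of $\soft(X_g)$ given that $\post_{(\thresh,\threshRand)}$ outputs $\pF$, i.e.\ the restriction of $\pdf_g$ to scores $< \thresh$ together with the score $\thresh$ reweighted by $1-\threshRand$, renormalized. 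By Proposition~\ref{prop:ppv-as-cond-exp}, $\NPV_{\post_{(\thresh,\threshRand)}\circ\soft,\,g} = 1 - \E_{\score \sam \pdf_{g,\leq(\thresh,\threshRand)}}[\score]$, so if $\pdf_{g_1,\leq(\thresh,\threshRand)}$ \emph{strictly} stochastically dominates $\pdf_{g_2,\leq(\thresh,\threshRand)}$ then $\NPV$ on $g_1$ is strictly smaller than $\NPV$ on $g_2$, hence they are not equalized.

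Next I would exhibit the counterexample. Fix groups $g_1,g_2$ and a finite $S \subset [0,1]$ with $|S| = 10$, and take $\soft$ groupwise calibrated with $\supp(\pdf_{g_1}) = \supp(\pdf_{g_2}) = S$ (so $\pdf$ is nice), where $\pdf_{g_1}$ is uniform on $S$ and $\pdf_{g_2}(\score) \propto a - \score$ for a constant $a$ large enough that $a - \score > 0$ on all of $S$ (this shift is exactly what lets the AP be strictly decreasing while retaining full support, preserving niceness). Now take any non-trivial group-blind threshold post-processor $\post_{(\thresh,\threshRand)}$ and let $T \subseteq S$ be its predicted-$\pF$ region (with $\thresh$ possibly carrying fractional weight $1-\threshRand$). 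If $T = \{\min(S)\}$ — which, as I check below, happens precisely when $\post_{(\thresh,\threshRand)}$ is behaviourally equivalent to one of the trivial $\NPV$-equalizers of Proposition~\ref{prop:trivial-threshold-equalizes-npv} — then it is excluded by hypothesis. Otherwise $T$ contains at least two distinct scores, and since the likelihood ratio $\pdf_{g_2}(\score)/\pdf_{g_1}(\score) \propto a-\score$ is strictly decreasing, $\pdf_{g_1,\leq(\thresh,\threshRand)}$ strictly stochastically dominates $\pdf_{g_2,\leq(\thresh,\threshRand)}$; by the $\NPV$ analogue of Lemma~\ref{lem:ppv-thresh} the two $\NPV$s differ. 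As in the $\PPV$ proof, the case $\threshRand \in (0,1)$ reduces to the deterministic cases $\threshRand \in \{0,1\}$ by writing each conditional AP as the same convex combination of the two extreme conditional APs, so it suffices to argue strict dominance for deterministic thresholds.

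The step I expect to be the only real subtlety is the bookkeeping around these boundary post-processors: I need to verify that every non-trivial group-blind threshold post-processor whose predicted-$\pF$ conditional support is a single point is behaviourally equivalent to an excluded classifier from Proposition~\ref{prop:trivial-threshold-equalizes-npv}. For a threshold rule the predicted-$\pF$ set is an initial segment of $S$ plus fractional weight at $\thresh$, so $|T| = 1$ forces either $\thresh = \min(S)$ with $\threshRand < 1$ (a trivial $\NPV$-equalizer outright), or $\thresh$ equal to the second-smallest element of $S$ with $\threshRand = 1$, and $\post_{(\text{2nd smallest},\,1)}$ coincides as a function on $S$ with $\post_{(\min(S),\,0)}$; in particular the single point is always $\min(S)$, so there is no "middle point" case. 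Every remaining non-trivial group-blind threshold post-processor then genuinely has $|T| \ge 2$ and genuinely exhibits the strict monotone-likelihood-ratio dominance, completing the argument. The rest parallels Section~\ref{sec:thresholds} and is routine; the $\NPV$ monotonicity statement (Claim~\ref{claim:npv-monotonicity}) is not needed here.
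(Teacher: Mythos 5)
Your proof is correct, but it takes a genuinely different route from the paper's. The paper disposes of this proposition in two sentences: it reuses the \emph{same} counterexample as Proposition~\ref{prop:single-threshold-cannot-equalize} (uniform vs.\ $\pdf_{g_2}(\score)\propto\score$), arranges for equal base rates, and then argues that unequal \PPV{} forces unequal \NPV{} via the identity $\BR_g = \PPV_{g,\hard}\cdot\theta_g + (1-\NPV_{g,\hard})(1-\theta_g)$ of Claim~\ref{claim:baserate-convex-comb}. You instead build a mirror-image counterexample ($\pdf_{g_1}$ uniform, $\pdf_{g_2}(\score)\propto a-\score$) and run the \PPV{} argument reflected through $\score\mapsto 1-\score$: an \NPV{} analogue of Lemma~\ref{lem:ppv-thresh} plus strict stochastic dominance of the predicted-$\pF$ conditional APs, obtained from the strictly monotone likelihood ratio on the initial segment $T$. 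Your approach is longer but more self-contained, and it is arguably more robust: the paper's sketch implicitly assumes that equal base rates plus unequal \PPV{} imply unequal \NPV{}, which does not follow from Claim~\ref{claim:baserate-convex-comb} alone since the mixing weights $\theta_{g_1},\theta_{g_2}$ may differ across groups (one can cook up $\theta'$ so that both sides of the convex combination balance with equal \NPV), so a fully rigorous version of the paper's route needs an additional argument that your direct construction simply avoids. Your bookkeeping of the boundary cases ($|T|=1$ occurring only for the behaviourally trivial min-threshold post-processors of Proposition~\ref{prop:trivial-threshold-equalizes-npv}, which is clearly the intended exclusion despite the statement's literal reference to Proposition~\ref{prop:trivial-threshold-equalizes}) is careful and correct, and your MLR observation in fact gives a cleaner justification of strict dominance than the convex-combination step borrowed from the \PPV{} proof.
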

The example of the groupwise-calibrated soft classifier is the exact same one that shows the statement for PPV. Indeed, we can see this in the following way. We can make the example in the proof of Proposition~\ref{prop:single-threshold-cannot-equalize-npv} have equal base rates across groups, in which case it follows from Proposition~\ref{prop:single-threshold-cannot-equalize} due to Claim~\ref{claim:baserate-convex-comb}.

When we turn to threshold post-processors that are not group-blind, we again get analogous results for NPV. 

\begin{proposition}\label{prop:2-thresh-equal-npv}
    Let $\G$ be a set of groups.  For any soft classifier $\soft$ with a nice AP $\pdf$ such that
$\soft$ is groupwise-calibrated over $\G$ and
$|\supp(\pdf_{g})| \geq 2$ for all $g \in \G$,
there exists a (\emph{non-group-blind}), non-trivial threshold post-processor $\post_{(\thresh, \threshRand)}$ that is \emph{not} one of the group blind post-processors in Proposition~\ref{prop:trivial-threshold-equalizes}, such that the hard classifier
$\hard = \post_{(\thresh, \threshRand)} \circ \soft$ equalizes NPV across $\G$.

This holds even if we require that the NPV of all the groups is an arbitrary value in $(\score_{min},
\min_i \BR_{g_i})$, where $\min_i \BR_{g_i}$ is the minimum base rate among the groups and $\score_{min}$ is the
minimum score in the support of $\pdf_{g_i}$.\footnote{For the case where the support of $\pdf_{g_i}$ is infinite, $\score_{min}$ should be the infimum of scores.}
\end{proposition}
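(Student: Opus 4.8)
The plan is to mirror the proof of Proposition~\ref{prop:2-thresh-equal-ppv}, with PPV replaced by NPV, the ``top'' threshold replaced by the ``bottom'' one, and the direction of every monotonicity reversed accordingly. The starting point is the second identity of Proposition~\ref{prop:ppv-as-cond-exp}, $\NPV_{\hard,g}=1-\E[\soft(X_g)\mid\hard(X_g)=\pF]$. Thus, writing $e(g):=\E[\soft(X_g)\mid\hard(X_g)=\pF]$, equalizing NPV across $\G$ at a value $v$ is exactly the same as equalizing $e(g)$ across $\G$ at $1-v$, and the target interval $(\score_{min},\min_i\BR_{g_i})$ named in the statement is precisely the interval of values of $e$ that will turn out to be simultaneously achievable. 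I would run the whole argument in terms of $e(g)$, using the monotonicity of $e(g)$ in the threshold --- which is Claim~\ref{claim:npv-monotonicity} read through the identity $e(g)=1-\NPV_{\hard,g}$ --- and the continuity of $e(g)$ in total variation distance --- the NPV analogue of Claim~\ref{claim:ppv-continuity}, asserted as Claim~\ref{claim:npv-continuity}; both hold because $e(g)$ is the expectation of a $[0,1]$-valued random variable distributed as the conditional AP of the scores that $\post$ sends to $\pF$.

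Fix a nice, groupwise calibrated $\soft$ and a target value $e^\star\in(\score_{min},\min_i\BR_{g_i})$ (equivalently, a target $\NPV=1-e^\star$). For a single group $g_j$ I would first bracket the achievable values of $e(g_j)$ under single-threshold post-processors. The extreme value $e(g_j)=\score_{min}$ is attained by the post-processor that sends only the minimum score of $\supp(\pdf_{g_j})$ to $\pF$ (threshold $\score_{min}$, with at-threshold $\pF$-probability strictly below $1$). At the other extreme, $e(g_j)=\BR_{g_j}$ is attained by the trivial post-processor sending every score to $\pF$, using Proposition~\ref{prop:ppv-as-cond-exp} and $\E[\soft(X_{g_j})]=\BR_{g_j}$ from Proposition~\ref{prop:baserate-calibrated}; these are, respectively, the minimum and maximum of $e(g_j)$ over threshold post-processors. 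By niceness, $\score_{min}$ is common to all groups, so the intersection over $g\in\G$ of the achievable intervals $[\score_{min},\BR_g]$ is exactly $[\score_{min},\min_i\BR_{g_i}]$, which is nondegenerate because $|\supp(\pdf_g)|\ge 2$ forces $\BR_g>\score_{min}$ for every $g$; hence $e^\star$ lies in its interior.

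Next, starting the threshold of $g_j$ at $\score_{min}$ and applying the analogue of the continuous deformation from the proof of Proposition~\ref{prop:2-thresh-equal-ppv} --- now \emph{enlarging} the set of scores mapped to $\pF$ one support point at a time --- I would use Claim~\ref{claim:npv-continuity} to argue that $e(g_j)$ varies continuously and Claim~\ref{claim:npv-monotonicity} to argue that it varies monotonically, sweeping from $\score_{min}$ up to $\BR_{g_j}$. Since $\score_{min}<e^\star<\min_i\BR_{g_i}\le\BR_{g_j}$, the Intermediate Value Theorem supplies a setting of $(\thresh(g_j),\threshRand(g_j))$ with $e(g_j)=e^\star$. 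Doing this independently for every $g_j$ yields a single, necessarily non-group-blind, threshold post-processor $\post_{(\thresh,\threshRand)}$ with $e(g)=e^\star$ for all $g$, i.e.\ $\hard=\post_{(\thresh,\threshRand)}\circ\soft$ equalizes NPV. Finally, because $|\supp(\pdf_g)|\ge 2$ and $e^\star$ is a strict interior point, the threshold is not pinned at $\score_{min}$ for every group, so the post-processor is non-trivial and is not one of the group-blind post-processors of Proposition~\ref{prop:trivial-threshold-equalizes} / Proposition~\ref{prop:trivial-threshold-equalizes-npv}.

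The step I expect to be the main obstacle is keeping the orientation of the monotonicity bookkeeping consistent: in the PPV proof it is \emph{raising} the threshold (and raising the at-threshold $\pT$-probability $\threshRand$) that increases the quantity of interest, whereas here it is \emph{enlarging the $\pF$-set} that increases $e(g)$, and $\threshRand$ --- defined as the probability of outputting $\pT$ --- acts on $e(g)$ in the opposite sense; so every inequality imported from Claim~\ref{claim:ppv-monotonicity} and Lemma~\ref{lem:ppv-thresh} must be reflected, and the continuity claim invoked for the $\pF$-conditional AP rather than the $\pT$-conditional one. A secondary, routine check is that the strict interiority of $e^\star$ together with $|\supp(\pdf_g)|\ge 2$ rules out coincidence with the trivial post-processors, and handling the footnote's infinite-support case by replacing $\score_{min}$ with the infimum of the support (where the boundary post-processor may not exist, but the continuity/IVT argument for interior $e^\star$ still goes through). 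The deformation-plus-IVT machinery is otherwise identical to the PPV proof and introduces no new ideas.
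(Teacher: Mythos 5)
Your proposal is correct and follows essentially the same route as the paper, which proves this proposition by the same continuity-plus-monotonicity-plus-IVT sweep used for Proposition~\ref{prop:2-thresh-equal-ppv}, sliding each group's threshold between the always-$\pF$ extreme (where $\E[\soft(X_g)\mid\hard(X_g)=\pF]=\BR_g$) and the minimal-$\pF$ extreme (where it equals $\score_{min}$). Your reformulation in terms of $e(g)=1-\NPV_{\hard,g}$ is a sensible way to keep the orientation straight and correctly matches the interval $(\score_{min},\min_i\BR_{g_i})$ appearing in the statement; the only nit is that attaining $e(g)=\score_{min}$ requires the at-threshold $\pF$-probability to be strictly \emph{positive} (i.e.\ $\threshRand<1$), not ``strictly below $1$.''
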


Again the proof of this follows via the same kind of continuity argument 
as we used to prove Proposition~\ref{prop:2-thresh-equal-ppv}.
By definition, each group has base rate at least $\min_i \BR_{g_i}$, and so if the post-processor 
always says 0 on some group $i$, then $\NPV_{g_i} = \BR_{g_i} \geq \min_i \BR_{g_i}$. Hence, for each group, 
we can start with the always-0 classifier and slide down the threshold until the desired NPV is reached. 

Finally, the socially unsatisfying example also generalizes to NPV. A privileged group will have higher scores than a disadvantaged group in general, and hence if they are given the same threshold, the NPV will be lower on the privileged group. To rectify this, the threshold for the disadvantaged group will have to moved higher, to decrease the \NPV. But then, the disadvantaged group is being subjected to a harsher standard.

Finally we note that Claim~\ref{claim:ppv-continuity} also can be written with NPV instead of PPV, where the proof follows from using
the characterization of NPV given in Proposition~\ref{prop:ppv-as-cond-exp}:
\begin{claim}[Continuity of NPV]
    \label{claim:npv-continuity}
    Fix a soft classifier $\soft$ and a corresponding AP $\pdf$, as well as a group $g$.  Let $\post_1$ and
    $\post_2$ be two post-processing algorithms.  Let $\pdf_{g,\post_1}$ be the expected conditional AP that results
    from starting with the AP $\pdf_g$ over scores in group $g$ and then conditioning on the scores that $\post_1$
    sends to 0, and define $\pdf_{g,\post_2}$ similarly.  If $d_{TV}(\pdf_{g, \post_1}, \pdf_{g, \post_2}) < \epsilon$,
    then $|\NPV_{g, \post_1 \circ \soft} - \NPV_{g, \post_2 \circ \soft}| < O(\epsilon)$.
\end{claim}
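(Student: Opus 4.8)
The plan is to transcribe the proof of Claim~\ref{claim:ppv-continuity} almost verbatim, substituting the NPV characterization of Proposition~\ref{prop:ppv-as-cond-exp} for the PPV one. Recall that that proposition gives, for a hard classifier $\hard = \post \circ \soft$ non-trivial on $g$, the identity $\NPV_{g, \post \circ \soft} = 1 - \E[\soft(X_g) \mid \hard(X_g) = \pF]$. Interpreting this through the conditional AP $\pdf_{g,\post}$ --- the PMF of $\soft(X_g)$ conditioned on $\post$ sending the score to $\pF$ --- it reads $\NPV_{g, \post \circ \soft} = 1 - \E_{\score \sam \pdf_{g,\post}}[\score]$. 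As in Claim~\ref{claim:ppv-continuity}, I would implicitly assume that both $\post_1$ and $\post_2$ send some score to $\pF$ with positive probability, so that $\pdf_{g,\post_1}$, $\pdf_{g,\post_2}$, and hence both NPVs, are well defined.

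With this in hand, the additive constants cancel:
\[
\NPV_{g, \post_1 \circ \soft} - \NPV_{g, \post_2 \circ \soft} = \E_{\score \sam \pdf_{g,\post_2}}[\score] - \E_{\score \sam \pdf_{g,\post_1}}[\score],
\]
so it suffices to show that two distributions that are $\eps$-close in total variation and supported on $[0,1]$ have expectations differing by at most $\eps$. This is exactly the elementary estimate carried out at the end of the proof of Claim~\ref{claim:ppv-continuity}: assuming without loss of generality that $\pdf_{g,\post_1}$ has the larger expectation and setting $S = \{\score \in \supp(\pdf_g) : \pdf_{g,\post_1}(\score) > \pdf_{g,\post_2}(\score)\}$, one bounds $\E_{\pdf_{g,\post_1}}[\score] - \E_{\pdf_{g,\post_2}}[\score] = \sum_{\score \in S}\score\,(\pdf_{g,\post_1}(\score) - \pdf_{g,\post_2}(\score)) \le \sum_{\score \in S}(\pdf_{g,\post_1}(\score) - \pdf_{g,\post_2}(\score)) = d_{TV}(\pdf_{g,\post_1}, \pdf_{g,\post_2}) < \eps$, where the inequality uses $\score \in [0,1]$.

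There is no real obstacle here; the substance is entirely in Proposition~\ref{prop:ppv-as-cond-exp}, whose validity already relies on the conditional independence of the post-processor's output from the true type given the score and group (Fact~\ref{fact:conditional-independence}) together with groupwise calibration of $\soft$. The only points requiring a word of care are the implicit non-triviality assumption needed to make the conditional APs and the NPVs well defined, and the (minor) observation that the $O(\eps)$ in the statement can in fact be taken to be $\eps$ itself.
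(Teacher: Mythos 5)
Your proof is correct and follows exactly the route the paper intends: the paper omits this proof, noting only that it resembles the proof of Claim~\ref{claim:ppv-continuity} and follows from the NPV characterization in Proposition~\ref{prop:ppv-as-cond-exp}, which is precisely what you carried out. The cancellation of the additive constant and the TV-to-expectation bound on $[0,1]$ are the same elementary steps, so nothing further is needed.
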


We omit the proof of Claim \ref{claim:npv-continuity}, which resembles the proof of Claim \ref{claim:ppv-continuity}
and follows from Proposition \ref{prop:ppv-as-cond-exp}.

\subsection{Results for Continuous, Full Support APs}
\label{sec:appendix-cont}
In this section, we briefly address how to extend our results on thresholds in Section~\ref{sec:postprocess-limits} to the setting where every AP $\pdf$ is a continuous probability distribution with $\supp(\pdf_g) = [0,1]$ for all $g \in G$ - that is, the support equals the entire interval $[0,1]$ for each group $g \in G$. Note that this automatically makes $\pdf$ a ``nice'' AP, and hence rules out the general counterexample we came up with in Proposition~\ref{prop:info-theory-impossibility-3}. For the purposes of this section, call such an AP that 1) $\pdf_g$ is a continuous probability density function for every $g \in G$ and 2) $\supp(\pdf_g) = [0,1]$ for all $g \in G$ a \emph{very nice} AP. As the name suggests, we can extend the remaining results in Section~\ref{sec:postprocess-limits} to the setting of very nice AP. We give the results for equalizing PPV as done in Section~\ref{sec:postprocess-limits}: extending these results to equalizing NPV in the setting of continuous, full support AP can be accomplished by combining the statements here with the modifications described in Section~\ref{sec:appendix-npv}.

First, we note that a threshold post-processor can be described much more easily in the continuous setting than in the setting where the AP has finite support on each group. Indeed, in the setting where $\pdf_g$ is a continuous density function for all $g \in G$, the post-processor can truly be a threshold, with no question of how to classify the score that is exactly equal to the threshold $\tau$. This is because the score $\tau$ has probability 0 under the density $\pdf_g$.

Hence, Proposition~\ref{prop:trivial-threshold-equalizes} has no true analog in this setting. This follows because the maximum element of the support in this case is 1, and a threshold at $\tau=1$ sends every score (outside of a measure 0 set) to 0. 

This allows us to strengthen Proposition~\ref{prop:single-threshold-cannot-equalize} accordingly.
\begin{proposition}\label{prop:single-threshold-cannot-equalize-cont}
There exists a groupwise-calibrated soft classifier with a very nice AP for which no non-trivial group blind threshold post-processor can equalize PPV across groups.
\end{proposition}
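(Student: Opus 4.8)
The plan is to run the same argument as in Proposition~\ref{prop:single-threshold-cannot-equalize}, but with continuous densities in place of the discrete ``uniform versus strictly increasing'' accuracy profiles. First I would fix two groups $g_1,g_2$ and a calibrated soft classifier $\soft$ whose score on $g_1$ has density $\pdf_{g_1}(\score)=1$ on $[0,1]$ and whose score on $g_2$ has density $\pdf_{g_2}(\score)=2\score$ on $[0,1]$, with the true type of an instance carrying score $\score$ being $1$ with probability exactly $\score$ (so that $\soft$ is groupwise calibrated). Both densities are continuous with support exactly $[0,1]$, so this AP is very nice; the induced base rates are $1/2$ and $2/3$, but they will play no role.

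Next I would observe that, since the densities are continuous, a group-blind threshold post-processor is effectively determined by a single threshold $\tau\in[0,1]$ --- the event $\{\soft(X_g)=\tau\}$ has probability zero, so the tie-breaking parameter $\threshRand$ is irrelevant --- and the composed classifier $\hard=\post_{(\tau,\threshRand)}\circ\soft$ is non-trivial on both groups exactly when $\tau\in(0,1)$: for $\tau=0$ it always outputs $\pT$ and for $\tau=1$ it always outputs $\pF$, up to measure-zero sets. This is the one place where the continuous regime is genuinely stronger than the discrete one of Proposition~\ref{prop:single-threshold-cannot-equalize}: there is no analogue of the ``threshold at $\max(\supp(\pdf_g))$'' post-processors of Proposition~\ref{prop:trivial-threshold-equalizes}, because here $\max(\supp(\pdf_g))=1$ and the corresponding threshold is trivial. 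Hence it suffices to rule out every $\tau\in(0,1)$, and no caveat of the form ``other than the ones in Proposition~\ref{prop:trivial-threshold-equalizes}'' is needed.

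Finally, fixing $\tau\in(0,1)$, I would compute both predictive values directly from the characterization in Proposition~\ref{prop:ppv-as-cond-exp}, which (as noted in the excerpt) extends to continuously-distributed scores. Conditioned on $\soft(X_{g_1})>\tau$ the score on $g_1$ is uniform on $[\tau,1]$, giving $\PPV_{\hard,g_1}=\frac{1+\tau}{2}$; conditioned on $\soft(X_{g_2})>\tau$ the score on $g_2$ has density proportional to $\score$ on $[\tau,1]$, giving $\PPV_{\hard,g_2}=\frac{\int_\tau^1\score^2\,d\score}{\int_\tau^1\score\,d\score}=\frac{2(1+\tau+\tau^2)}{3(1+\tau)}$. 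A one-line computation then shows $\PPV_{\hard,g_2}-\PPV_{\hard,g_1}=\frac{(1-\tau)^2}{6(1+\tau)}>0$ for all $\tau\in(0,1)$, so PPV is never equalized. (Equivalently, one can note that the density proportional to $\score$ on $[\tau,1]$ strictly stochastically dominates the uniform density on $[\tau,1]$ and invoke a continuous analogue of Lemma~\ref{lem:ppv-thresh}.)

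The only parts that need care are the bookkeeping around the measure-zero tie point and the justification that dropping the ``other than\ldots'' caveat is legitimate in the continuous setting; the algebra is routine, and I do not anticipate a real obstacle.
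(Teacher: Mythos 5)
Your proposal is correct and follows essentially the same route as the paper, which proves this proposition by pointing to the natural continuous generalization of the uniform-versus-increasing-density example and the stochastic dominance argument from Proposition~\ref{prop:single-threshold-cannot-equalize}; your explicit choice of densities $1$ and $2\score$ and the closed-form computation $\PPV_{\hard,g_2}-\PPV_{\hard,g_1}=\frac{(1-\tau)^2}{6(1+\tau)}>0$ just makes that argument concrete. Your handling of the measure-zero tie point and the disappearance of the trivial max-support thresholds also matches the paper's remarks on the continuous setting.
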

This follows from a nearly identical stochastic domination argument to the one used for Proposition~\ref{prop:single-threshold-cannot-equalize} - in fact, the natural generalization of the distributions given for the proof of Proposition~\ref{prop:single-threshold-cannot-equalize} to the continuous and full-support setting can be used in this proof.

A non group blind threshold can still always equalize PPV for very nice APs. 

\begin{proposition}
\label{prop:equalize-ppv-cont}
    Let $\G$ be a set of groups.  For any soft classifier $\soft$ with a very nice AP $\pdf$ such that $\soft$ is
    groupwise-calibrated over $\G$, then
    there exists a non group blind, non-trivial threshold post-processor such that the hard classifier
 equalizes $PPV$ across $\G$.

    This holds even if we require that the PPV of all the groups is equal to an arbitrary value in $(\max_i \BR_{g_i},1)$, where $\max_i \BR_{g_i}$ is the maximum base rate among the groups $g_i \in \G$.
\end{proposition}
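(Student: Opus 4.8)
The plan is to adapt the continuity-plus-intermediate-value argument used to prove Proposition~\ref{prop:2-thresh-equal-ppv} to the continuous, full-support setting, where the bookkeeping is in fact cleaner: since $\pdf_g$ has no atoms, a threshold post-processor can be taken to be a genuine deterministic threshold (the parameter $\threshRand$ is irrelevant, as the score lands on the threshold with probability $0$). Fix a target value $v \in (\max_i \BR_{g_i}, 1)$. I will show that for each group $g \in \G$ there is a threshold $\thresh(g) \in (0,1)$ whose induced hard classifier has $\PPV$ exactly $v$ on $g$; collecting these per-group thresholds into a threshold function $\thresh$ then gives a threshold post-processor $\post_\thresh$ with $\PPV_{\post_\thresh \circ \soft, g} = v$ for every $g$, which is what we want.

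For a single group $g$, use the continuous analogue of Proposition~\ref{prop:ppv-as-cond-exp} to write, for $\thresh \in [0,1)$,
\[
\PPV_g(\thresh) \;:=\; \E[\soft(X_g) \mid \soft(X_g) > \thresh] \;=\; \frac{\int_\thresh^1 s\,\pdf_g(s)\,ds}{\int_\thresh^1 \pdf_g(s)\,ds}.
\]
Because $\pdf_g$ is a continuous density with $\supp(\pdf_g) = [0,1]$, for every $\thresh < 1$ the interval $(\thresh,1)$ meets the support, so the denominator $\int_\thresh^1 \pdf_g(s)\,ds$ is strictly positive; moreover both numerator and denominator are continuous in $\thresh$, so $\thresh \mapsto \PPV_g(\thresh)$ is continuous on $[0,1)$ (and non-decreasing, by the continuous analogue of Claim~\ref{claim:ppv-monotonicity}, though continuity and the endpoint values below already suffice). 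At $\thresh = 0$ we get $\PPV_g(0) = \E[\soft(X_g)] = \BR_g$ by Proposition~\ref{prop:baserate-calibrated}. As $\thresh \to 1^-$, the conditioning event forces $\soft(X_g) > \thresh$, so $\thresh < \PPV_g(\thresh) \le 1$ and hence $\PPV_g(\thresh) \to 1$. Since $v > \max_i \BR_{g_i} \ge \BR_g$, the intermediate value theorem yields $\thresh(g) \in (0,1)$ with $\PPV_g(\thresh(g)) = v$; it is strictly interior because $\PPV_g(0) = \BR_g < v$ and $\PPV_g(\thresh) \to 1 > v$. For this $\thresh(g)$ both $\Pr[\soft(X_g) > \thresh(g)] > 0$ and $\Pr[\soft(X_g) < \thresh(g)] > 0$ (again by full support), so $\hard = \post_\thresh \circ \soft$ is non-trivial on $g$ and $\PPV_{\hard,g}$ is well defined. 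Whenever the groups have distinct base rates these thresholds are forced to differ, so $\post_\thresh$ is genuinely non-group-blind; if some base rates coincide one can perturb $\thresh$ on one group and re-solve on the others exactly as above.

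The one place that needs a little care is the endpoint $\thresh \to 1^-$, where $\PPV_g(\thresh)$ is a ratio of two quantities that both tend to $0$; this is handled cleanly by the elementary squeeze $\thresh < \PPV_g(\thresh) \le 1$ rather than any delicate limiting estimate. Everything else — continuity of the numerator and denominator, strict positivity of the denominator, the value $\PPV_g(0) = \BR_g$, and the appeal to the intermediate value theorem — is routine once the continuous full-support hypothesis is in hand. (The analogous statement for NPV, and the restriction of the common value to $(\score_{min}, \min_i \BR_{g_i})$, follows by combining this argument with the symmetry described in Appendix~\ref{sec:appendix-npv}.)
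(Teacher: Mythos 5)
Your proof is correct and follows exactly the route the paper intends: the paper's own argument for this proposition is a one-line appeal to ``sliding the threshold continuously down from 1'' via the continuity approach of Proposition~\ref{prop:2-thresh-equal-ppv}, and your write-up simply makes that continuity-plus-intermediate-value argument explicit in the continuous setting (including the squeeze $\thresh < \PPV_g(\thresh) \le 1$ that handles the limit as $\thresh \to 1^-$, and the observation that full support makes the classifier non-trivial at any interior threshold). The only loose end is your parenthetical claim that distinct base rates force the per-group thresholds to differ --- that is not literally true in general --- but non-group-blindness is treated no more carefully in the paper itself, so this is not a substantive gap.
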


This follows from the same continuity approach as the proof of Proposition~\ref{prop:2-thresh-equal-ppv}, by sliding the threshold continuously down from 1 until the PPV reaches the desired value $v \in (\max_i \BR_{g_i},1)$. 

The socially unsatisfying example generalizes in the natural way - Example~\ref{ex:social-unsat} consists of one group having a monotonically increasing PMF and another one having a monotonically decreasing PMF. We can skip the discretization step in the definition of these PMFs and have them be continuous PDFs, and the example still goes through.

We cannot equalize PPV and NPV simultaneously in general, just like in the finite support case (Proposition~\ref{prop:ppv-npv-impossibility}).
\begin{proposition}
Fix groups $g_1$ and $g_2$. There exists a soft classifier $\soft$ with a very nice AP~$\pdf$ such that no threshold post-processor can simultaneously equalize PPV and NPV between groups $g_1$ and $g_2$.
\end{proposition}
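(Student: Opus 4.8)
The plan is to transport the finite-support construction behind Proposition~\ref{prop:ppv-npv-impossibility} to the continuous, full-support regime. Concretely, I would take $\pdf_{g_1}$ to be the uniform density on $[0,1]$ and $\pdf_{g_2}(\score) = -a(\score - 1/2)^2 + b$ for constants $a,b > 0$ chosen so that $\pdf_{g_2}$ is a strictly positive valid density on $[0,1]$ (for instance $a = 3$, $b = 5/4$). Both densities are continuous with support $[0,1]$, so $\pdf$ is \emph{very nice}; both are symmetric about $1/2$, so by Proposition~\ref{prop:baserate-calibrated} both groups have base rate exactly $1/2$. A key simplification in the continuous setting---already exploited in Proposition~\ref{prop:single-threshold-cannot-equalize-cont}---is that a threshold post-processor is literally a pair of thresholds $\thresh(g_1),\thresh(g_2)$, since the tie-breaking probabilities $\threshRand$ act only on a measure-zero set; I would additionally require $\thresh(g_i) \in (0,1)$ so that the composed classifier is non-trivial on each group (otherwise $\PPV$ or $\NPV$ is undefined).

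The argument then follows the case analysis in the proof of Proposition~\ref{prop:ppv-npv-impossibility}, split on the position of $\thresh(g_1)$ relative to $1/2$. Suppose $\thresh(g_1) \ge 1/2$. For any common threshold $\thresh \ge 1/2$, the likelihood ratio $\pdf_{g_2}(\score)/\pdf_{g_1}(\score) = -a(\score - 1/2)^2 + b$ is decreasing on $[1/2,1]$, so the conditional AP of $\pdf_{g_2}$ restricted to scores above $\thresh$ is strictly stochastically dominated by that of $\pdf_{g_1}$, and Lemma~\ref{lem:ppv-thresh} gives $\PPV_{g_2} < \PPV_{g_1}$ at the common threshold. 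By monotonicity of $\PPV$ (Claim~\ref{claim:ppv-monotonicity}), any threshold post-processor equalizing $\PPV$ must then use $\thresh(g_2) > \thresh(g_1)$. It remains to show this choice necessarily leaves $\NPV_{g_1} \ne \NPV_{g_2}$, which I would obtain exactly as in the finite-support proof: writing each base rate as the convex combination $\BR_g = \theta_g \PPV_g + (1 - \theta_g)(1 - \NPV_g)$ afforded by Claim~\ref{claim:baserate-convex-comb}, and using $\BR_{g_1} = \BR_{g_2} = 1/2$, the inequality $\PPV_{g_2} < \PPV_{g_1}$ at a common threshold $\ge 1/2$ propagates to $\NPV_{g_2} < \NPV_{g_1}$; monotonicity of $\NPV$ (Claim~\ref{claim:npv-monotonicity}) then extends this to the actual thresholds once $\thresh(g_2)$ has been pushed right to equalize $\PPV$. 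The complementary case $\thresh(g_1) < 1/2$ is the mirror image, with $\PPV$ and $\NPV$---and ``right'' and ``left''---interchanged, exactly as in Proposition~\ref{prop:ppv-npv-impossibility}; and wherever that proof invokes a continuity or monotonicity property, the continuous analogues (Claims~\ref{claim:ppv-continuity} and~\ref{claim:npv-continuity}, together with Proposition~\ref{prop:equalize-ppv-cont}) are available verbatim.

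The main obstacle is the $\NPV$ half of the argument. Monotone likelihood ratio only orders the two conditional APs on the upper tail $\{\score > \thresh\}$ when $\thresh \ge 1/2$ (and on the lower tail when $\thresh \le 1/2$); on the other tail they need not be stochastically comparable, so one cannot read off the $\NPV$ comparison directly from stochastic dominance. Instead it must be routed through the equal-base-rate convex-combination identity, and the delicate point is verifying that this identity, together with the monotonicity of $\PPV$ and $\NPV$, genuinely pins down the sign of $\NPV_{g_1} - \NPV_{g_2}$ after $\thresh(g_2)$ is chosen to match $\PPV$ (this is the step that leans on the symmetry of $\pdf_{g_2}$, and hence on having forced the base rates to coincide). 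This is precisely the bookkeeping already carried out in the proof of Proposition~\ref{prop:ppv-npv-impossibility}; since every ingredient it uses---Lemma~\ref{lem:ppv-thresh}, Claims~\ref{claim:ppv-monotonicity}, \ref{claim:baserate-convex-comb}, and~\ref{claim:npv-continuity}---holds for continuous APs without change, the only genuinely new observation needed here is that dropping the discretization from the original construction still yields valid (indeed very nice) APs, exactly as in the passage from Proposition~\ref{prop:single-threshold-cannot-equalize} to Proposition~\ref{prop:single-threshold-cannot-equalize-cont}.
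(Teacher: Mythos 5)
Your proposal is correct and takes essentially the same route as the paper: the paper's own proof of this proposition is precisely the observation that the finite-support construction and case analysis of Proposition~\ref{prop:ppv-npv-impossibility} (uniform density versus the symmetric parabola, equal base rates, the split on $\thresh(g_1)$ versus $1/2$, Lemma~\ref{lem:ppv-thresh}, and the convex-combination identity of Claim~\ref{claim:baserate-convex-comb}) carry over verbatim once the discretization is dropped. Your version is in fact more detailed than the paper's one-paragraph remark, and your flagged concern about the NPV half of the argument is a gap inherited from the finite-support proof rather than one introduced by your adaptation.
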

The proof we give of Proposition~\ref{prop:ppv-npv-impossibility} for finite support naturally generalizes to this case - in fact, we can simply use the same proof but without discretizing the probability distributions. The necessary lemmas about monotonicity of PPV and being able to express the base rate as a convex combination of PPV and NPV still hold.

It is unsurprising that the result in Section~\ref{sec:deferrals} on using thresholds \emph{with deferrals} to equalize PPV and NPV also goes through for very nice AP.

\begin{proposition}
  Let $\soft$ be groupwise calibrated for the $n$ groups $g_1,\ldots,g_n$, and suppose that $\soft$ has a very nice AP.  Then there exists a nontrivial threshold decision rule rule such that the hard classifier $\hard = \post \circ \soft$
    equalizes PPV and NPV for $\G$. 
 \end{proposition}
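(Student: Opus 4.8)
The plan is to follow the two-step recipe behind Proposition~\ref{prop:2n-thresholds-deferrals} --- choose thresholds equalizing PPV, then choose thresholds equalizing NPV, then combine --- but to exploit the extra room afforded by $\supp(\pdf_g)=[0,1]$ so that the overlap-resolution loop of the finite-support proof becomes unnecessary. First I would record two easy observations specific to the continuous setting. (i) A deferring threshold post-processor is here just a pair $\tau_0(g)\le\tau_1(g)$ of thresholds per group, since the tie-breaking probabilities $\threshRand_0,\threshRand_1$ in Definition~\ref{def:thresh-deferral} are vacuous when $\Pr[\soft(X_g)=\tau]=0$. (ii) Inserting a deferral region $(\tau_0(g),\tau_1(g))$ changes neither PPV nor NPV: the classifier $\hard=\post_{(\tau_0,\tau_1)}\circ\soft$ outputs $\pT$ precisely on $\{\soft(X_g)>\tau_1(g)\}$ and $\pF$ precisely on $\{\soft(X_g)<\tau_0(g)\}$, so by Proposition~\ref{prop:ppv-as-cond-exp} its PPV on $g$ is $\E[\soft(X_g)\mid\soft(X_g)>\tau_1(g)]$ and its NPV on $g$ is $1-\E[\soft(X_g)\mid\soft(X_g)<\tau_0(g)]$, exactly the values of the corresponding non-deferring threshold classifiers.

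Next I would fix convenient common targets. For each group $g$ write $p_g$ and $q_g$ for the PPV and NPV on $g$ of the non-deferring threshold-at-$1/2$ classifier (well defined since full support makes that classifier non-trivial on every group). Because $\pdf_g$ is a density with full support, the conditional law of $\soft(X_g)$ given $\soft(X_g)>1/2$ is not a point mass at $1$, so $p_g<1$; symmetrically $q_g>0$. As there are finitely many groups, both intervals $\bigl(\max\{\max_i\BR_{g_i},\max_g p_g\},\,1\bigr)$ and $\bigl(0,\,\min\{\min_i\BR_{g_i},\min_g q_g\}\bigr)$ are nonempty; I would pick $v_1$ in the first and $v_0$ in the second. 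Proposition~\ref{prop:equalize-ppv-cont} then supplies thresholds $\tau_1(g)$ equalizing PPV to $v_1$ across $\G$, and the NPV counterpart of Proposition~\ref{prop:equalize-ppv-cont} (obtained from Proposition~\ref{prop:2-thresh-equal-npv} by the continuous-setting adaptations of Appendix~\ref{sec:appendix-npv}) supplies thresholds $\tau_0(g)$ equalizing NPV to $v_0$ across $\G$.

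The key step is to argue these thresholds automatically separate. By monotonicity of PPV in the threshold (the continuous analogue of Claim~\ref{claim:ppv-monotonicity}: raising $\tau$ replaces the conditional score distribution by a stochastically larger one), $\tau_1(g)>1/2$ for every $g$ --- otherwise the PPV on $g$ at threshold $\tau_1(g)$ would be at most $p_g<v_1$, contradicting the choice of $\tau_1(g)$. Symmetrically, monotonicity of NPV (Claim~\ref{claim:npv-monotonicity}) forces $\tau_0(g)<1/2$ for every $g$. Hence $\tau_0(g)<\tfrac12<\tau_1(g)$ for all $g$, so $\post_{(\tau_0,\tau_1)}$ is a legitimate deferring threshold post-processor, and by observation (ii) the composed classifier has PPV equal to $v_1$ and NPV equal to $v_0$ on every group, hence equalizes both. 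Non-triviality on each group follows from full support: $\Pr[\soft(X_g)>\tau_1(g)]>0$ and $\Pr[\soft(X_g)<\tau_0(g)]>0$, so both $\pT$ and $\pF$ occur with positive probability on every group.

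The part that actually needs care --- rather than any hard calculation --- is the choice of the targets $v_0,v_1$: they must be taken close enough to $0$ and $1$ that all $2|\G|$ thresholds separate simultaneously, and this is exactly where the ``very nice'' hypothesis is used (it is what gives $p_g<1$ and $q_g>0$). One could alternatively transcribe the finite-support proof of Proposition~\ref{prop:2n-thresholds-deferrals} line by line, using the continuous versions of Claims~\ref{claim:ppv-monotonicity} and~\ref{claim:ppv-continuity} to push overlapping thresholds apart, but in the full-support regime the separation above makes that loop superfluous. (Note this equalizes only PPV and NPV; by Proposition~\ref{prop:ppv-npv-impossibility} even that much is impossible with thresholds alone once deferrals are disallowed, and the four-way equalization of PPV, NPV, cFPR, cFNR instead proceeds through AP-equalization as in Section~\ref{sec:docs-with-deferrals}.)
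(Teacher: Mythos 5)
Your overall architecture is sound and is essentially a fleshed-out version of the paper's (very terse) sketch: the deferral region decouples the two measures, since the PPV of $\post_{(\tau_0,\tau_1)}\circ\soft$ depends only on $\tau_1$ and the NPV only on $\tau_0$ (your observation (ii), which is exactly the paper's remark that one can change the PPV without changing the NPV); one then equalizes each quantity separately via the continuous single-threshold results and only has to ensure $\tau_0(g)\le\tau_1(g)$. Your device of forcing separation by choosing the common targets extreme enough that every upper threshold lands above $1/2$ and every lower threshold below $1/2$ is a clean way to dispose of the overlap issue that the finite-support proof of Proposition~\ref{prop:2n-thresholds-deferrals} handles with an iterative repair loop, and it is more explicit than anything the paper writes for this proposition.

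However, the NPV half of your argument has the monotonicity direction reversed, and as written that step fails. For a threshold-at-$\tau$ classifier, $\NPV = 1-\E[\soft(X_g)\mid\soft(X_g)<\tau]$ is \emph{decreasing} in $\tau$ and ranges over (roughly) $\bigl(1-\BR_g,\,1\bigr)$: it attains its minimum $1-\BR_g$ as $\tau\to 1$ and tends to $1$ as $\tau\to 0$. Hence a target $v_0\in\bigl(0,\min\{\min_i\BR_{g_i},\min_g q_g\}\bigr)$ is in general not an achievable NPV at all (if every base rate is below $1/2$, every NPV exceeds $1/2>v_0$), and even where it is achievable, $v_0<q_g$ forces $\tau_0(g)>1/2$ rather than $\tau_0(g)<1/2$ --- the opposite of what your separation argument needs. (Your remark that ``symmetrically $q_g>0$'' suggests you are implicitly tracking $1-\NPV=\E[\soft(X_g)\mid\soft(X_g)<\tau]$ rather than NPV itself; the same sign slip occurs in the paper's own NPV statements, e.g.\ the literal inequality in part (b) of Claim~\ref{claim:npv-monotonicity} and the range in Proposition~\ref{prop:2-thresh-equal-npv}, so you partly inherited it.) The repair is mechanical: choose $v_0\in\bigl(\max\{1-\min_i\BR_{g_i},\,\max_g q_g\},\,1\bigr)$, which is nonempty and lies in every group's achievable NPV range because $q_g\ge 1-\BR_g$; then $v_0>q_g$ together with the correct monotonicity (NPV decreasing in the threshold) forces $\tau_0(g)<1/2<\tau_1(g)$ for all $g$, and the rest of your argument goes through unchanged.
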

Again, the explanation is very similar to the one for Proposition~\ref{prop:equalize-ppv-cont}. PPV and NPV change continuously when we slide the respective thresholds, and unlike the case without deferrals, we can change the PPV \emph{without} changing the NPV, by keeping the ``0'' threshold still and sliding the ``1'' threshold (and deferring in the middle). Hence, we can simply continuously slide the two thresholds on each group until they reached the desired values.

\end{document}